\theoremstyle{plain}
\newtheorem{theorem}{Theorem}[section]
\newtheorem{proposition}[theorem]{Proposition}
\theoremstyle{definition}
\theoremstyle{remark}
\icmltitlerunning{Peri-LN: Revisiting Normalization Layer in the Transformer Architecture}
\begin{document}

\twocolumn[
\icmltitle{Peri-LN: Revisiting Normalization Layer in the Transformer Architecture}



\icmlsetsymbol{equal}{*}
\icmlsetsymbol{dagger}{$^\dagger$}

\begin{icmlauthorlist}
\icmlauthor{Jeonghoon Kim}{comp,sch}
\icmlauthor{Byeongchan Lee}{sch}
\icmlauthor{Cheonbok Park}{comp,sch}
\icmlauthor{Yeontaek Oh}{comp}
\icmlauthor{Beomjun Kim}{sch}
\icmlauthor{Taehwan Yoo}{comp}
\icmlauthor{Seongjin Shin}{comp}
\icmlauthor{Dongyoon Han}{comp2}
\icmlauthor{Jinwoo Shin}{dagger,sch}
\icmlauthor{Kang Min Yoo}{dagger,comp}
\end{icmlauthorlist}

\icmlaffiliation{comp}{NAVER Cloud}
\icmlaffiliation{comp2}{NAVER AI Lab}
\icmlaffiliation{sch}{Korea Advanced Institute of Science and Technology (KAIST)}

\icmlcorrespondingauthor{Jeonghoon Kim}{jeonghoon.samuel@gmail.com}
\icmlcorrespondingauthor{Jinwoo Shin}{jinwoos@kaist.ac.kr}
\icmlcorrespondingauthor{Kang Min Yoo}{kangmin.yoo@navercorp.com}

\icmlkeywords{normalization, transformers, pre-training, large language models, training stability}

\vskip 0.3in
]



\printAffiliationsAndNotice{$^\dagger$Equal correspondence.}  

\vskip -0.2in
\begin{abstract}
Selecting a layer normalization (LN) strategy that stabilizes training and speeds convergence in Transformers remains difficult, even for today’s large language models (LLM). We present a comprehensive analytical foundation for understanding how different LN strategies influence training dynamics in large-scale Transformers. Until recently, Pre-LN and Post-LN have long dominated practices despite their limitations in large-scale training. However, several open-source models have recently begun silently adopting a third strategy without much explanation. This strategy places normalization layer \emph{peripherally} around sublayers, a design we term \emph{Peri-LN}. While Peri-LN has demonstrated promising performance, its precise mechanisms and benefits remain almost unexplored. Our in-depth analysis delineates the distinct behaviors of LN strategies, showing how each placement shapes activation variance and gradient propagation. To validate our theoretical insight, we conduct extensive experiments on Transformers up to $3.2$B parameters, showing that Peri-LN consistently achieves more balanced variance growth, steadier gradient flow, and convergence stability. Our results suggest that Peri-LN warrants broader consideration for large-scale Transformer architectures, providing renewed insights into the optimal placement of LN.
\end{abstract}
\vskip -0.2in

\section{Introduction}
\label{sec:intro}

Building on a rapidly expanding lineage of Transformer-based large language models, open-source models have shown remarkable impact \citep{chinchilla, guo2025deepseek, hcx}. As the demand for larger and more powerful models grows, various training stabilization techniques have been introduced \citep{tensorprogram5, attentioncollapse, ngpt}. Among these, the choice of where and how to apply layer normalization (LN: LayerNorm or RMSNorm; \citealp{LN, RMSNorm}) critically influences model convergence \citep{onlayer, transformersgetstable, smallproxies}. However, their immense computational requirements have restricted deeper exploration of the underlying Transformer structure. Are we truly employing the optimal LN placement? In practice, fully revealing the results of massive resource investments can be challenging \citep{gemma2}. Despite its importance, there is still no consensus on a single best LN placement strategy. 

Two prominent LN placements have been widely explored. Post-LN \citep{attentionisallyouneed} normalizes the hidden state after adding the sub-layer output to the residual stream (that is, $\mathrm{Norm}(x + \mathrm{Module}(x))$ where $x$ is input hidden state. $\mathrm{Norm}$ is LN). This helps constrain the variance of hidden states but may inadvertently weaken gradient signals, particularly in deeper models \citep{transformersgetstable}. Pre-LN \citep{llama3}, by contrast, normalizes before passing the hidden state to the sub-layer (that is, $x + \mathrm{Module}(\mathrm{Norm}(x))$). While this can enhance gradient propagation, it also admits so-called “massive activations,” where hidden states grow exponentially across layers \citep{massiveactivation}.

Previous studies on deep convolutional neural networks (CNNs) have analyzed the impact of batch normalization on variance changes during the initialization stage of ResNet architectures, demonstrating its relationship to model performance \citep{cnnvariance}. They noted that, in models without normalization, hidden activation growth \emph{at initialization} can be exponential, leading to poor performance and stability. In contrast, in pre-normalized CNNs, the variance of hidden activations was shown to increase linearly as model depth grows. 
In the same vein, \citet{transformersgetstable} reported that, for Transformer architectures as well, the variance in the forward propagation of Transformer-based language models \emph{at initialization} increases linearly with depth.
However, in the context of Transformer architectures, we observed that this variance growth at initialization does not persist as training progresses as shown in Figure \ref{fig:3iter}.
Sections~\ref{sec:ln_in_transformer}, \ref{sec:experiments}, \ref{sec:analysis}, and~\ref{sec:ablation} provide a more detailed discussion of these hidden-state growth patterns.

Beyond these two common strategies, Post-LN and Pre-LN, a third LN placement has quietly emerged in large-scale open-source models: applying LN around the sub-layer, i.e., on both its input and output. Although recent open-source models \citep{gemma3, gemma2, olmo2} have quietly adopted such designs and demonstrated promising performance on a large scale, these efforts often appeared isolated, lacking a conceptual unifying framework or a thorough investigation into their benefits. In this paper, we coin the term \emph{Peri-LN}\footnote{\textit{Peri-} (Prefix) means ``around,'' reflecting that LN encapsulates the entire sub-layer. (e.g. peripherally)} to unify these scattered approaches and highlight an underexplored avenue for stabilizing large-scale Transformer training. 
By dissecting the forward- and backward-pass dynamics of each LN strategy, we clarify how, when, and why they differ, interpreting these distinctions through the lens of training stability.

Accordingly, this paper revisits LN placement in Transformers from both analytical and empirical perspectives. In particular, we:

\begin{enumerate}
    \item Present an in-depth analysis of Post-LN and Pre-LN in large-scale Transformers, examining how variance and gradient properties evolve \emph{beyond initialization}. 
    \item Investigate Peri-LN to understand how normalizing both the inputs and outputs of each module moderates hidden-state behavior during forward and backward propagation, providing a systematic perspective on this underexplored alternative. 
    \item Provide quantitative evidence on how large activation influences training stability, benchmark performance, and model behaviors. 
\end{enumerate}


\begin{figure}[t]
\vskip -0.07in
    \centering
    \includegraphics[width=.47\linewidth,
    trim=0pt 0pt 0pt 12pt,clip]{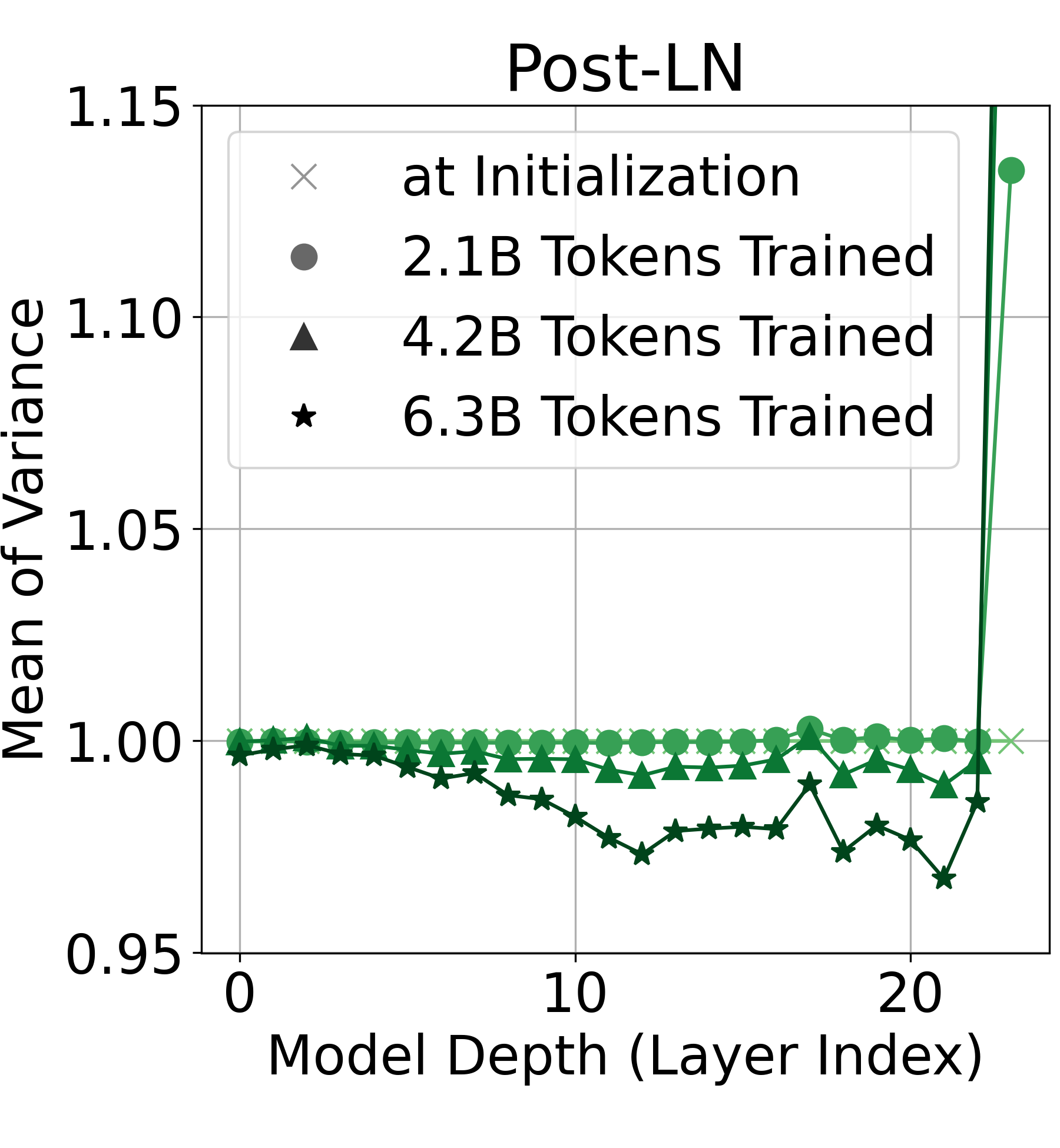} \includegraphics[width=.47\linewidth,
    trim=0pt 0pt 0pt 12pt,clip]{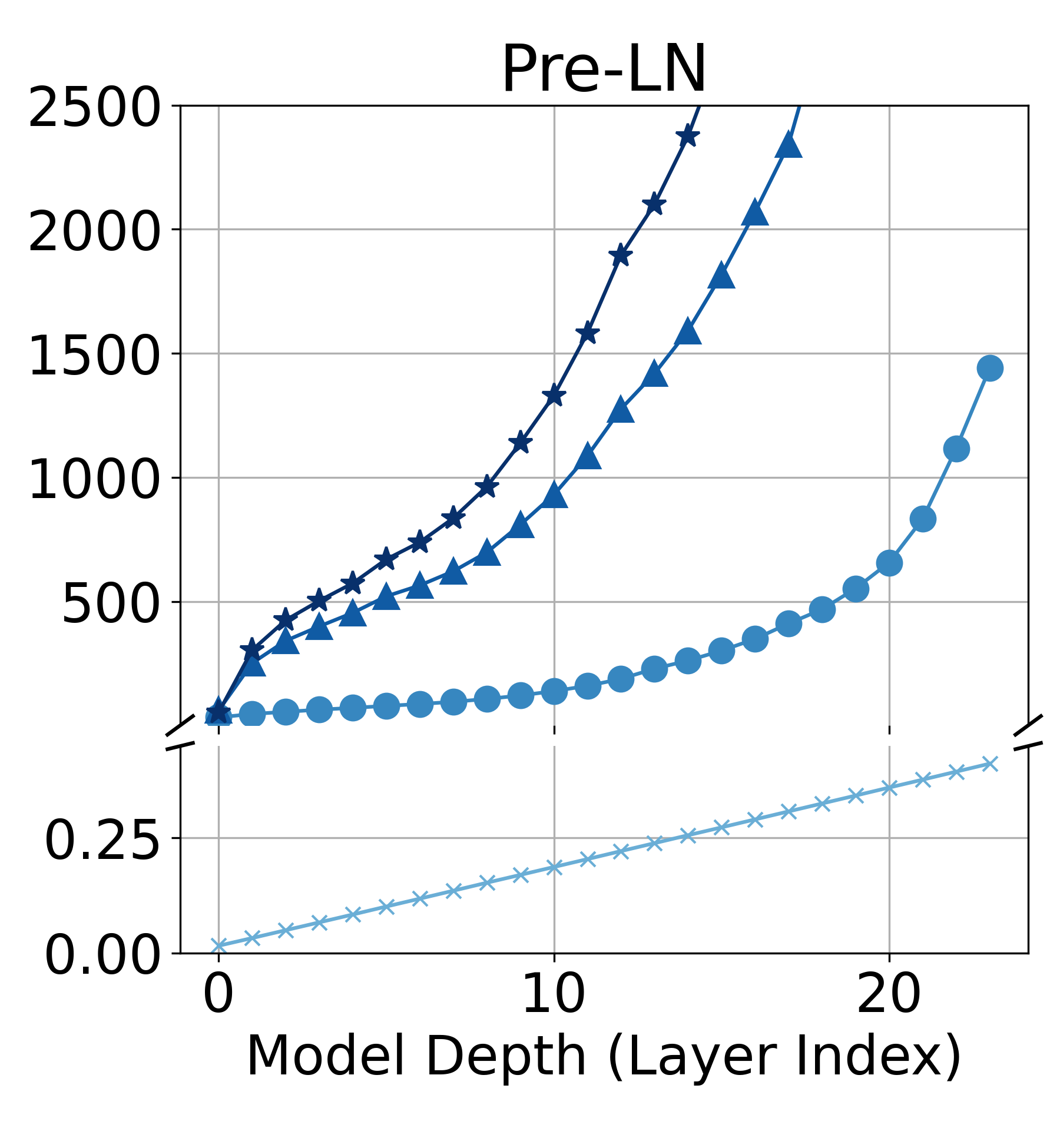} 
    \vskip -0.2in
    \caption{Illustration of hidden-state variance across different model depths and training iterations. From initialization through training on $6.3$ billion tokens, we observe the growth in hidden-state variance for both Pre-LN and Post-LN architectures. The analysis is based on a $1.5$B-parameter model. Detailed settings and additional results are provided in Section~\ref{subsec:growth of hidden state}.}
    \label{fig:3iter}
    \vskip -0.125in
\end{figure}

\section{Background and Motivation}
The analysis of activation variance at model initialization has long been central to understanding normalization layers and enhancing stability in convolutional neural networks (CNNs) \citep{cnnvariance, identity, BrockDSS21}. Specifically, \citet{cnnvariance} showed that batch normalization in residual blocks can bias networks toward the identity function, thereby stabilizing gradients and improving overall training dynamics. 

Similar investigations have emerged for Transformer architectures, examining how variance propagates and how gradients behave in both post-layer normalization (Post-LN) and pre-layer normalization (Pre-LN) configurations \citep{onlayer, transformersgetstable, smallproxies}. Early work comparing Post- and Pre-LN primarily focused only on the initialization stage. \citet{onlayer} observed that Pre-LN architectures tend to exhibit more stable gradients, but can still encounter issues such as gradient spikes and divergence, especially in deeper models or large-scale pre-training scenarios \citep{attentioncollapse, smallproxies, mlpswiglu, embeddingln}. 

Among these challenges, the phenomenon of ``massive activations'' has attracted attention \citep{llm.int8,yu2024super,mlpswiglu}. Notably, \citet{massiveactivation} identified that in Pre-LN architectures, large spikes in activation magnitude can persist across layers due to residual connections. These massive activations act as fixed biases, potentially narrowing the model’s focus to certain tokens and may influence generalization. However, the underlying mechanisms behind these large values, and their exact impact on the training process, remain not yet well understood.

Analytical work has provided theoretical frameworks to explain phenomena like gradient explosion and vanishing in Transformers. For instance, \citet{transformersgetstable} introduced a signal propagation theory that details how activation variance and gradient instability can evolve with depth, identifying critical factors that impair stability and performance. Recent studies have discussed how Pre-LN architectures can allow large values from Attention or MLP modules to flow unimpeded through residual connections \citep{moeut, mlpswiglu, attentioncollapse, smallproxies}, but the precise impact of this behavior on large-scale training remains insufficiently explored.

These observations underscore the ongoing need to clarify how activation dynamics and normalization strategies interact, especially in large-scale training.

\textbf{In response,} this work aims to deepen our understanding of how normalization strategies influence Transformer training, with particular attention to the emergence of large activations and their implications for stability and performance.

\section{Normalization Strategies} \label{sec:ln_in_transformer}

In this section, we discuss how different placements of layer normalization (LN \footnote{Unless stated otherwise, LN refers to both LayerNorm \citep{LN} and RMSNorm \citep{RMSNorm}.}) in Transformer architecture affect both training stability and the statistics of hidden states (activations \footnote{We use ``hidden state'' and ``activation'' interchangeably.}).

\subsection{Post- \& Pre-Normalization in Transformers}
\label{subsec:post_pre_ln}
\paragraph{Post-LN.}
The Post-Layer Normalization (Post-LN) \citep{attentionisallyouneed} scheme, normalization is applied \emph{after} summing the module’s output and residual input:
\begin{equation}
    y_{l} = \mathrm{Norm}\bigl(x_l + \mathrm{Module}(x_l)\bigr),
    \label{eq:post_ln}
\end{equation}
where $x_l$ is the input hidden state of $l$-th layer, $y_{l}$ is the output hidden state of $l$-th layer, and $\mathrm{Module}$ denotes Attention or Multi-Layer Perceptron (MLP) module in the Transformer sub-layer. $\mathrm{Norm}$ denotes normalization layers such as RMSNorm or LayerNorm. It is known that by stabilizing the activation variance at a constant scale, Post-LN prevents activations from growing. However, several evidence~\citep{onlayer, transformersgetstable} suggest that Post-LN can degrade gradient flow in deeper networks, leading to vanishing gradients and slower convergence.

\paragraph{Pre-LN.}
The Pre-Layer Normalization (Pre-LN)~\citep{llama3} scheme, normalization is applied to the module's input \emph{before} processing:
\begin{equation}
    y_l = x_l + \mathrm{Module}\bigl(\mathrm{Norm}(x_l)\bigr).
    \label{eq:pre_ln}
\end{equation}
As for Llama $3$ architecture, a final LN is applied to the network output. Pre-LN improves gradient flow during backpropagation, stabilizing early training \citep{onlayer}. Nonetheless, in large-scale Transformers, even Pre-LN architectures are not immune to instability during training~\citep{smallproxies, attentioncollapse}. As shown in Figure~\ref{fig:LN Placement}, unlike Post-LN—which places LN at position $C$—Pre-LN, which places LN only at position $A$, can lead to a “highway” structure that is continuously maintained throughout the entire model if the module produces an output with a large magnitude. This phenomenon might be related to the ``massive activations'' observed in trained models \citep{massiveactivation, mlpswiglu}. 

\begin{figure}[t]
\vskip -0.075in
    \centering
    \begin{minipage}[t]{0.45\linewidth}
        \vspace{0pt}
        \centering
        \includegraphics[width=.75\linewidth]{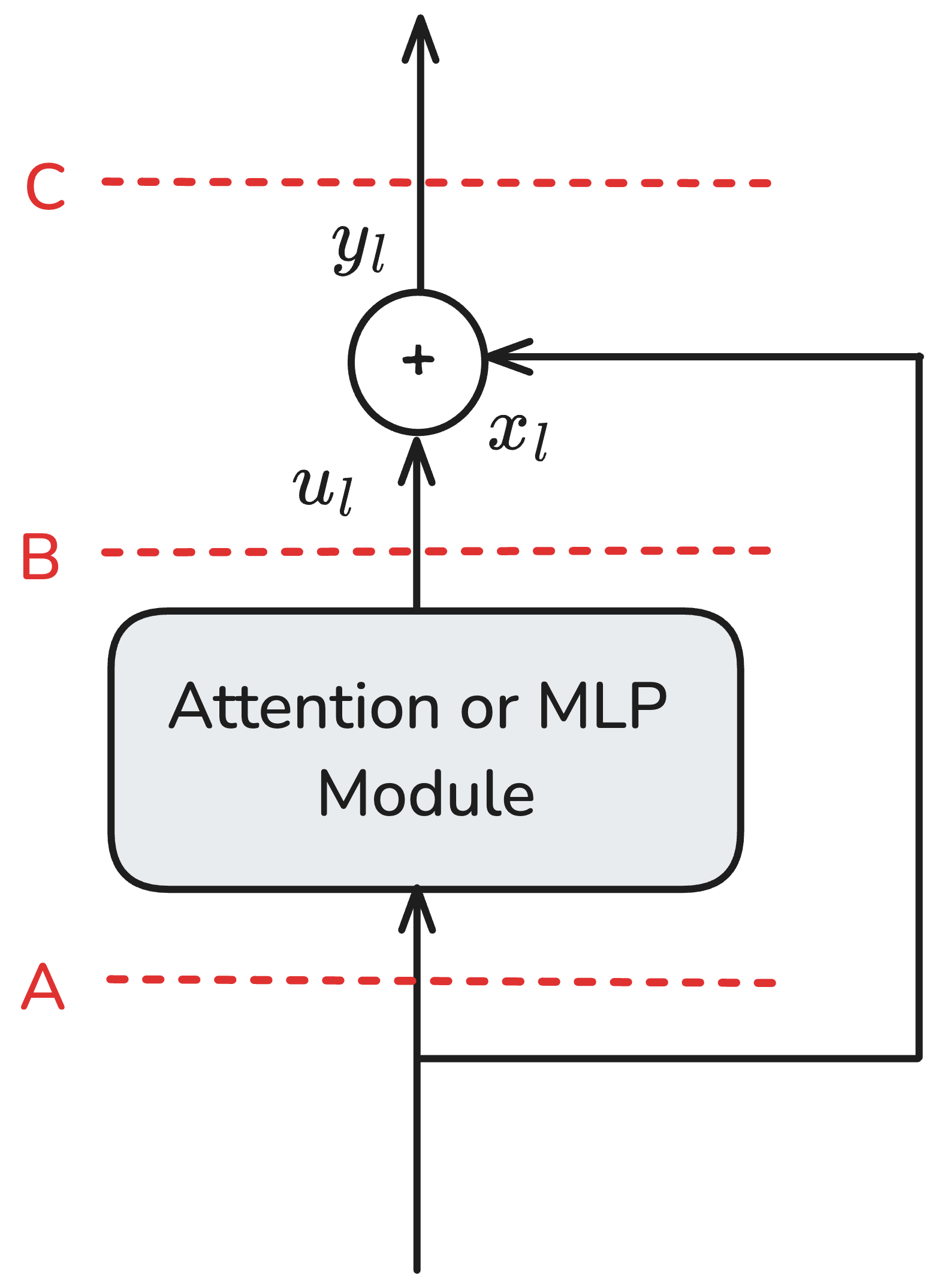}
    \end{minipage}
    \centering
    \begin{minipage}[t]{0.45\linewidth}
        \vspace{32pt}
        \centering
        \small
        \begin{tabular}{lccc}
            \toprule
            ~ & A & B & C  \\ 
            \midrule
            Post-LN & \texttimes & \texttimes & \checkmark \\
            Pre-LN  & \checkmark & \texttimes & \texttimes \\
            Peri-LN & \checkmark & \checkmark & \texttimes \\
            \bottomrule
        \end{tabular}
    \end{minipage}
    \vskip -0.125in
    \caption{Placement of normalization in Transformer sub-layer. }
    \label{fig:LN Placement}
    \vskip -0.1in
\end{figure}


\subsection{Variance Behavior from Initialization to Training}
\label{subsec:variance_growth}

As discussed by \citet{onlayer} and \citet{transformersgetstable}, Transformer models at \emph{initialization} exhibit near-constant hidden-state variance under Post-LN and linearly increasing variance under Pre-LN. Most of the previous studies have concentrated on this early-stage behavior. However, Recent studies have also reported large output magnitudes in both the pre-trained Attention and MLP modules \citep{vit22b, smallproxies, mlpswiglu}. To bridge the gap from initialization to the fully trained stage, we extend our empirical observations in Figure~\ref{fig:3iter} beyond initial conditions by tracking how these variance trends evolve at intermediate points in training. 

During training, we find that Post-LN maintains a roughly constant variance, which helps avert exploding activations. However, as models grow deeper and training progresses, consistently normalizing $x_l + \mathrm{Module}(x_l)$ can weaken gradient flow, occasionally causing partial vanishing gradients and slower convergence. In contrast, Pre-LN normalizes $x_l$ before the module but leaves the module output unnormalized, allowing hidden-state variance to accumulate exponentially once parameter updates amplify the input. Although Pre-LN preserves gradients more effectively in earlier stages, this \textbf{exponential} growth in variance can lead to ``massive activations'' \citep{massiveactivation}, risking numeric overflow and destabilizing large-scale training. 

\paragraph{Takeaways from Pre-LN \& Post-LN.} (1) \textit{Keeping the Highway Clean: Post-LN’s Potential for Gradient Vanishing and Slow Convergence.} When layer normalization is placed directly on the main path (Placement $C$ in Figure \ref{fig:LN Placement}), it can cause gradient vanishing and introduce fluctuations in the gradient scale, potentially leading to instability \citep{onlayer}. (2) \textit{Maintaining a Stable Highway: Pre-LN May Not Suffice for Training Stability.} Pre-LN does not normalize the main path of the hidden states, thereby avoiding the issues that Post-LN encounters. Nevertheless, a structural characteristic of Pre-LN is that any large values arising in the Attention or MLP modules persist through the residual identity path. In particular, as shown in Figure~\ref{fig:3iter}, the \emph{exponentially} growing magnitude and variance of the hidden states in the forward path may lead to numerical instability.

\subsection{Peri-Normalization in Transformers}
\label{subsec:peri_ln}

Recent open-source Transformer architectures have placed normalization layers in unconventional placements \citep{gemma2,gemma3,olmo2}. In particular, these models apply an additional normalization layer at the module output (Output-LN), yet the benefits of this design choice remain unclear. To assess the impact of Output-LN, we analyze the Peri-LN architecture.

\paragraph{Peri-LN.}
The Peri-Layer Normalization (Peri-LN) applies LN twice within each layer---before and after the module---and further normalizes the input and final output embeddings. Formally, for the hidden state $x_l$ at layer $l$:
\begin{enumerate}
    \item \textit{(Optional) Initial Embedding Normalization:}
    \[
      y_o = \mathrm{Norm}(x_o),
    \]
    \item \textit{Input- \& Output-Normalization per Layer:}
    \begin{equation}
      y_l = x_l + \mathrm{Norm}\Bigl(\mathrm{Module}\bigl(\mathrm{Norm}(x_l)\bigr)\Bigr),
    \end{equation}
    \item \textit{Final Embedding Normalization:}
    \[
      y_L = \mathrm{Norm}(x_L),
    \]
\end{enumerate}
where $x_o$ denotes the output of the embedding layer, the hidden input state. $y_0$ represents the normalized input hidden state. $x_L$ denotes the hidden state output by the final layer \(L\) of the Transformer sub-layer. This design unifies pre- and output-normalization to regulate variance from both ends. For clarity, the locations of normalization layers in the Post-, Pre-, and Peri-LN architectures are illustrated in Figure~\ref{fig:LN Placement}.

Both the latest Gemma \citep{gemma3, gemma2} and OLMo \citep{olmo2} model families, which apply output layer normalization, adopt the same peri-normalization strategy. However, neither line of work rigorously examines how this placement constrains variance or mitigates large residual activations. Our study extends these open-sourced large-scale models by providing both theoretical and empirical insights into the Peri-LN scheme.

\paragraph{Controlling Variance \& Preserving Gradients.}
By normalizing both the input and output of each sub-layer, Peri-LN constrains the \emph{residual spikes} commonly observed in Pre-LN, while maintaining a stronger gradient pathway than Post-LN. Concretely, if $\mathrm{Norm}(\mathrm{Module}(\mathrm{Norm}(x_l)))$ exhibits near-constant variance $\beta_0$, then

\begin{equation}
  \mathrm{Var}(x_{l+1}) \;\approx\; \mathrm{Var}(x_l) + \beta_0,
  \label{eq:linear-increase-variance}
\end{equation}

resulting in \emph{linear or sub-exponential} growth of activations, in contrast to the exponential growth patterns of Pre-LN. 

Although Pre-LN and Peri-LN exhibit comparable, roughly linear variance growth at initialization \citep{cnnvariance,residual_arxiv}, their trajectories diverge once training begins. The additional normalization layer (Output-LN) in Peri-LN preserves the conditions of Eq.~\ref{eq:linear-increase-variance}, enabling the model’s hidden states to remain better conditioned. By contrast, the rapid surge in variance observed in Pre-LN can trigger instability during the early stages of training, an effect we quantify empirically in Sections \ref{sec:analysis} and \ref{sec:ablation}.

\begin{figure*}[t]
\vskip -0.15in
    \centering
    \subfigure[Learning rate exploration]
    {
    \includegraphics[width=.305\linewidth,
    trim=0pt 0pt 0pt 25pt,clip]{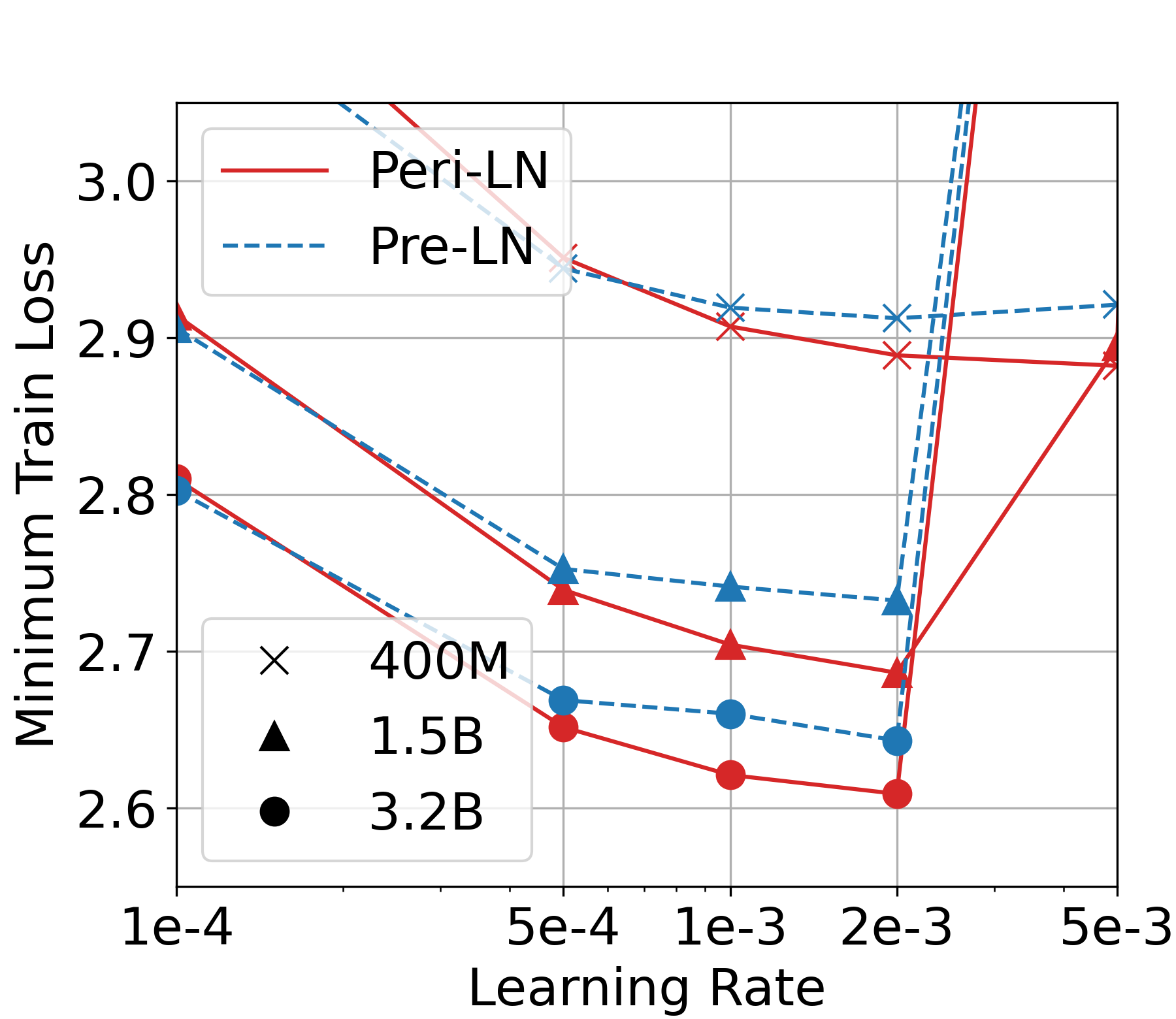}
    \label{fig:pretrain_lrwseep}
    }
    \subfigure[Training loss]
    {
    \includegraphics[width=.295\linewidth,
    trim=0pt 0pt 0pt 0pt,clip]{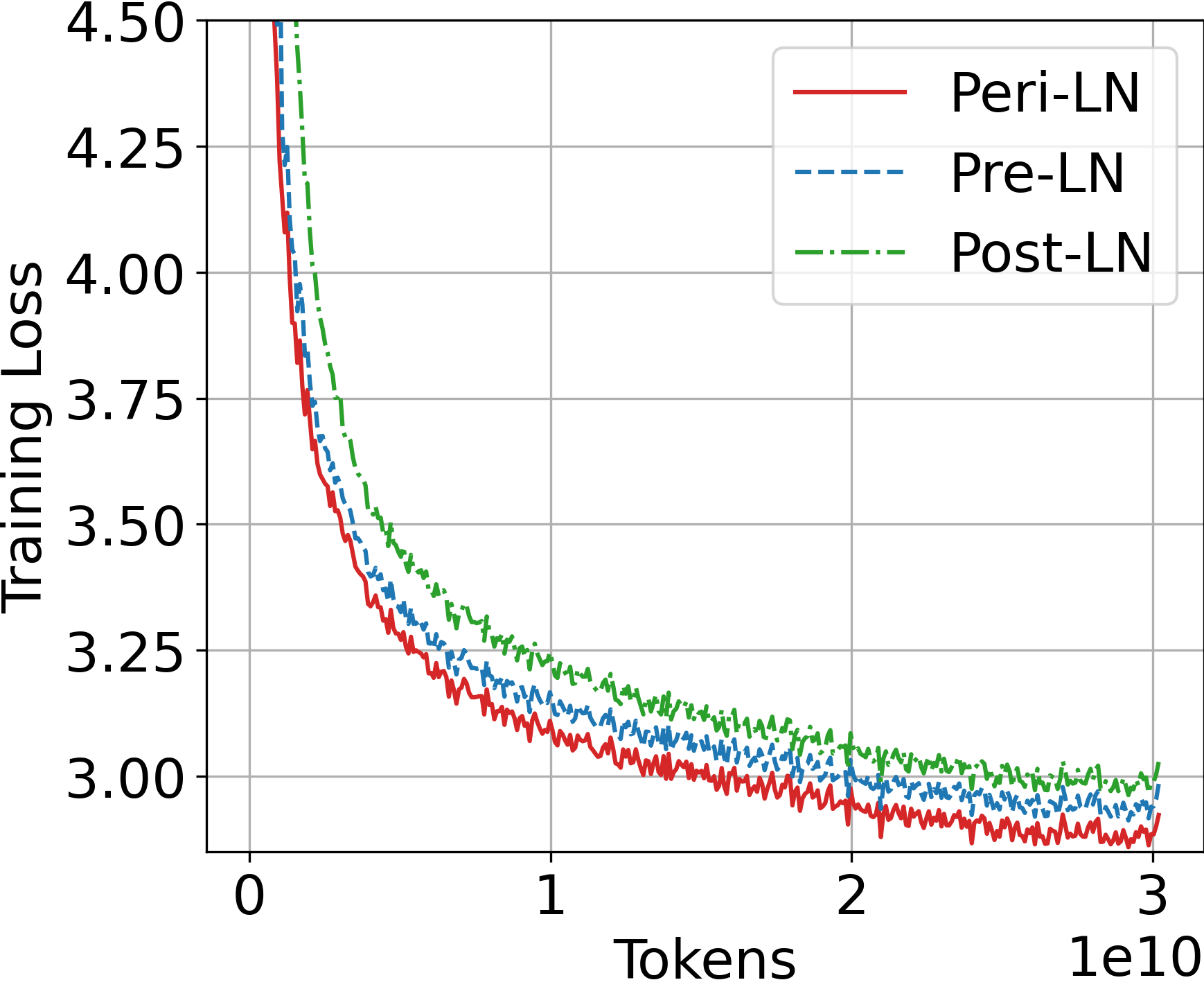}
    \label{fig:pretrain_loss}
    }
    \subfigure[Gradient-norm]
    {
    \includegraphics[width=.288\linewidth,
    trim=0pt 0pt 0pt 0pt,clip]{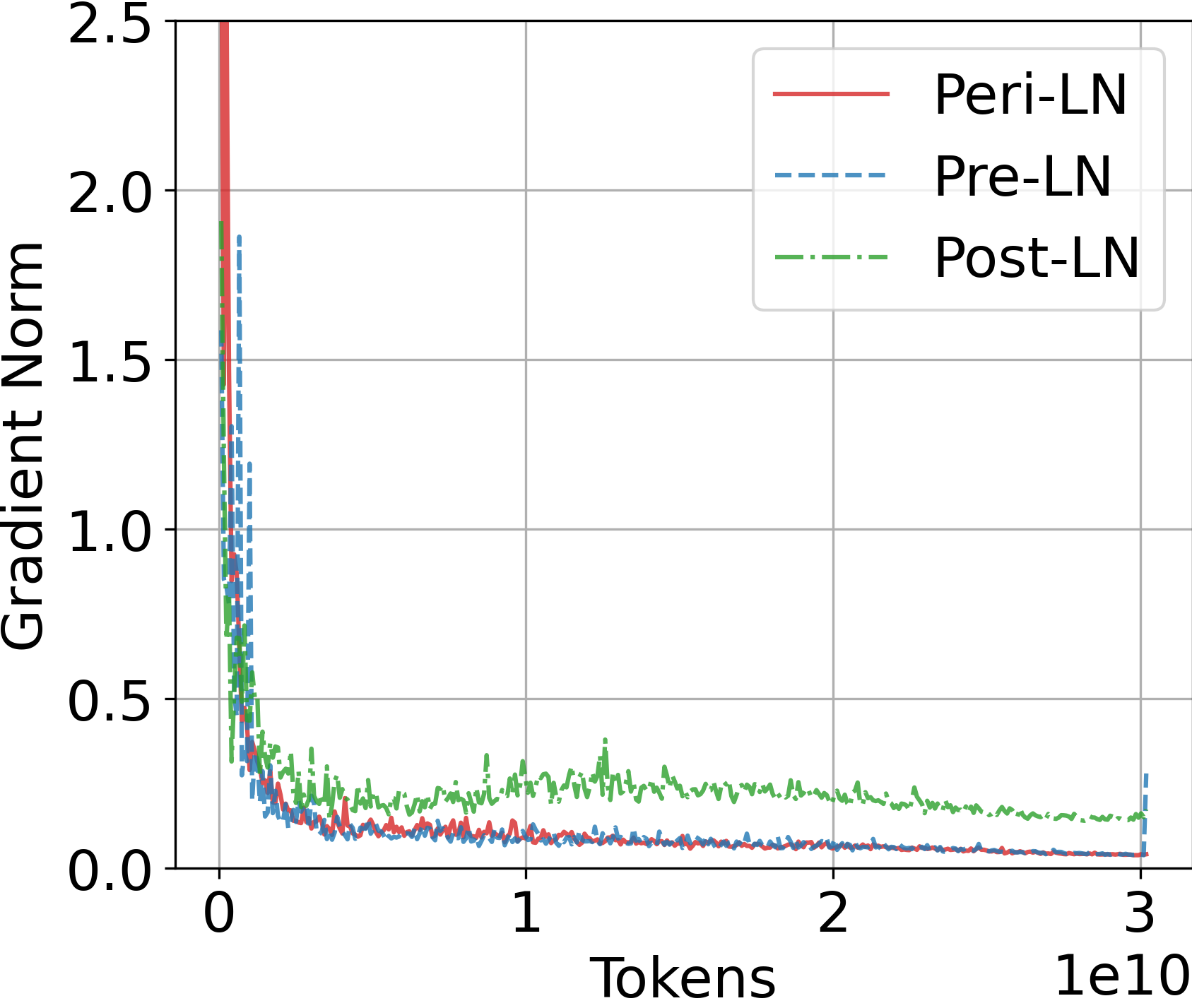}
    \label{fig:pretrain_gradnorm}
    }
    \vskip -0.15in
    \caption{
    Performance comparison of Post-, Pre-, and Peri-LN Transformers during pre-training. Figure \ref{fig:pretrain_lrwseep} llustrates the pre-training loss across learning rates. Pre-training loss and gradient norm of best performing $400$M size Transformers are in Figure \ref{fig:pretrain_loss} \& \ref{fig:pretrain_gradnorm}. 
    }
    \label{fig:pretraining}
\vskip -0.10in
\end{figure*}

\begin{figure*}[t]
\centering
\vskip -0.05in
    \subfigure[Divergence at seed $2$]
    {
    \includegraphics[width=.2325\linewidth]{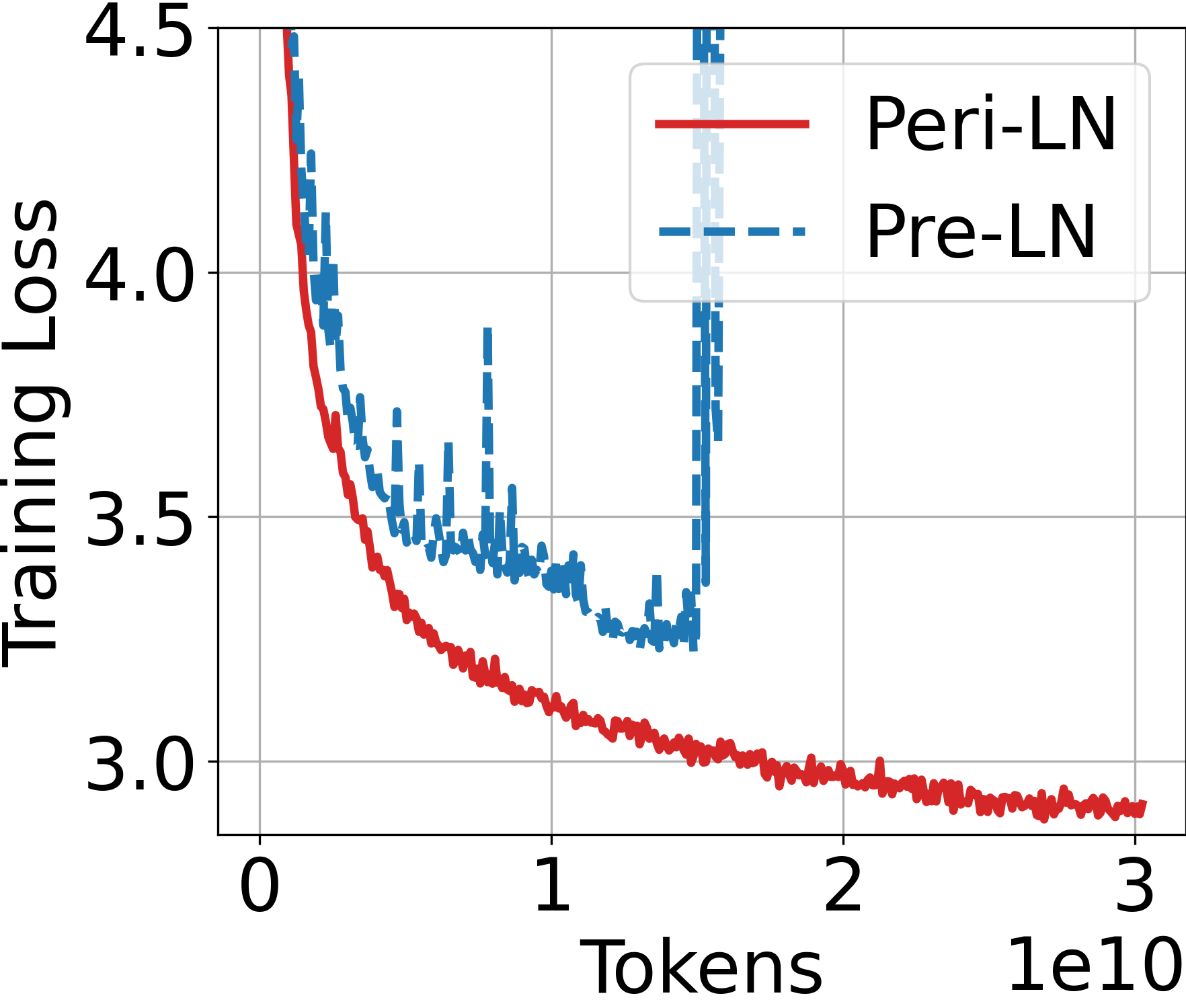}\label{fig:instability_case_a}
    }
    \subfigure[Loss spike at seed $3$]
    {
    \includegraphics[width=.22\linewidth]{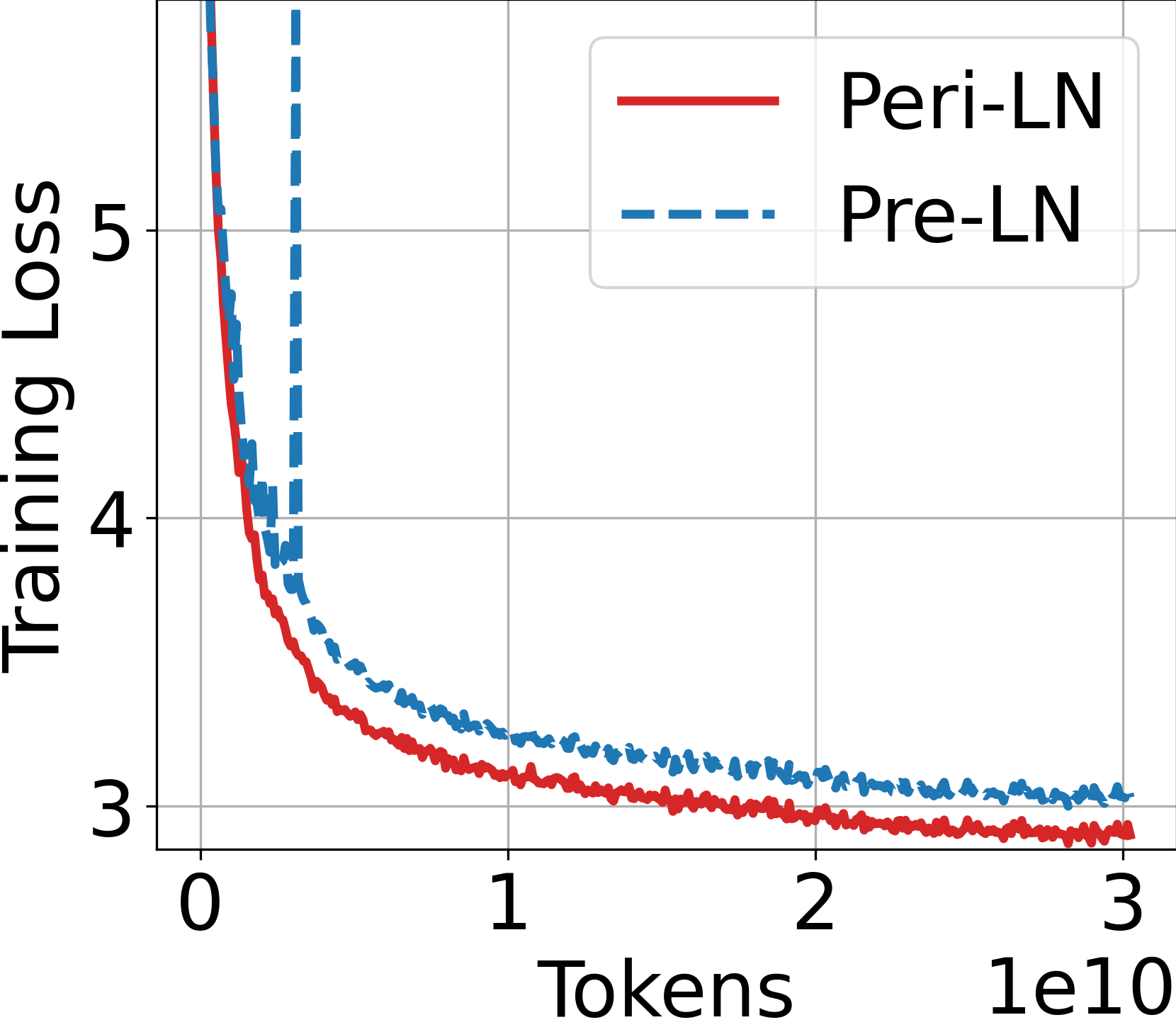}\label{fig:instability_case_b}
    }
    \subfigure[Gradient spikes at seed $5$]
    {
    \includegraphics[width=.22\linewidth]{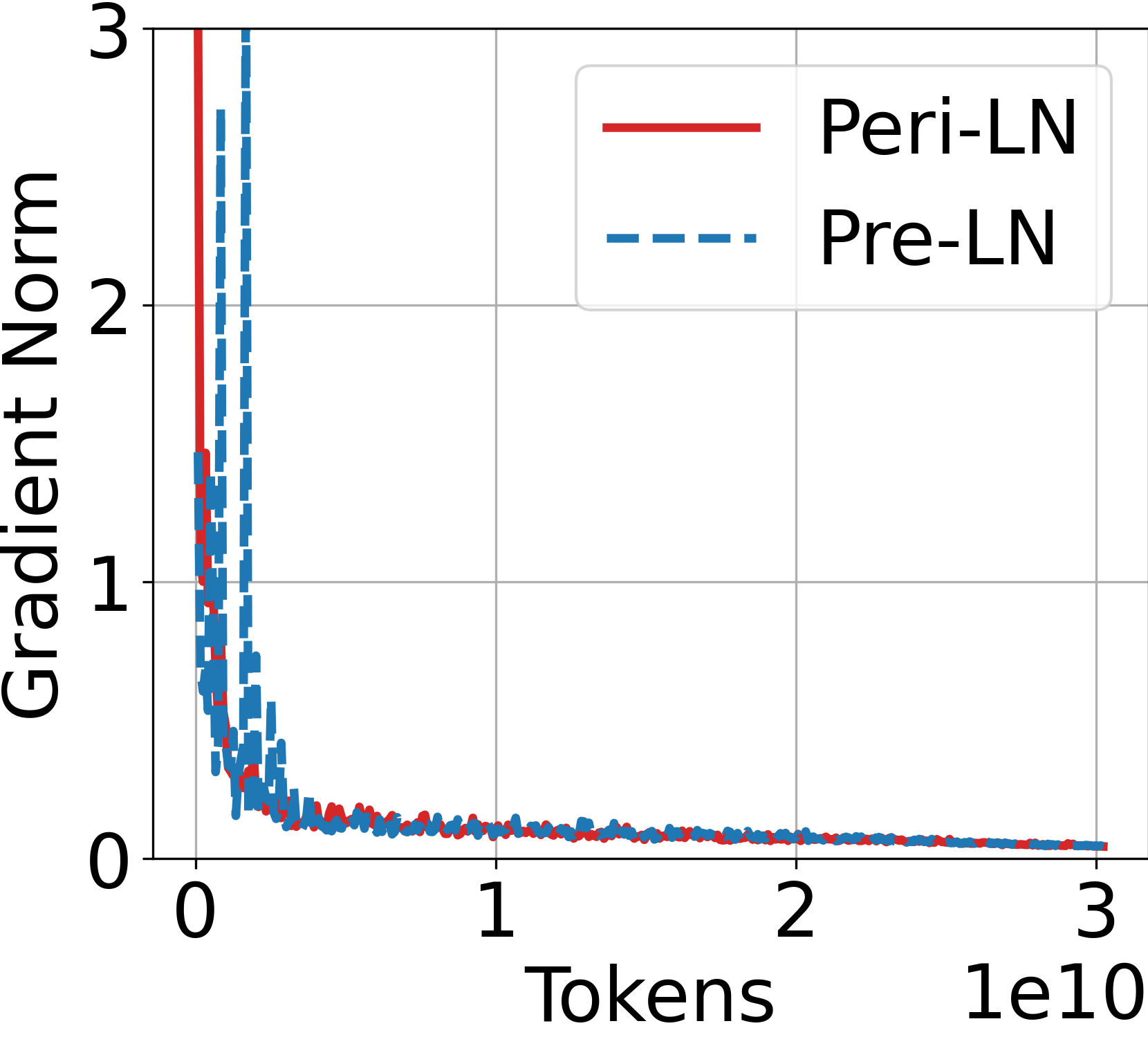}
    \label{fig:instability_case_c}
    }
    \subfigure[Loss spikes at seed $5$]
    {
    \includegraphics[width=.25\linewidth]{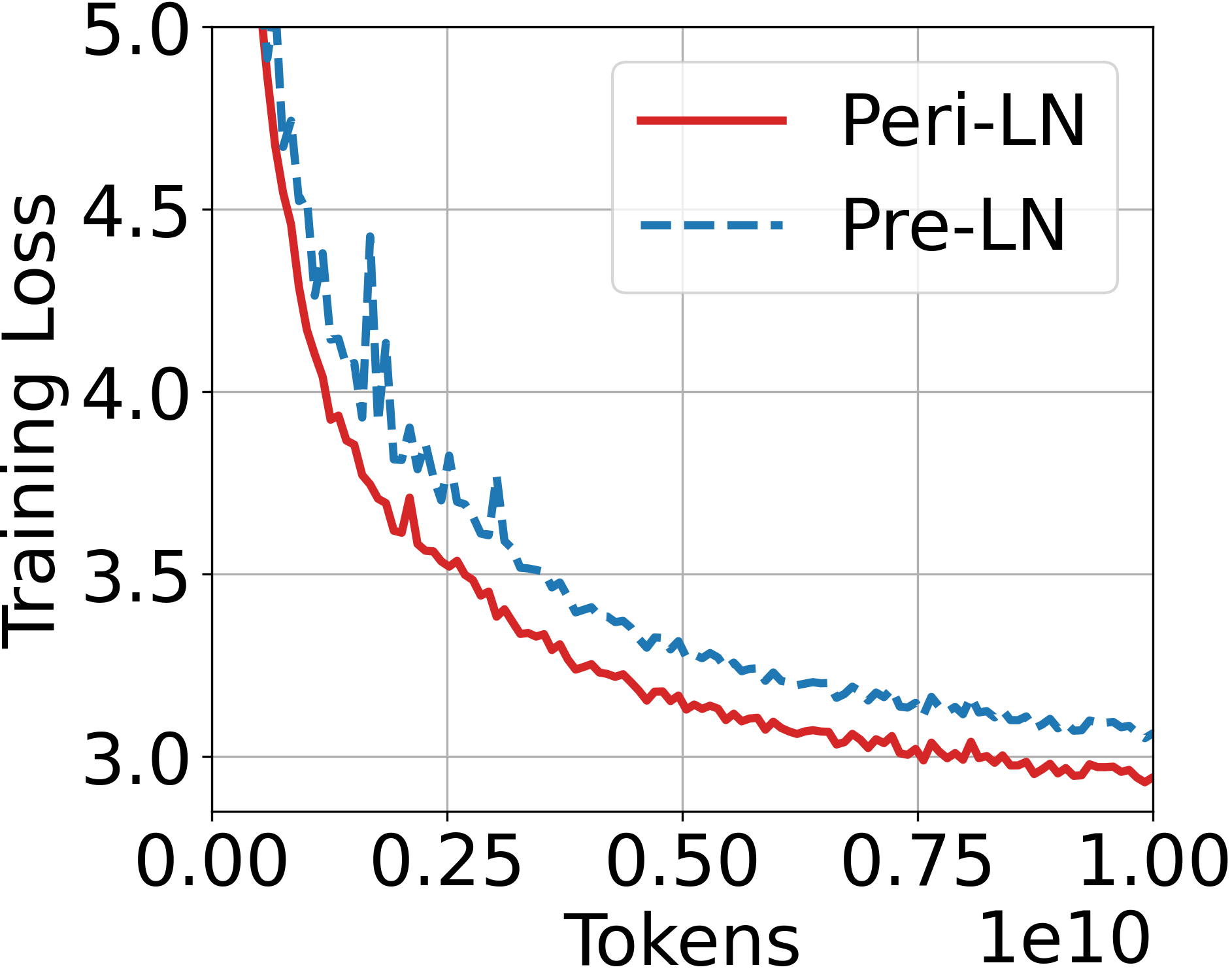}
    \label{fig:instability_case_d}
    }
\vskip -0.15in
    \caption{Common case of early stage instability in pre-training. In most of our experiments across different random seeds, the Pre-LN architecture exhibited early-stage instability. Although we initially suspected that a high learning rate might be the root cause, lowering it did not substantially mitigate these issues. By contrast, under the same settings, Peri-LN displayed stable training curves. 
}
    \label{fig:pre-training-instability}
\vskip -0.1in
\end{figure*}

\subsection{Stability Analysis in Normalization Strategies}
\label{subsec:theory_insights}
\citet{onlayer} showed that, \emph{at initialization}, Pre-LN exhibits smaller gradient scales at the final layer compared to Post-LN, with respect to model depth. In this study, we broaden our analysis beyond initialization to monitor hidden state variance over the full course of training. Building on the earlier observation that the deepest layer exhibits the largest activations, we focus on this surge in the final layer under the Peri-LN strategy to clarify its impact on training stability. 
To this end, we analyze stability by examining the gradient norm with respect to the final layer weights in the presence of massive activation. Formal statements and detailed proofs are presented in Appendix \ref{appendix:theory_proof}.

\begin{proposition}[Informal]
\label{prop:theory}
Let $\mathcal{L}(\cdot)$ be the loss function, and let $W^{(2)}$ denote the weight of the last layer of $\mathrm{MLP}(\cdot)$. Let $\gamma$ be the scaling parameter in $\mathrm{Norm}(\cdot)$, and let $D$ be the dimension. Then, the gradient norm for each normalization strategy behaves as follows.

\medskip
\noindent 
\textbf{(1) Pre-LN (exploding gradient).} Consider the following sequence of operations:
\vskip -0.1in
\begin{equation}
\tilde{x} = \mathrm{Norm}(x), a = \mathrm{MLP}(\tilde{x}), o = x + a,
\end{equation}
then
\begin{equation}
\left\lVert \frac{\partial \mathcal{L}(o)}{\partial W_{i,j}^{(2)}} \right\rVert \;\propto\; \| h_{i} \|,
\end{equation}
where $h := \mathrm{ReLU}\left(\tilde{x} W^{(1)} + b^{(1)}\right)$. In this case, when a massive activation $\|h\|$ occurs, an exploding gradient $\|\partial \mathcal{L} / \partial W^{(2)}\|$ can arise, leading to training instability.

\medskip
\noindent
\textbf{(2) Peri-LN (self-regularizing gradient).} Consider the following sequence of operations:
\vskip -0.1in
\begin{equation}
\tilde{x} = \mathrm{Norm}(x), a = \mathrm{MLP}(\tilde{x}), \tilde{a} = \mathrm{Norm}(a), o = x + \tilde{a},
\end{equation}
then
\begin{equation}
\left\lVert \frac{\partial \mathcal{L}(o)}{\partial W_{i,j}^{(2)}} \right\rVert 
\;\le\; \frac{4\,\gamma\,\sqrt{D}\,\|h\|}{\|a\|}, 
\end{equation}
where $h := \mathrm{ReLU}\left(\tilde{x} W^{(1)} + b^{(1)}\right)$. In this case, even when a massive activation $\|h\|$ occurs, $\mathrm{Norm}(\cdot)$ introduces a damping factor $\|a\|$, which ensures that the gradient norm $\|\partial \mathcal{L} / \partial W^{(2)}\|$ remains bounded.
\vskip -0.1in
\end{proposition}

The layer-wise amplification documented in §\ref{subsec:variance_growth}, combined with the bounds in Proposition \ref{prop:theory}, naturally explains the gradient spikes, and occasional divergences that arise in Pre-LN during large-scale pre-training. We revisit this phenomenon in §\ref{subsec:early-stage-instability}. By contrast, the additional normalization in Peri-LN acts as a self-regularizing mechanism that damps variance growth, making the architecture less sensitive to large activations and therefore more stable in practice. The formal analysis for Post-LN is deferred to Appendix \ref{appendix:theory_postln}.

\section{Experiments} \label{sec:experiments}
In this section, we provide a comprehensive comparison of Post-, Pre-, and Peri-Layer Normalization (LN) across large-scale Transformer pre-training, instruction-tuning, and subsequent evaluations on the language domain. 

\begin{table*}[t]
\vskip -0.2in
\caption{Average benchmark scores (with standard deviations) across $5$ different training seeds for Post-, Pre-, and Peri-Layer Normalization language models. Each model size excludes the embedding parameters. \textit{Loss} denotes the evaluation loss on random samples of the C$4$ dataset. \textit{Arch.} denotes architecture, and \textit{Avg.} denotes the averaged benchmark score across tasks. \textit{SFT avg.} denotes the averaged benchmark score across tasks of instruction fine-tuned models. Diverged checkpoints are excluded from the evaluation score computation. }
\label{tab:pre-train}
\newcommand{\pmstd}[1]{{\scriptsize $\pm #1$}}
    \centering
    \small
    \begin{tabular}{llcccccccc}
    \toprule
        Size & Arch.& ARC-Easy & HellaSwag & PIQA  & SIQA & Winogrande & Avg. $\uparrow$ & Loss $\downarrow$ & SFT Avg. $\uparrow$ \\ 
        \toprule
~ & Post-LN & $35.70$ \pmstd{1.09} & $28.91$ \pmstd{0.16} & $62.26$ \pmstd{0.73} & $34.48$ \pmstd{1.04} & $50.88$ \pmstd{0.75} & $42.45$ & $7.46$ & $46.44$\\ 

$400$M & Pre-LN & $54.87$ \pmstd{1.63} & $34.17$ \pmstd{1.66} & $68.79$ \pmstd{1.34} & $39.73$ \pmstd{0.59} & $50.88$ \pmstd{2.35} & $49.69$ & $3.43$ & $49.96$\\ 

~ & Peri-LN & $ \textbf{57.51}$ \pmstd{0.81} & $ \textbf{37.46}$ \pmstd{0.34} & $ \textbf{69.48}$ \pmstd{0.39}   & $ \textbf{40.64}$ \pmstd{0.51} & $ \textbf{52.74}$ \pmstd{0.67} & \textbf{51.57}& \textbf{3.34} & \textbf{51.96}\\ 
\midrule
~ & Post-LN & $42.92$ \pmstd{0.93} & $31.69$ \pmstd{0.41} & $66.72$ \pmstd{0.40} & $35.84$ \pmstd{0.61} & $50.30$ \pmstd{1.87} & $45.49$ & $5.38$ & $48.95$\\ 

$1.5$B & Pre-LN & $61.51$ \pmstd{1.22} & $39.88$ \pmstd{1.53} & $71.41$ \pmstd{0.88} & $41.23$ \pmstd{0.97} & $54.51$ \pmstd{2.07}  & $53.71$ & $3.29$ & $53.89$ \\ 

~ & Peri-LN & $ \textbf{66.17} $ \pmstd{0.21} & $ \textbf{43.94} $ \pmstd{0.34} & $ \textbf{73.63} $ \pmstd{0.24} & $ \textbf{42.34} $ \pmstd{0.83}   & $ \textbf{56.64} $ \pmstd{0.44} & \textbf{56.55} & \textbf{3.18} & \textbf{56.94} \\ 
\midrule

~ & Post-LN & $45.30$ \pmstd{3.23} & $33.59$ \pmstd{0.44} & $66.45$ \pmstd{2.86} & $35.82$ \pmstd{1.09} & $51.10$ \pmstd{1.60}
  & $46.45$ & $4.43$ & $49.33$ \\ 

$3.2$B & Pre-LN & $65.24$ \pmstd{2.32} & $44.23$ \pmstd{2.32} & $73.86$ \pmstd{1.19} & $42.68$ \pmstd{0.07} & $57.42$ \pmstd{2.51}   & $56.69$ & $3.20$ & $57.08$ \\ 

~ & Peri-LN & $ \textbf{68.73} $ \pmstd{0.57} & $ \textbf{46.99} $ \pmstd{0.21} & $ \textbf{74.31} $ \pmstd{0.41} & $ \textbf{43.00} $ \pmstd{0.73}   & $ \textbf{59.76} $ \pmstd{0.78} & \textbf{58.56} & \textbf{3.11} & \textbf{59.02} \\ 
    \bottomrule
    \end{tabular}
\vskip -0.2in
\end{table*}

\subsection{Experimental Setting} \label{subsec:settings}
Excluding the embedding parameters, the model size is set to the parameters $400$M, $1.5$B and $3.2$B, respectively. Each model is trained on $30$ billion tokens. To ensure reliable validation, we pre-train each model with \emph{five different training seeds} in all experiments. We perform a exploration of the learning rates, ranging from \(1 \times 10^{-4}\) to \(5 \times 10^{-3}\) to identify the U-shaped pattern for each LN strategy. The sequence length is set to $8192$, and the weight decay coefficient is fixed at $0.033$. We employ Megatron-LM\footnote{\url{https://github.com/NVIDIA/Megatron-LM}} to pre-train the Transformers under each LN strategy. We use the DCLM-baseline dataset \citep{dclm}, along with the ``cl$100$k\_base'' version of the TikToken tokenizer\footnote{\url{https://github.com/openai/tiktoken}}. Unless otherwise noted, most training and model configurations follow those of the DCLM experiments\citep{dclm}. For normalization layer, we primarily employ RMSNorm. Further details are in Appendix~\ref{appendix:exp_settings}.

\subsection{Pre-Training Large Language Models}\label{subsec:pretrain}
Figure \ref{fig:pretrain_lrwseep} illustrates the pre-training loss across learning rates for models ranging in size from $400$M to $3.2$B parameters. Notably, the Peri-LN architecture consistently achieves superior loss curves over this entire model size. Since Pre-LN shows best performance at learning rate $2 \times 10^{-3}$ across all model size, we set this to the default learning rate for Pre-LN and Peri-LN. Unlike Pre-LN, Post-LN’s appropriate learning rate lies in a lower range, so we provide a separate summary in Appendix \ref{appendix:postln}. In Figures \ref{fig:pretrain_loss} and \ref{fig:pretrain_gradnorm}, we compare the pre-training loss and the gradient norm curve at each LN strategy’s best-performing learning rate of $400$M size models. The same trend is observed across different model sizes (\S\ref{appendix:additionalresults_pretraining}). In particular, when we sweep over training seeds and learning rates, Pre-LN frequently exhibits spikes in the gradient-norm curve, whereas Peri-LN shows comparatively few, thereby supporting Proposition \ref{prop:theory}.

\begin{figure}[t]
    \centering
    \subfigure[Gradient-norm at seed $5$]
    {
    \includegraphics[width=0.46\linewidth,
    trim=0pt 0pt 0pt 0pt,clip]{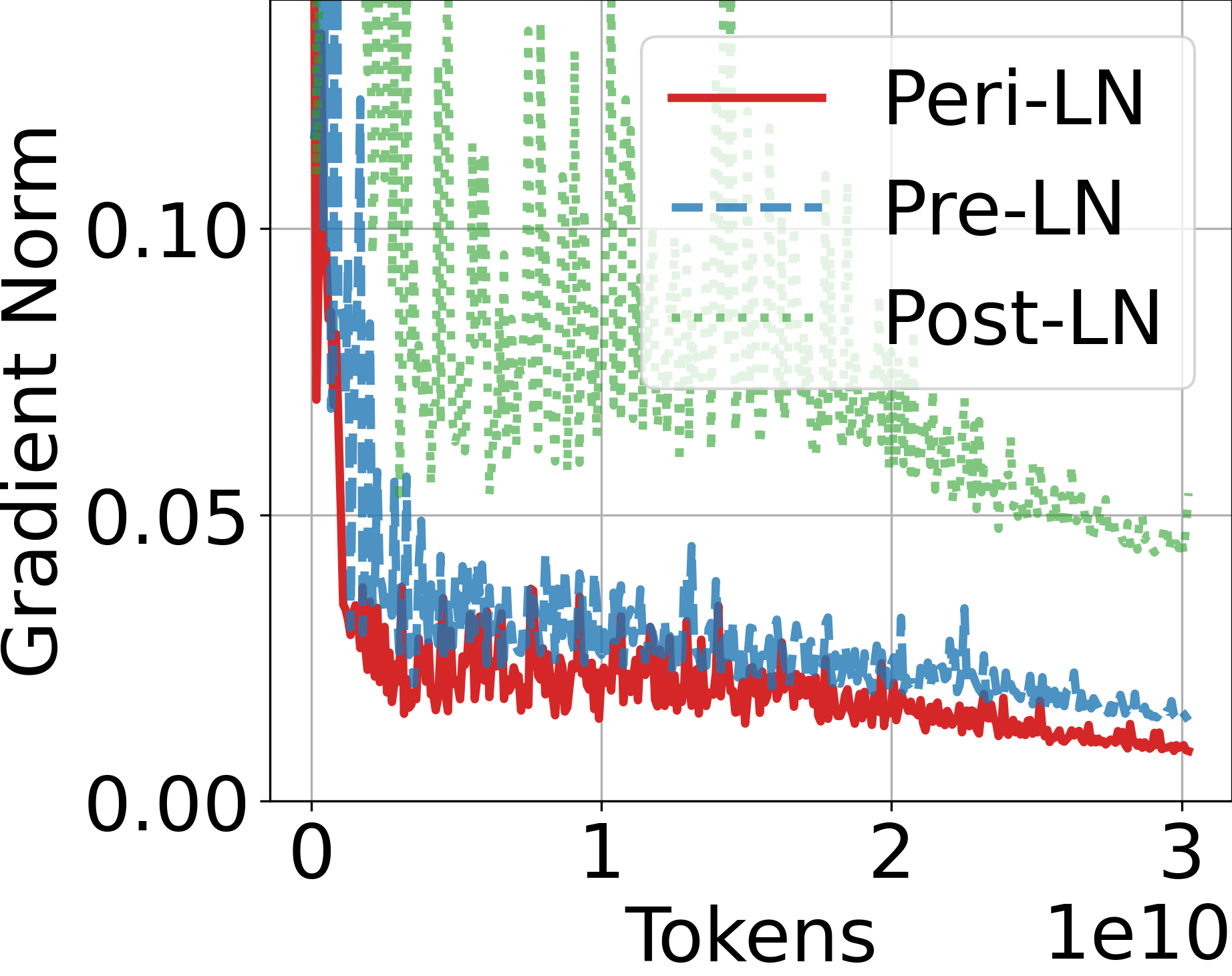} 
    \label{fig:final_layer_standard_main}
    }
    \subfigure[Gradient-norm at seed $4$]
    {
    \includegraphics[width=0.445\linewidth,
    trim=0pt 2pt 0pt 0pt,clip]{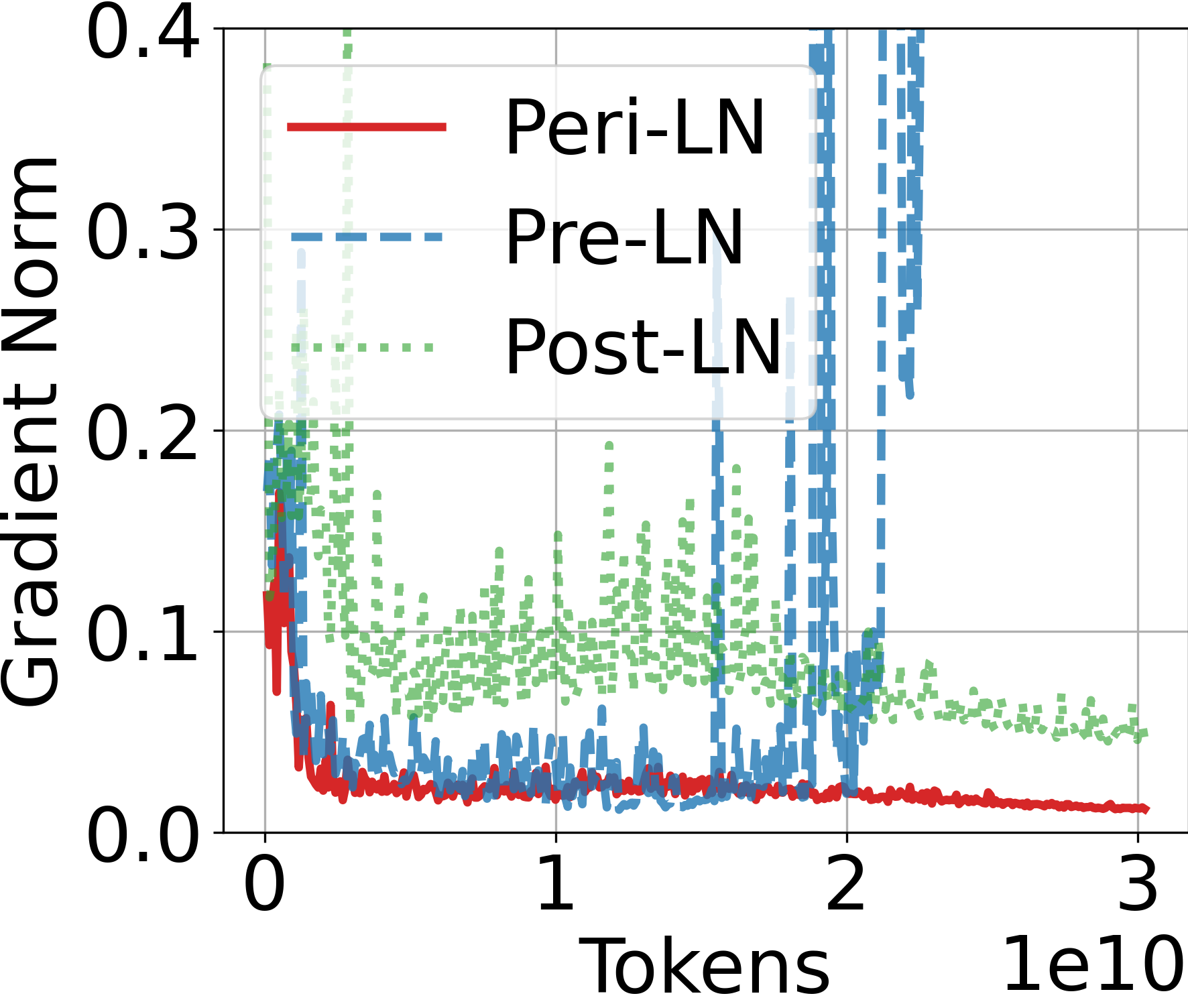}
    \label{fig:final_layer_diverge_main}
    }
    \vskip -0.15in
    \caption{Final-layer gradient norms for seeds $4$ and $5$.}\label{fig:final_layer_grad_norm}
    \vskip -0.2in
\end{figure}

\subsection{Early Stage Instability in Pre-Training}\label{subsec:early-stage-instability}
Early in pre-training, Pre-LN models consistently show gradient spikes, loss surges, and occasional divergence across seeds and scales (Fig.~\ref{fig:pre-training-instability}). These issues are far less pronounced in Peri-LN. We posit that the instability of Pre-LN arises from three factors: (1) the hidden state variance exhibits a sudden surge from initialization through the early stages of optimization, deviating from the linear trend predicted by Eq.~\ref{eq:linear-increase-variance} (see §\ref{subsec:peri_ln}); (2) the exponential growth of hidden state variance across both depth and training steps; and (3) the instability caused by the massive activations (Proposition \ref{prop:theory}). Among these, we highlight the variance growth along the main path as the principal driver of the observed divergence. To corroborate this, Section \ref{sec:ablation} presents targeted experiments that manipulate weight decay and weight initialization schemes, demonstrating how curbing extreme variance mitigates the instability of each LN strategy. The curves in \ref{fig:instability_case_a}, \ref{fig:instability_case_b}, and \ref{fig:instability_case_c} are from a $400$M model, whereas \ref{fig:instability_case_d} corresponds to a $1.5$B model. 

\subsection{Gradient Norm of the Final-layer}\label{subsec:final_grad}
Motivated by Proposition \ref{prop:theory}, we track the final-layer gradient-norm in two representative runs selected from five training seeds. Figure \ref{fig:final_layer_standard_main}, now including the newly added Peri-LN results, confirms the hierarchy reported by \citet{onlayer}: whenever training remains stable, the gradient norms satisfy Post-LN $>$ Pre-LN $>$ Peri-LN. However, Figure \ref{fig:final_layer_diverge_main} shows a run in which the Pre-LN model diverges even though every hyperparameter matches the stable run except for the random seed. In Section \ref{subsec:growth of hidden state}, we examine this failure in greater depth and relate it to Proposition \ref{prop:theory}. The curves in Figure \ref{fig:final_layer_grad_norm} are obtained from $400$M models.

\subsection{Benchmark Evaluations \& Instruction Tuning} \label{subsec:sft}
To evaluate how well the pre-training loss aligns with its benchmark performance, we conduct five separate benchmark evaluations. Furthermore, to investigate instruction-following capabilities under different layer normalization strategies, we conduct additional training using the LIMA dataset \citep{instructgpt, lima}. Diverged checkpoints are excluded from the evaluation score computation (mostly occurs in Pre-LN). Additional training hyperparameters for SFT are given in Appendix~\ref{appendix:SFTsetup}. As shown in Table~\ref{tab:pre-train}, Peri-LN consistently demonstrates superior performance across all model sizes. Additionally, we note that, beyond the improved scores, the standard deviation of the benchmark results across different training seeds is reduced by more than half with Peri-LN. From this, we observe that Peri-LN \emph{helps maintain consistency} not only in gradient stability and final loss but also in benchmark performance. For the evaluation loss, we used $10$K random samples from the C$4$ dataset \citep{raffel2020c4}. Detailed settings and individual benchmark scores are provided in Appendix~\ref{appendix:morebenchmarks}.

\begin{figure*}[t]
\vskip -0.15in
    \centering
    \subfigure[Absolute magnitude growth]
    {
    \includegraphics[width=0.4\linewidth]{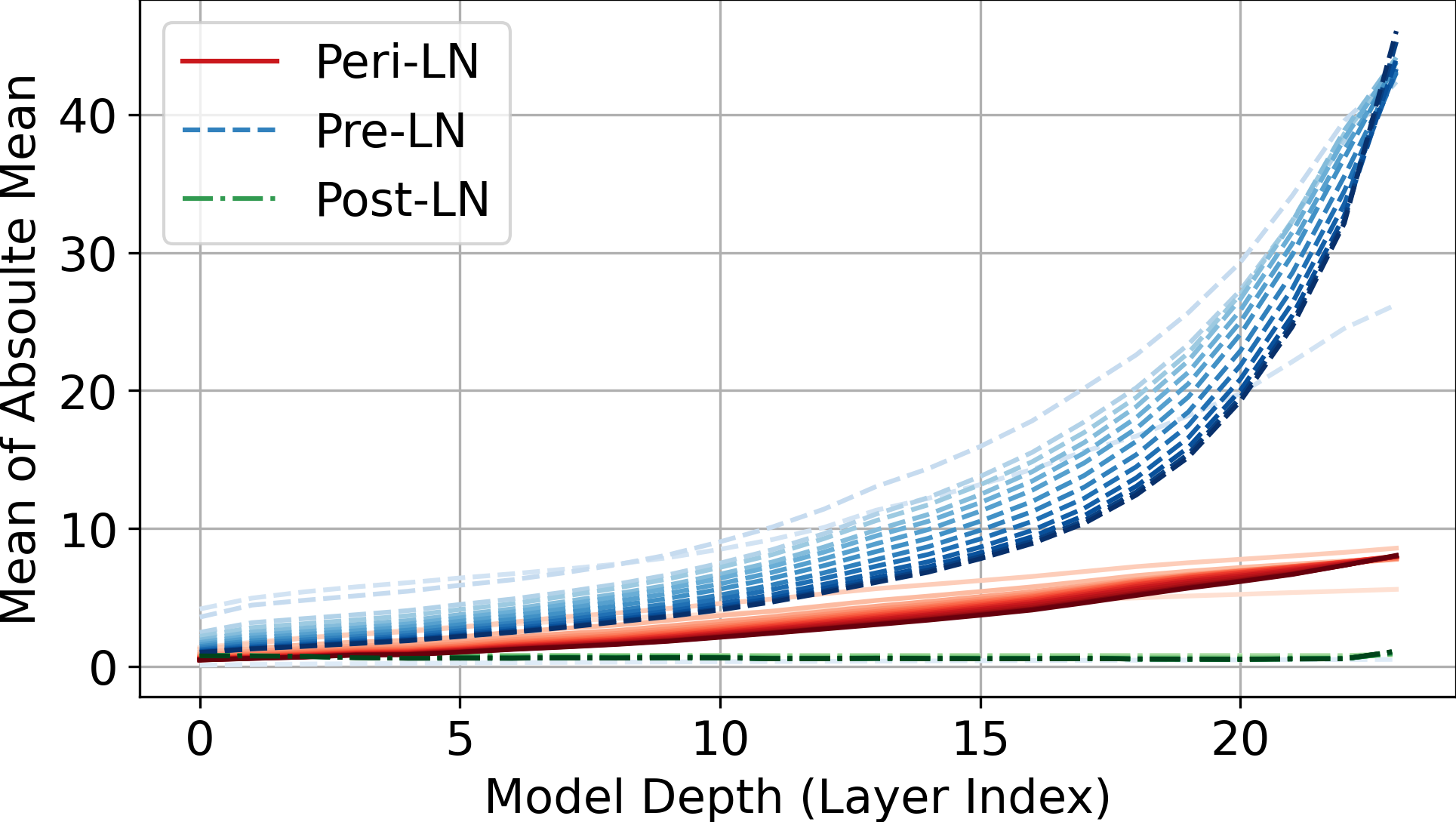} \label{fig:massive_activation}
    }
    \subfigure[Variance growth]
    {
    \includegraphics[width=0.477\linewidth]{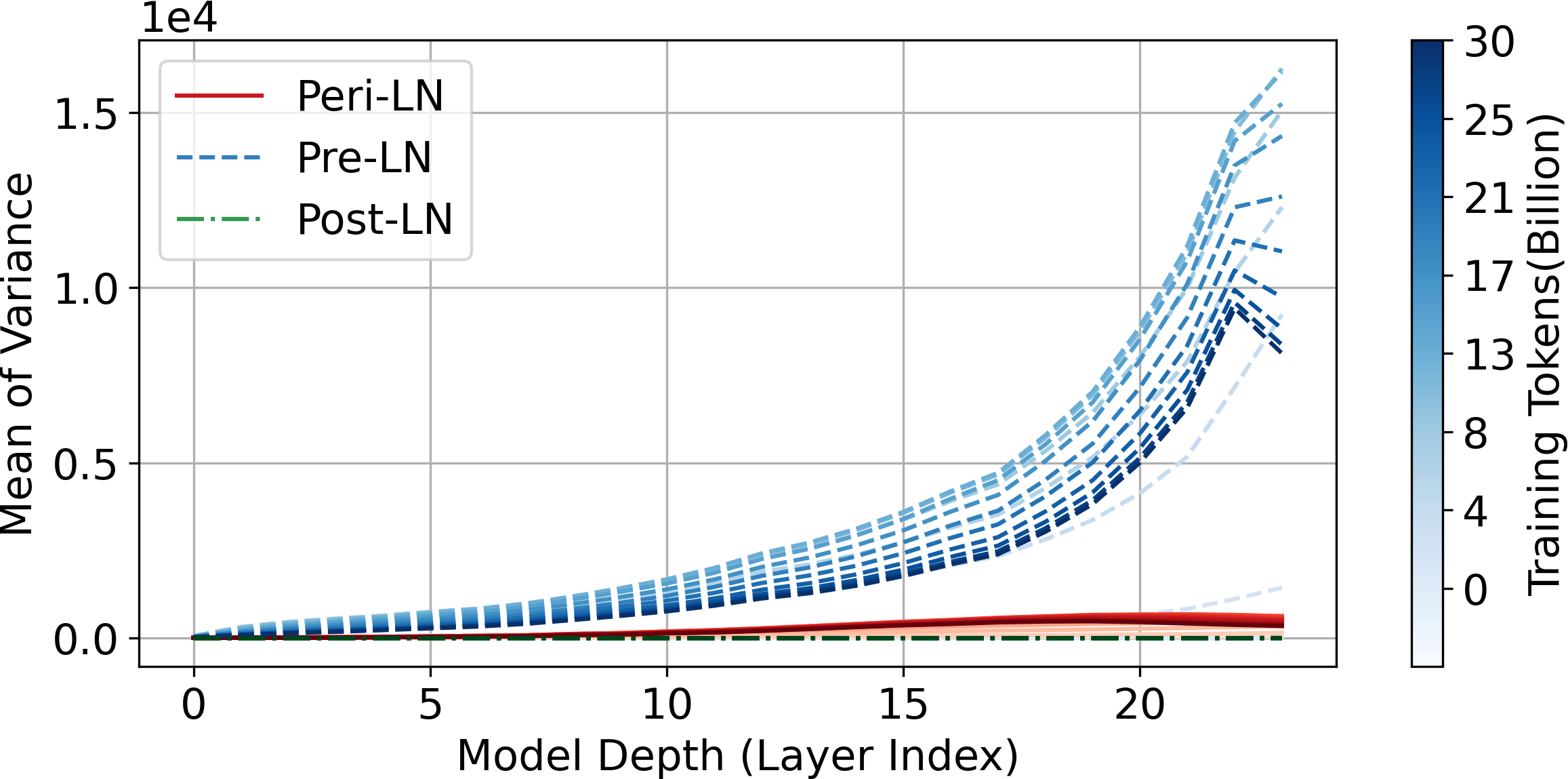} \label{fig:massive_variance}
    }
    \vskip -0.15in
    \caption{Forward hidden state growth patterns for each LN strategy in a $1.5$B-parameter Transformer.}
    \label{fig:growth_of_hidden_state}
\vskip -0.15in
\end{figure*}

\begin{figure*}[t]
    \centering
    \subfigure[Grad-norm at init.]
    {
    \includegraphics[width=.23\linewidth]{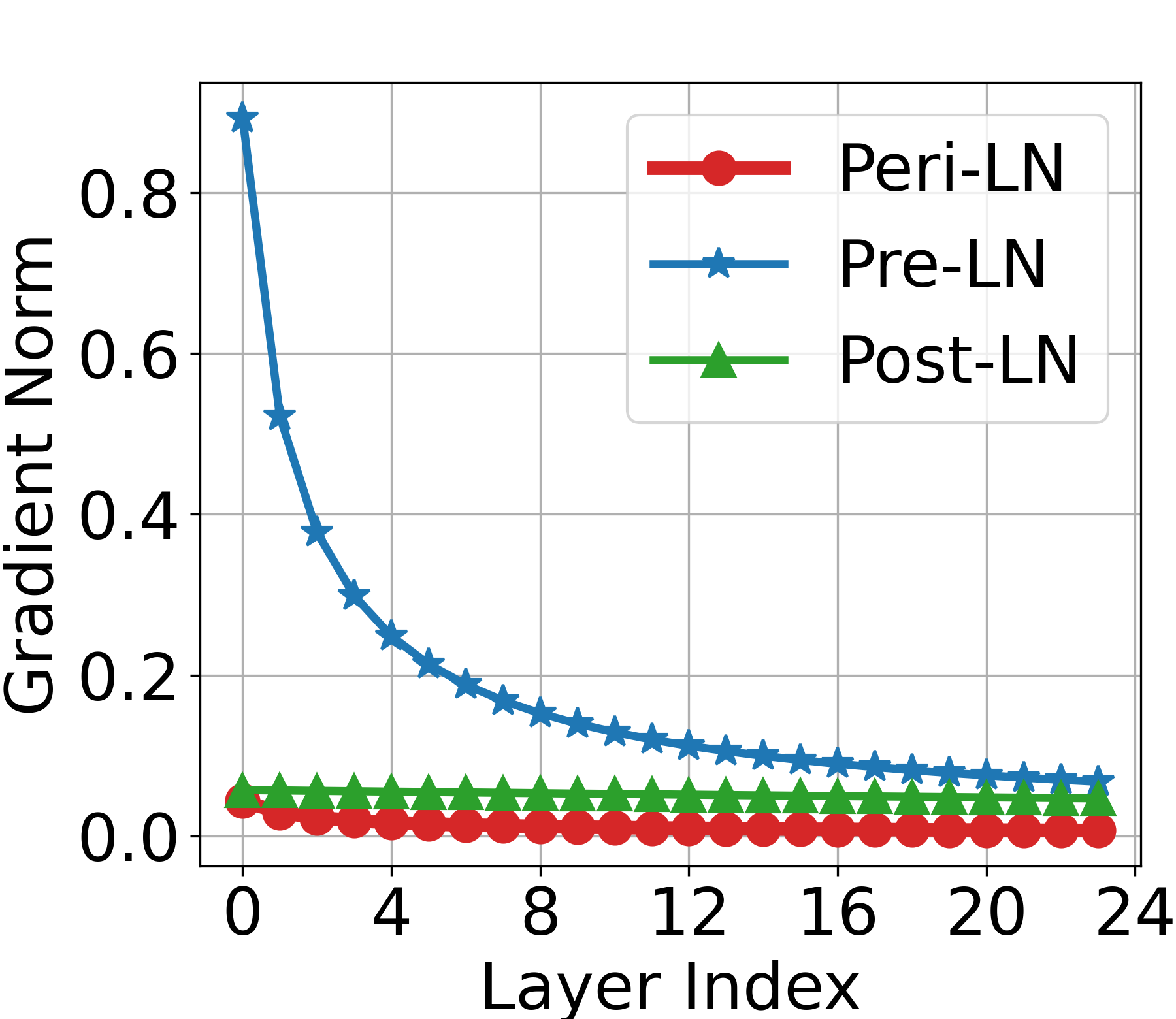}
    \label{fig:layerwise_gradnorm_init}
    }
    \subfigure[Grad-norm at final]
    {
    \includegraphics[width=.23\linewidth]{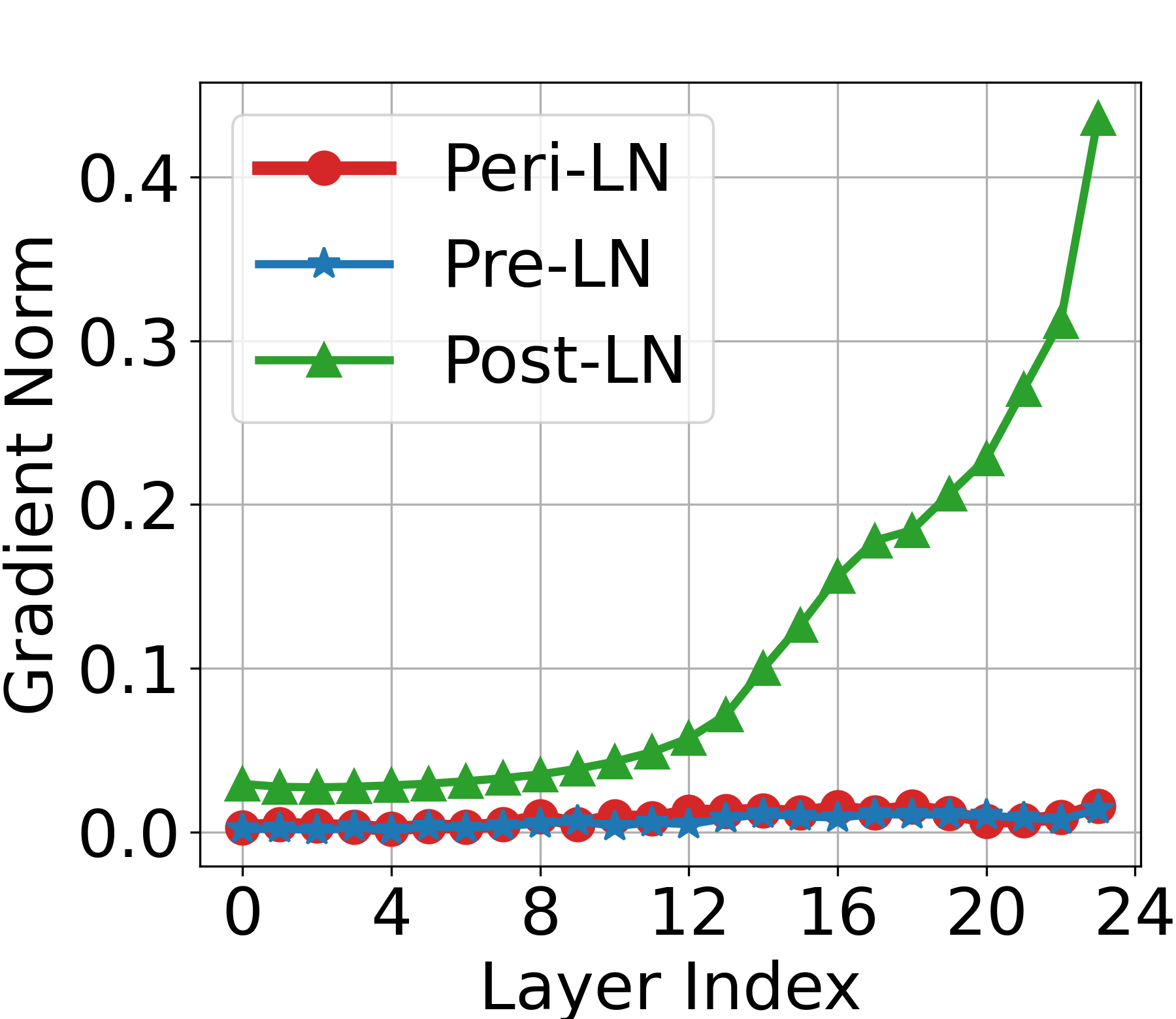}
    \label{fig:layerwise_gradnorm_final}
    }
    \subfigure[Grad-variance at init.]
    {
    \includegraphics[width=.23\linewidth]{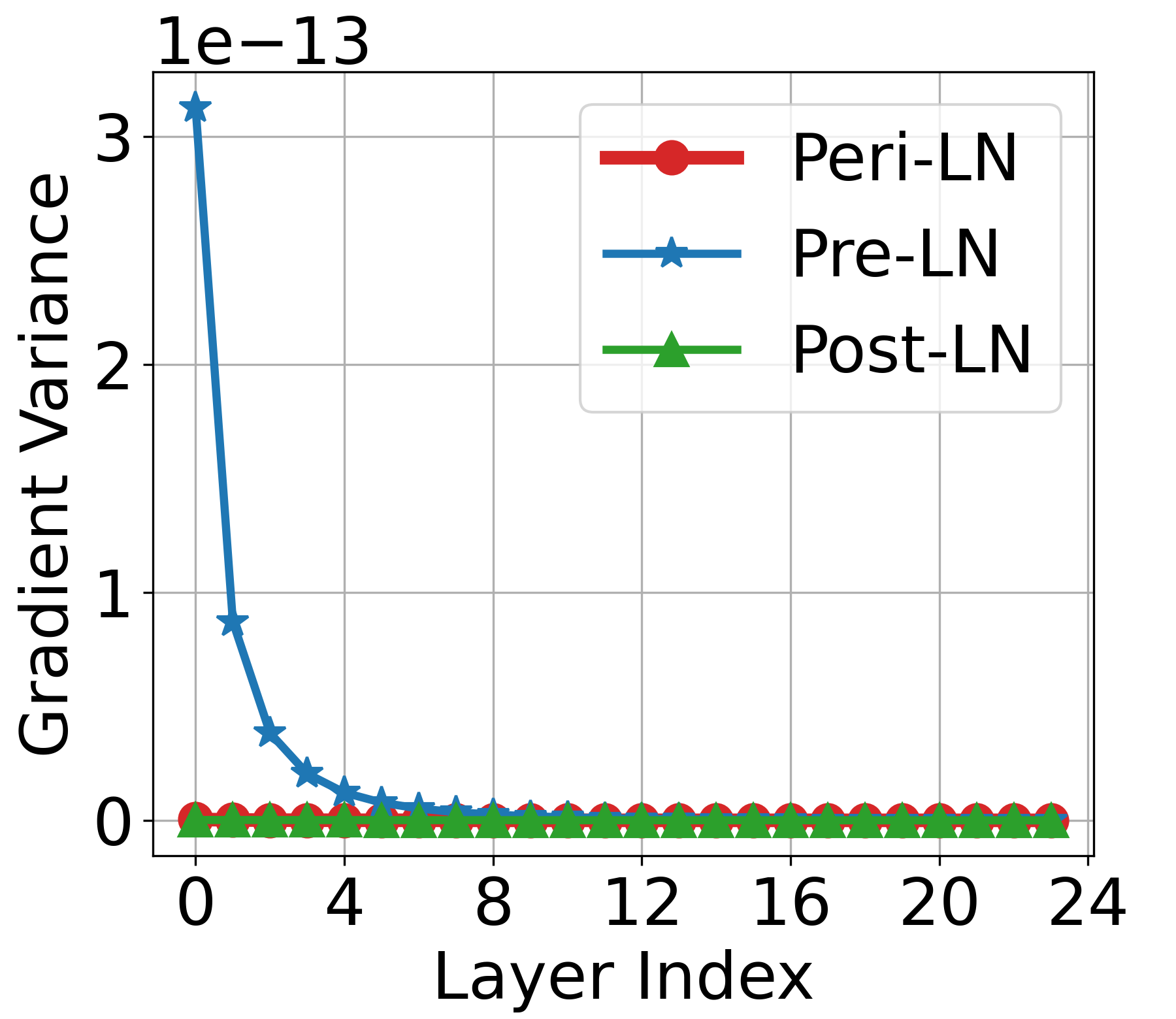}
    \label{fig:layerwise_gradvar_init}
    }
    \subfigure[Grad-variance at final]
    {
    \includegraphics[width=.23\linewidth]{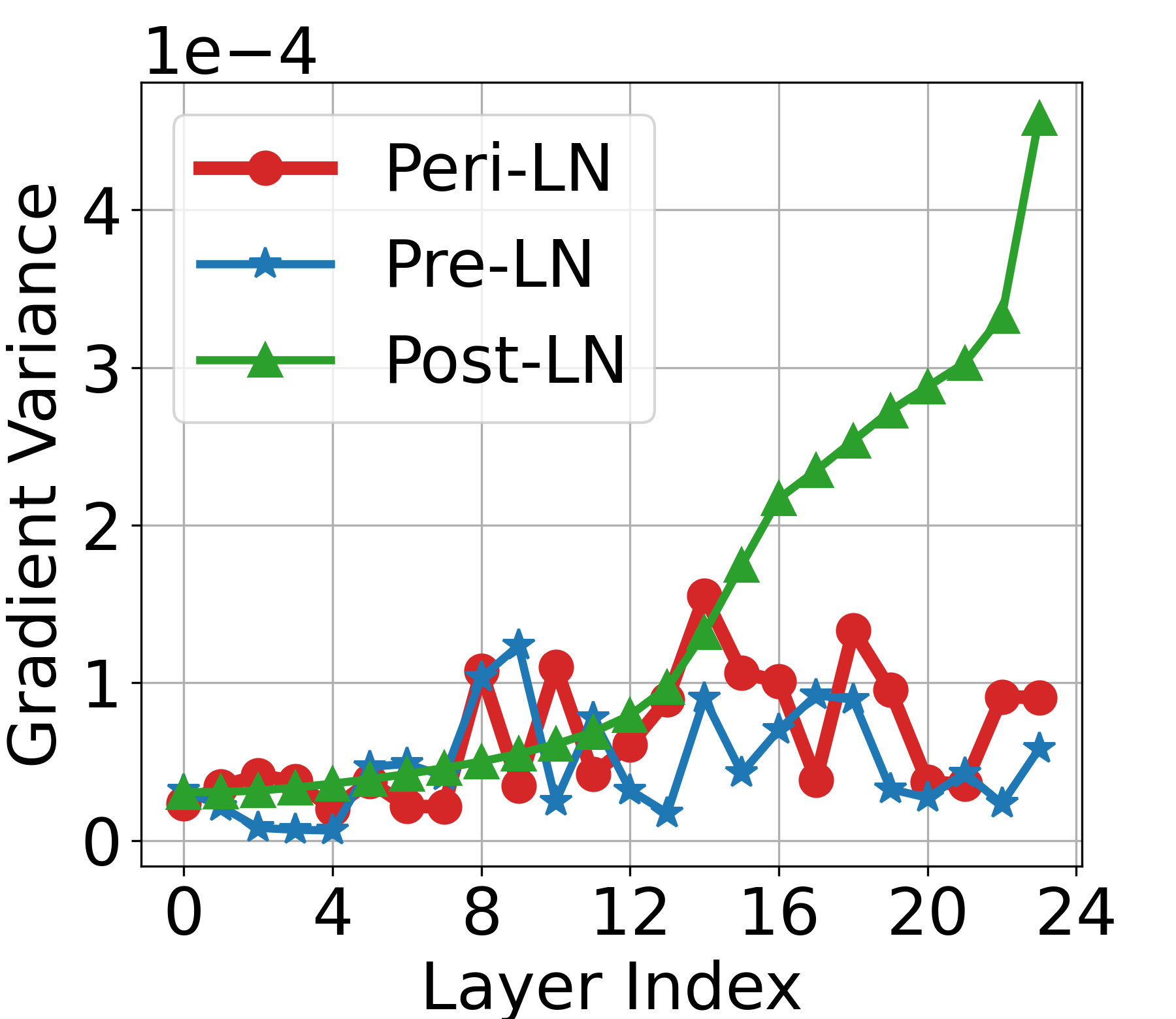}
    \label{fig:layerwise_gradvar_final}
    }
    \vskip -0.15in
    \caption{Backward gradient norm and variance of $1.5$B Post-, Pre-, and Peri-LN Transformers at initialization (\textit{init.}) and final training.}  
    \label{fig:layerwise_gradient}
\vskip -0.1in
\end{figure*}

\section{Analysis}\label{sec:analysis}
Despite emerging evidence that Peri-LN outperforms Post- and Pre-LN, key uncertainties remain: \emph{How} do different LN placements shape hidden-state statistics and gradient flow (\S\ref{subsec:growth of hidden state}, \S\ref{subsec:grad_norm_var})? \emph{What} role does the Output-LN scale parameter $\gamma$ play (\S\ref{subsec:frozengamma})? And \emph{why} does Peri-LN produce more distinctive representations than its counterparts  (\S\ref{subsec:hidden_state_representation})? The subsections below tackle these questions in turn.

\subsection{Growth of Hidden State} \label{subsec:growth of hidden state}

To examine in greater depth how Peri-LN affects forward propagation, we analyze the absolute magnitude and variance of the hidden states using $1,000$ samples from the Wikitext dataset \citep{merity2016pointer}. 
Figure~\ref{fig:growth_of_hidden_state} shows how different normalization strategies influence forward-path hidden states over the course of training and across model depth.
We observe the same pattern across all models trained with five different random seeds (\S\ref{appendix:additional_growth_hidden}).

Across layers, Post-LN maintains stable hidden state magnitudes and variances because the main path includes a normalization layer. In contrast, Pre-LN omits normalization after each attention and MLP sub-layer, so the magnitude and variance of the hidden states grow exponentially after the residual addition. For Peri-LN, which adds an Output-LN, these statistics remain comparatively well controlled. Across training iterations, Post-LN’s block-level normalization continues to suppress large shifts, preventing substantial drift in magnitude or variance. Pre-LN starts with an approximately linear variance profile at initialization but escalates exponentially to extremely large values as optimization proceeds. Peri-LN again exhibits only moderate fluctuations, owing to Output-LN’s consistent regulation of hidden-state statistics. Further discussion appears in Section \ref{sec:ablation}.


\begin{figure*}[t]
\vskip -0.15in
    \centering
    \subfigure[Training loss]
    {
    \includegraphics[width=.3\linewidth,
    trim=0pt 0pt 0pt 25pt,clip]{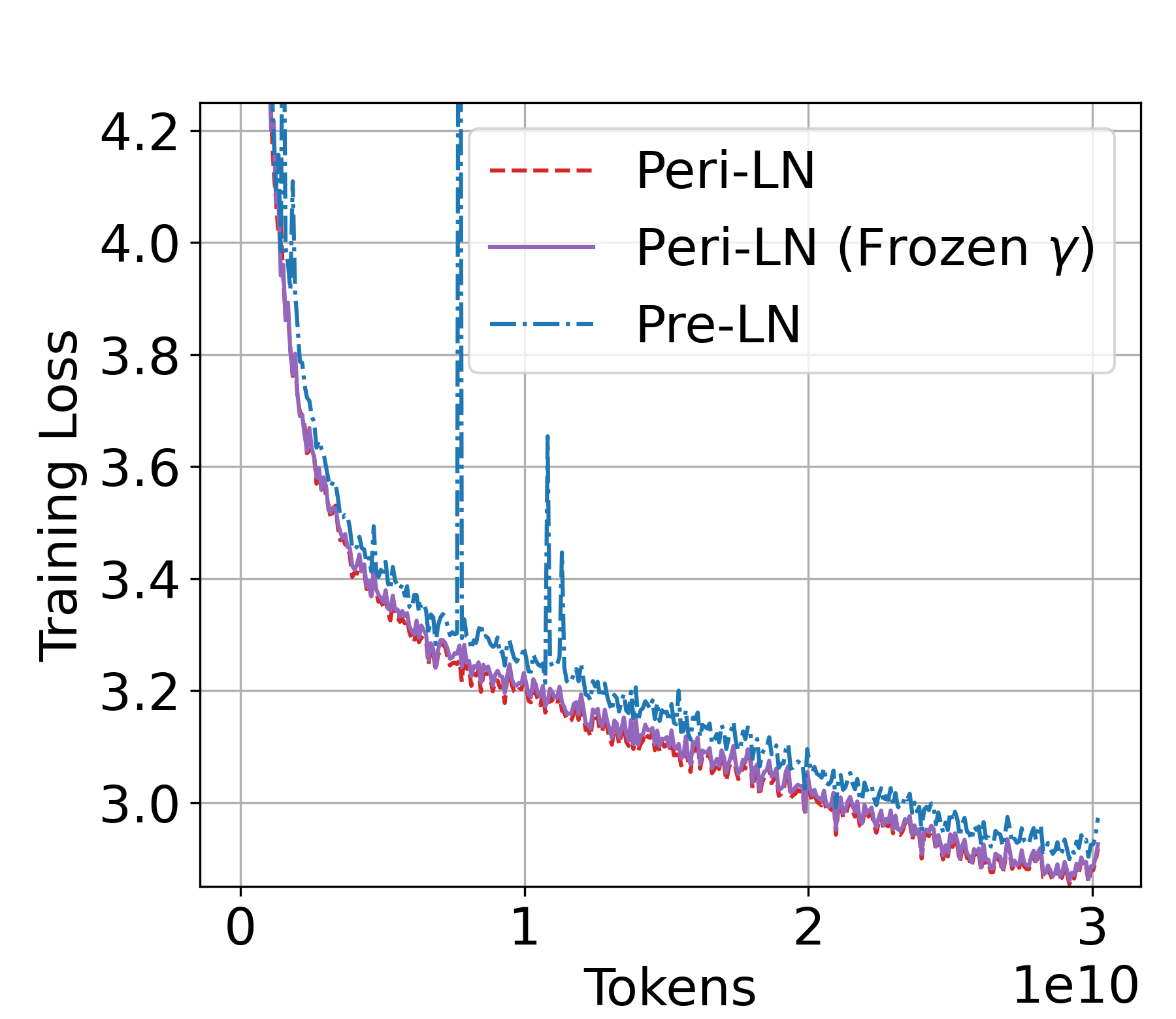}
    \label{fig:fix_gamma_loss}
    }
    \subfigure[Loss in the final $5$B token interval]
    {
    \includegraphics[width=.3\linewidth,
    trim=0pt 0pt 0pt 25pt,clip]{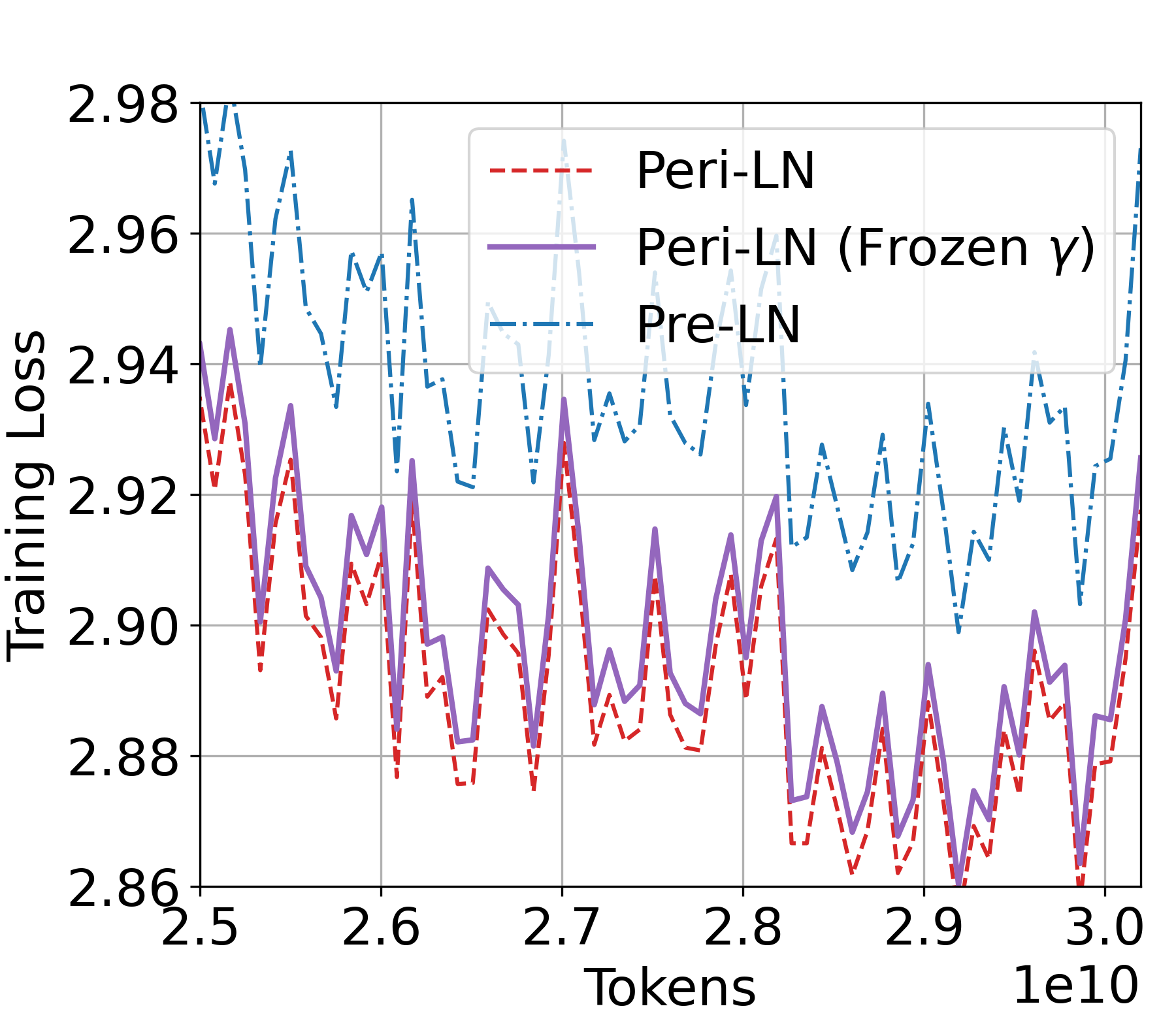}
    \label{fig:fix_gamma_loss_zoom}
    }
    \subfigure[Gradient-norm]
    {
    \includegraphics[width=.3\linewidth,
    trim=0pt 0pt 0pt 25pt,clip]{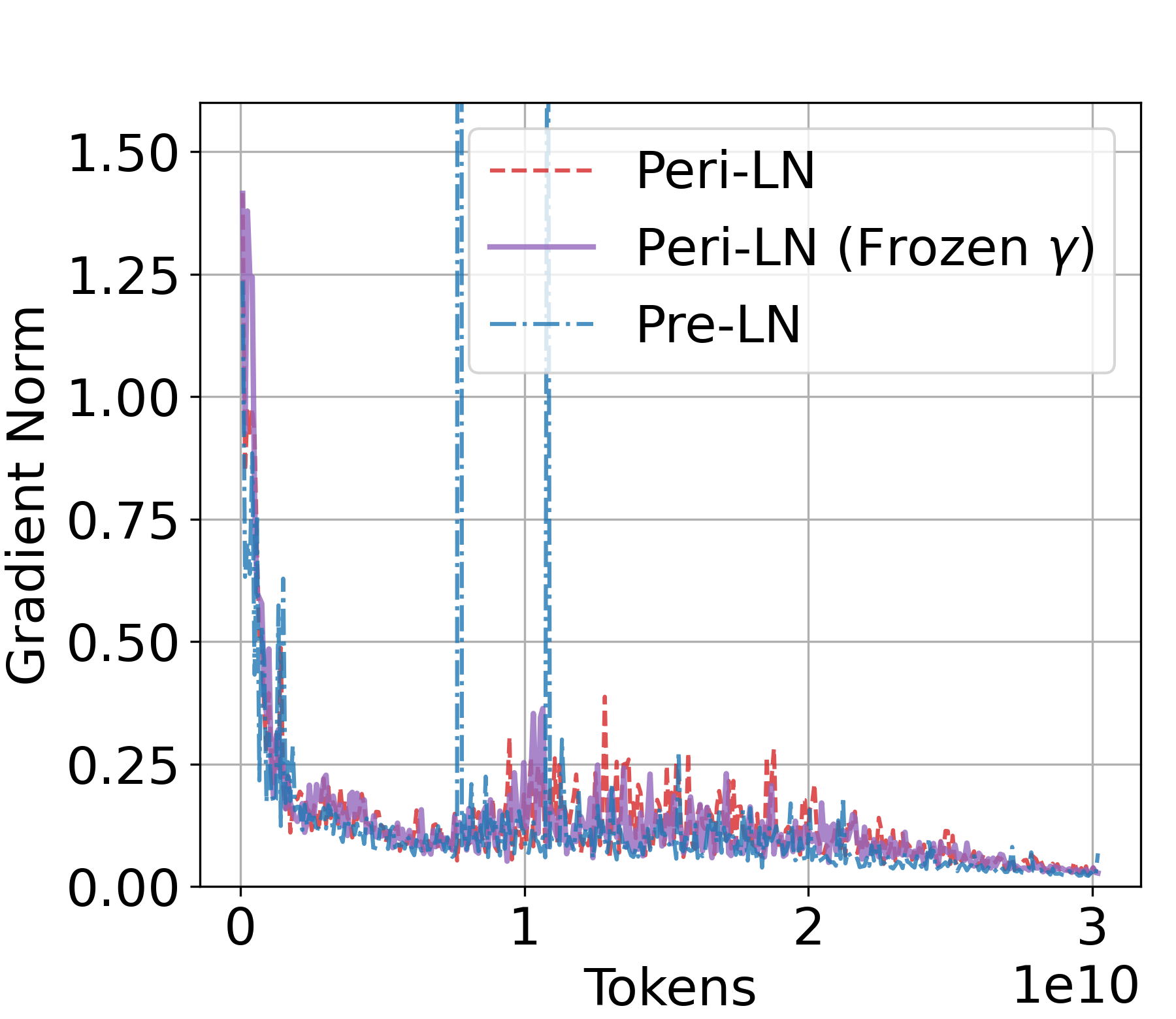}
    \label{fig:fix_gamma_gradnorm}
    }
    \vskip -0.15in
    \caption{
    Freezing learnable parameter $\gamma$ of output normalization layer in Peri-LN. we set $\gamma$ to its initial value of $1$ and keep it fixed.
    }
    \label{fig:frozen_gamma}
\vskip -0.1in
\end{figure*}

\begin{figure*}[t]
    \centering
    \subfigure[After $30$B tokens training]
    {
    \includegraphics[width=0.2\linewidth]{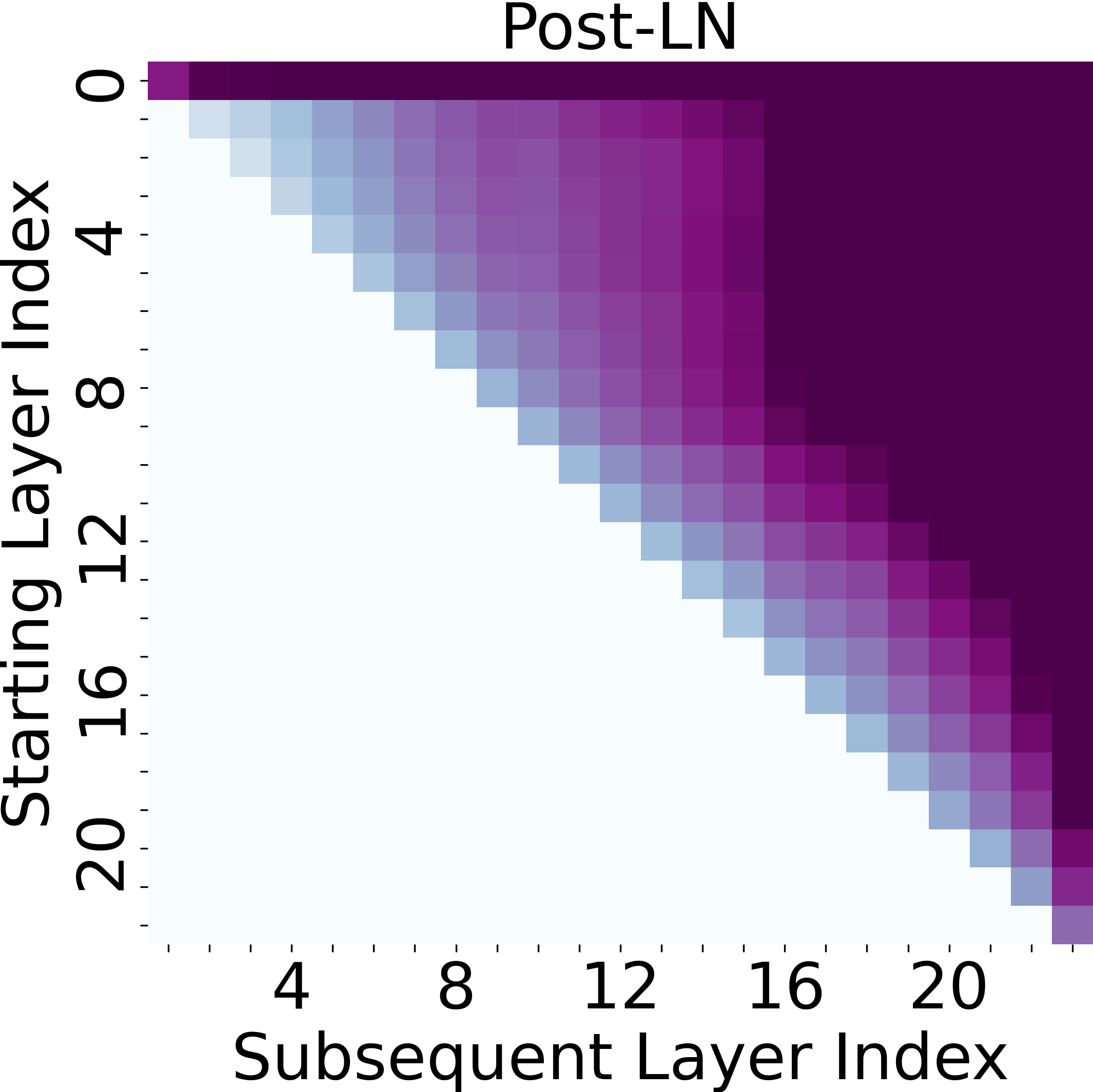}
    \includegraphics[width=0.2\linewidth]{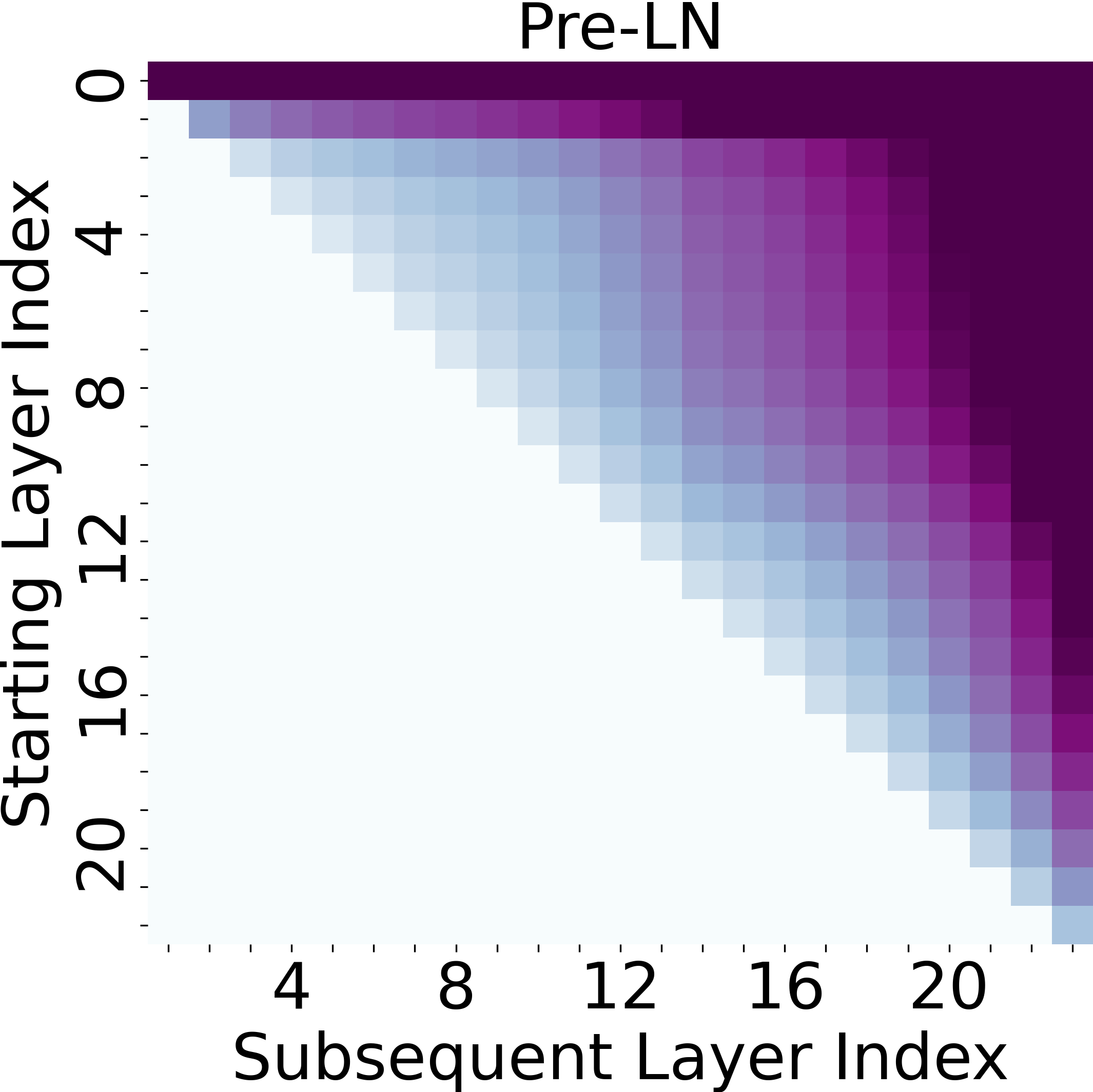}
    \includegraphics[width=0.258\linewidth]{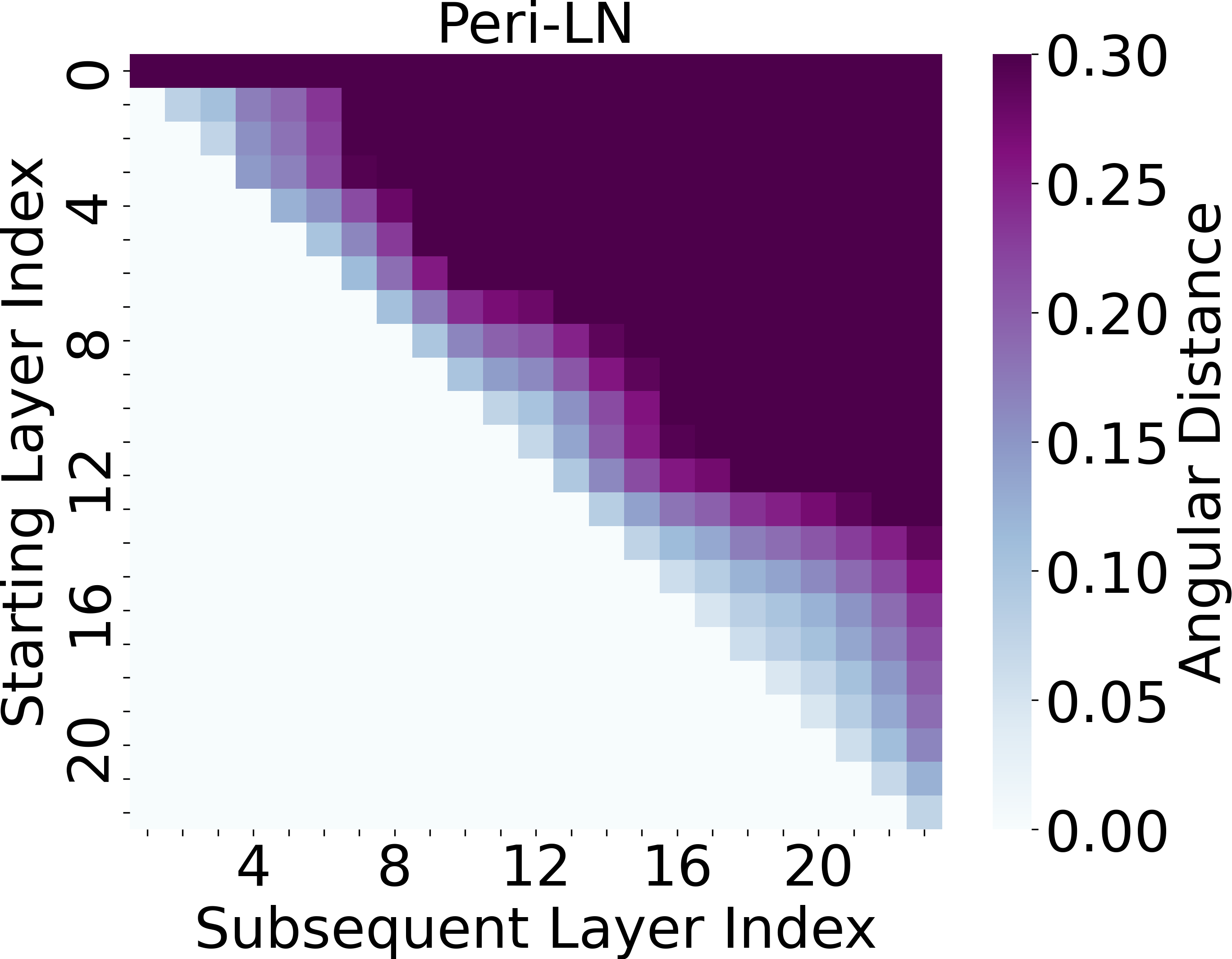} \label{fig:pattern_hidden_state_fin}
    }
    \subfigure[Learnable scale $\gamma$ in Output-LN]
    {
    \includegraphics[width=0.27\linewidth]{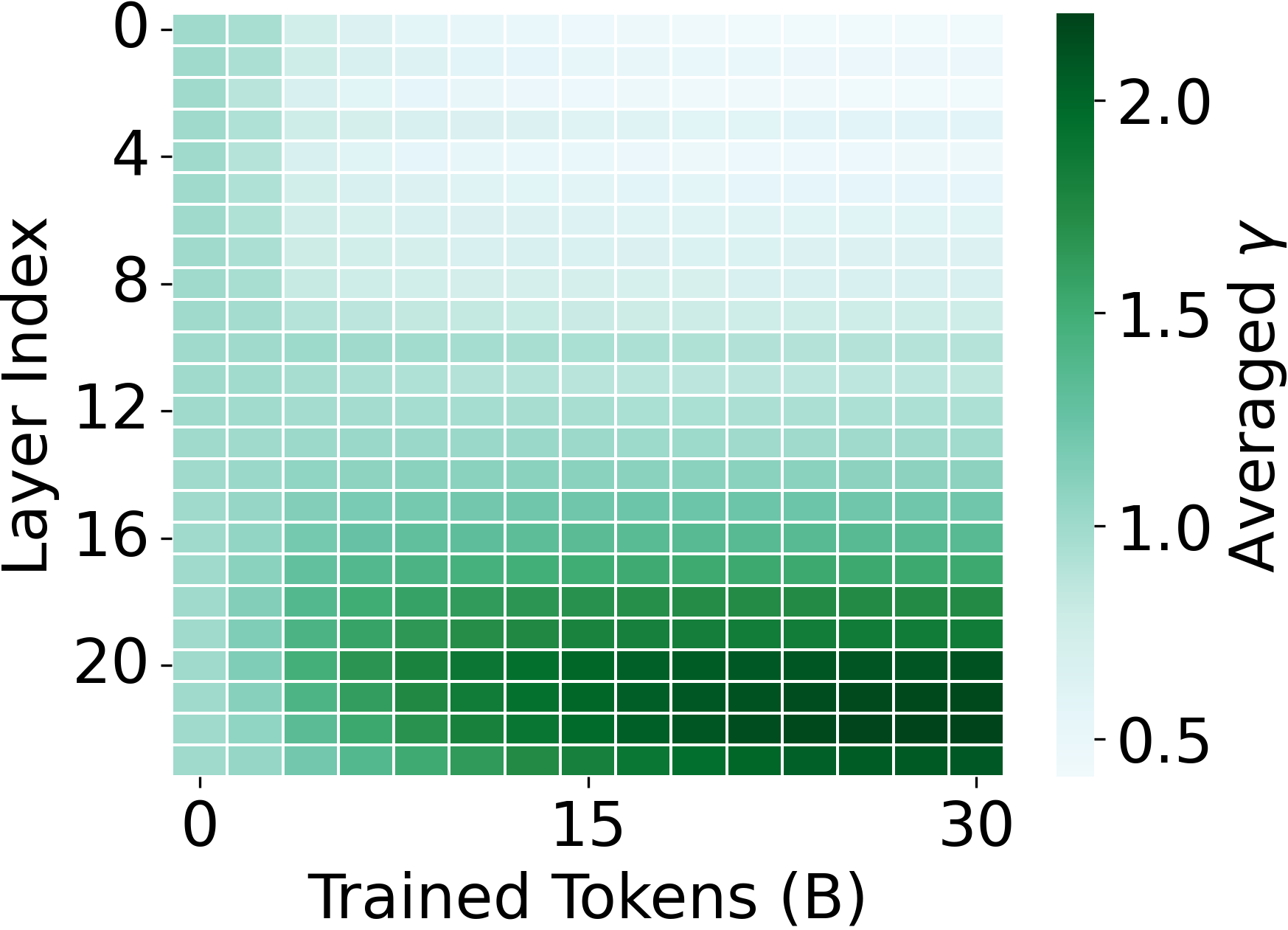} 
    \label{fig:scale_gamma}
    }    
    \vskip -0.15in
    \caption{Angular distance between hidden states after training. Fig. \ref{fig:scale_gamma} monitor $\gamma$ of every Output-LN in Peri-LN during training.}
    \label{fig:pattern_hidden_state}
    \vskip -0.1in
\end{figure*}

\subsection{Layer-wise Gradient Norm \& Variance} \label{subsec:grad_norm_var}
Ensuring a uniform gradient flow in large-scale model training is crucial for balanced learning across the entire network \citep{tensorprogram4, tensorprogram6}. As shown in Figure~\ref{fig:layerwise_gradient}, in Post-LN, gradients decrease as they propagate backward through the layers in the final stage of training, which can lead to vanishing gradients in lower-index layers. In Pre-LN, gradients increase as they propagate backward through the layers at initialization, potentially causing explosive gradients in the early phase of training. Both strategies display non-uniform gradient distributions—either vanishing or exploding—at different stages of training. On the other hand, Peri-LN demonstrates a consistent, layer-wise gradient distribution at both initialization and the end of training. By maintaining comparatively uniform gradients with lower variance across layers, Peri-LN avoids the extremes of vanishing or exploding behaviors. This stability is particularly beneficial in deeper architectures, where balanced gradient flow is essential for effective backpropagation.

\subsection{Learnable Parameter $\gamma$ of RMSNorm} \label{subsec:frozengamma}
To investigate the impact of module output normalization on training stability, as proposed in the Proposition~\ref{prop:theory}, we fix the learnable parameter $\gamma$ of RMSNorm to $1$, isolating the effect of normalization. As illustrated in Figure \ref{fig:frozen_gamma}, adding output normalization to each sub-layer suppresses gradient spikes and lowers the loss relative to Transformers that employ only pre-normalization. 
Nonetheless, we also confirm that allowing $\gamma$ to be learnable yields slightly better performance. The trend persists consistently across model scales and random seeds. In this experiment, we omit Peri-LN's embedding layer normalization in order to isolate and evaluate the precise role and benefits of output-LN.

\subsection{Hidden State Representation} \label{subsec:hidden_state_representation}
To assess hidden state redundancy after training, we employ angular distance \citep{mixln}, which quantifies how similar or distinct the layer representations are. As Figure~\ref{fig:pattern_hidden_state_fin} illustrates, Pre-LN produces markedly more redundant hidden states than the other variants by the end of training. We attribute this effect to the exponential growth of the main residual path in Pre-LN, which diminishes the relative contribution of individual sub-layers. In contrast, Peri-LN retains an identity path whose learnable scale begins near 1 and gradually adjusts with depth (Figure~\ref{fig:scale_gamma}), thereby moderating redundancy. These observations highlight the role of module-output normalization in controlling hidden state similarity. All statistics are computed on $256$ random samples from RedPajama-Data-1T \citep{together2023redpajama}. Appendix \ref{appendix:hidden_representations} includes additional figures and initialization comparisons.

\section{Ablation Study}\label{sec:ablation}

\begin{figure*}[t]
\vskip -0.15in
    \centering
    \subfigure[Variance growth by weight decay]
    {
    \includegraphics[width=.3\linewidth,
    trim=0pt 6pt 0pt 0pt,clip
    ]{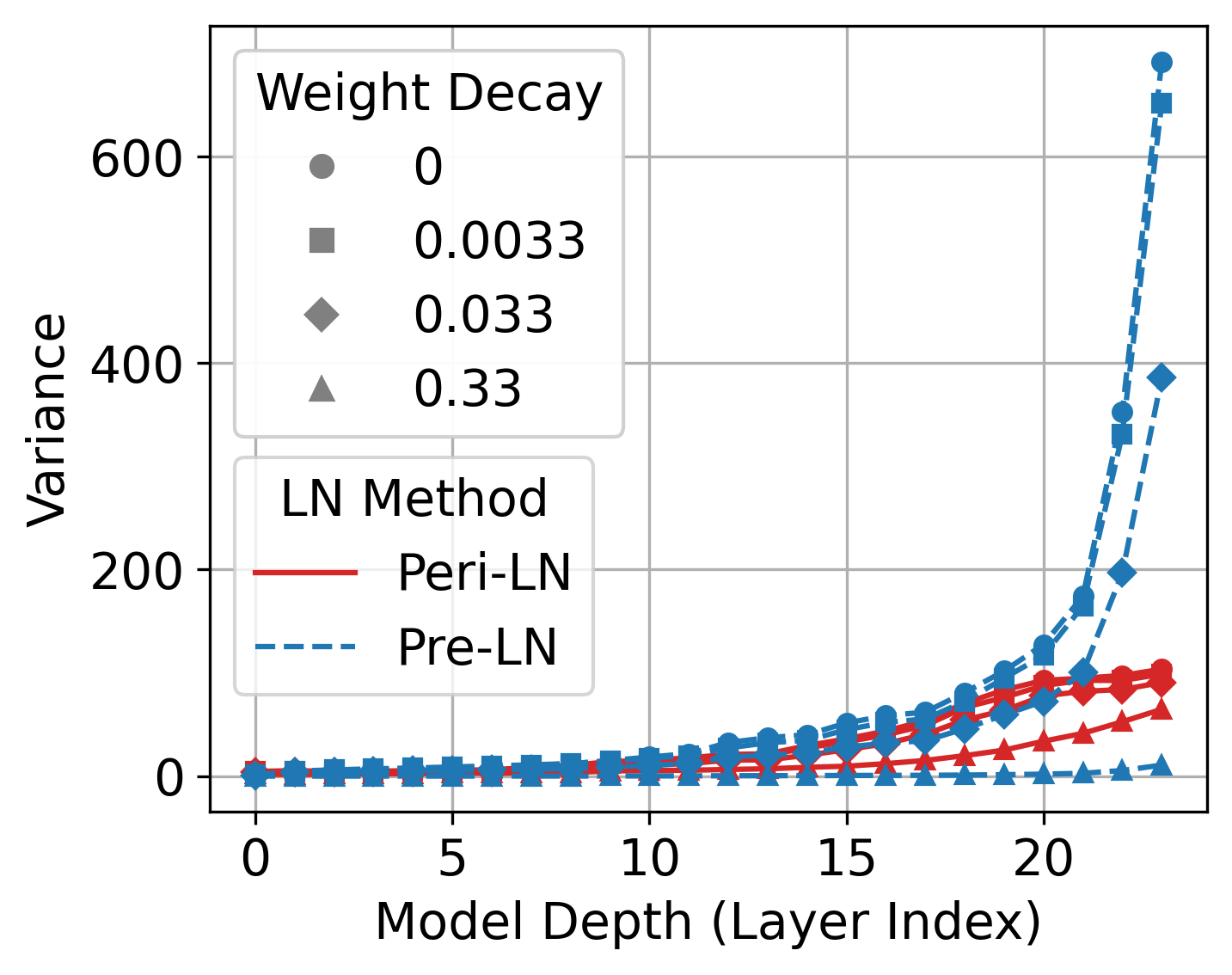}
    \label{fig:decay_variance}
    }
    \subfigure[Training loss by weight decay]
    {
    \includegraphics[width=.28\linewidth
    ]{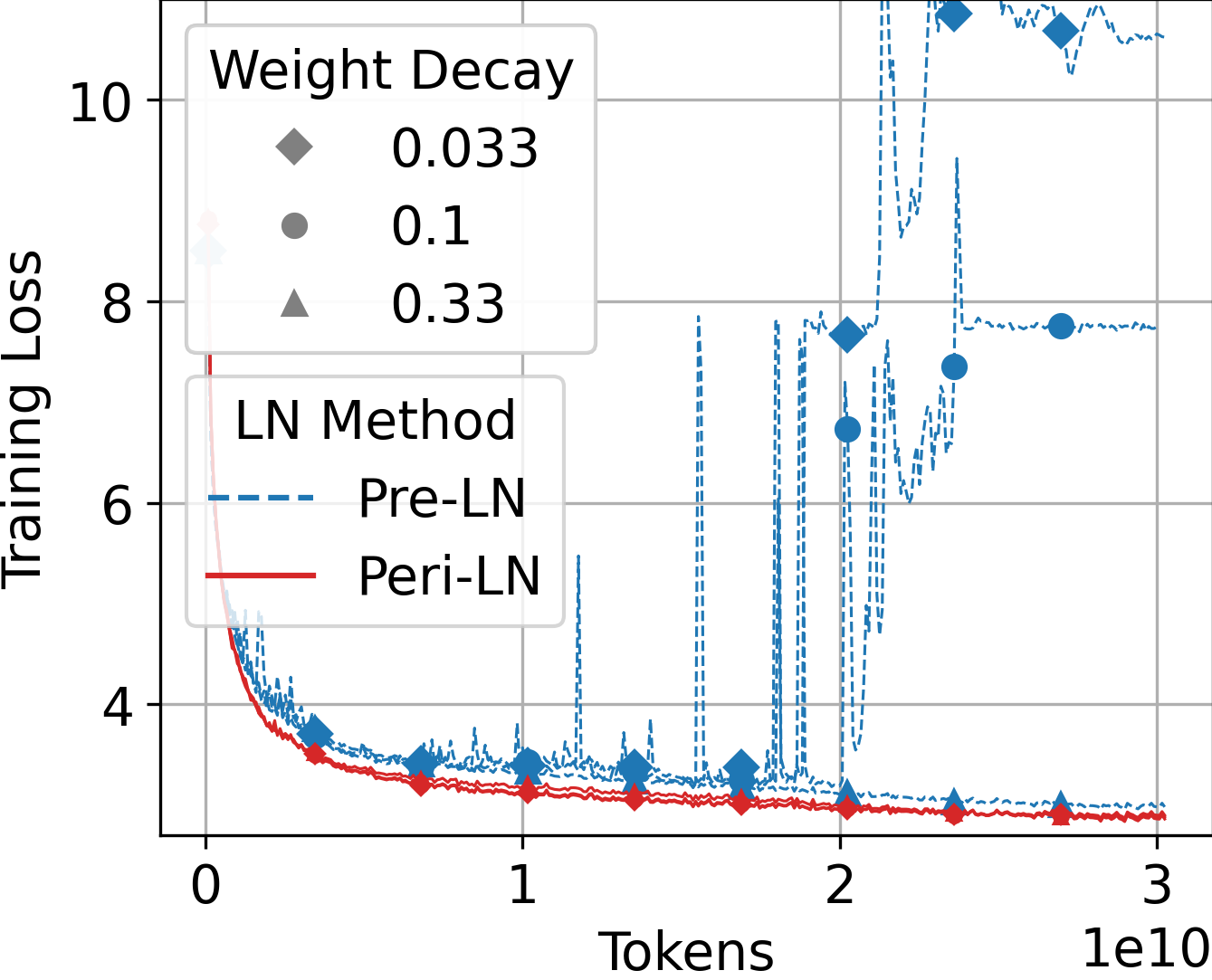}
    \label{fig:decay_loss}
    }
    \subfigure[Variance growth by weight init.]
    {
    \includegraphics[width=.305\linewidth,
    trim=0pt 6pt 0pt 5pt,clip
    ]{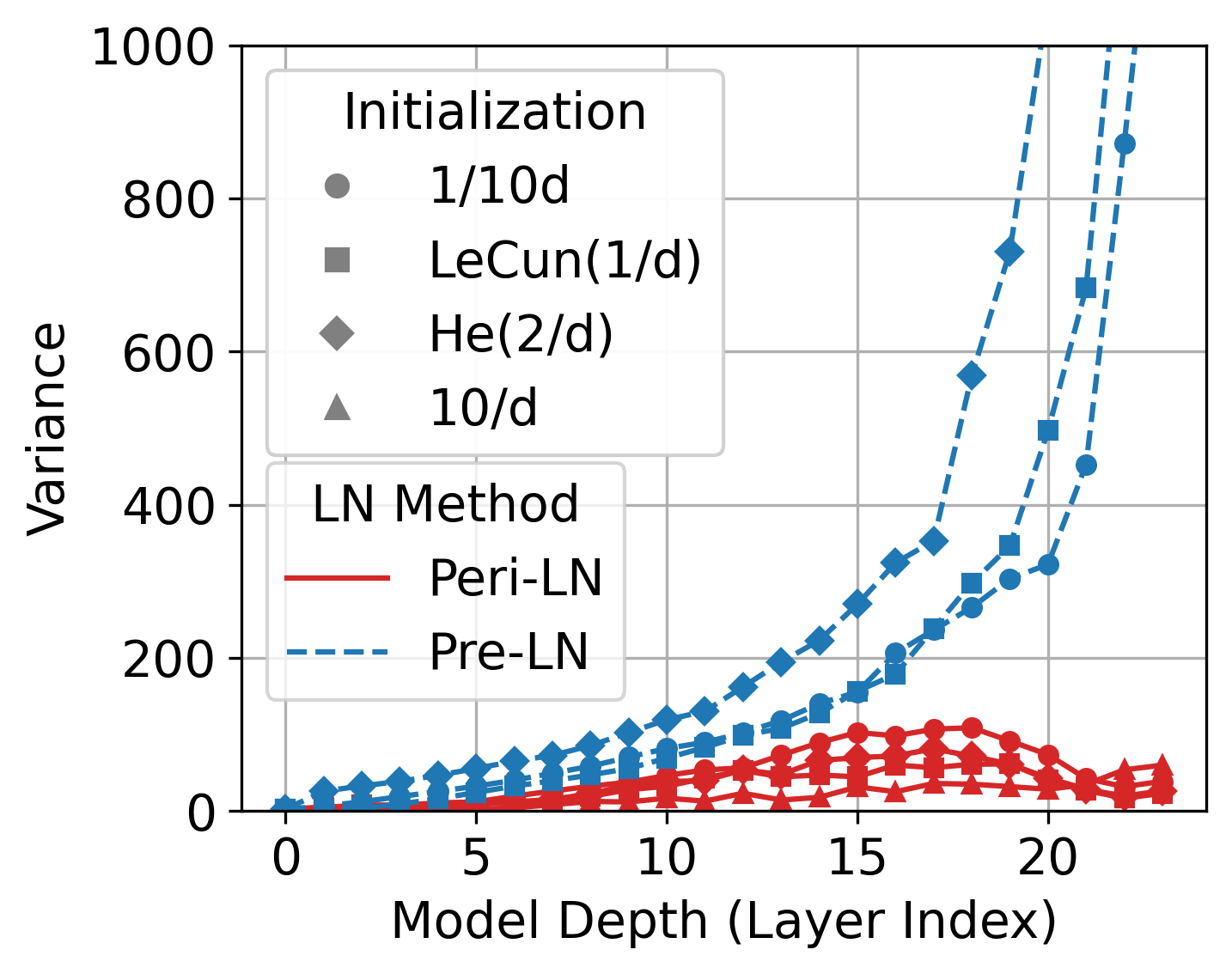}
    \label{fig:init_variance}
    }
    \vskip -0.15in
\caption{Effects of weight decay and initialization (init.) on massive activations. \ref{fig:decay_variance} Strong weight decay relieves the variance explosion in Pre-LN. \ref{fig:decay_loss} Strong  weight decay ($0.33$, which is $10\times$ the baseline.) suppresses Pre-LN divergence. \ref{fig:init_variance} Smaller-scale initialization slightly curbs Pre-LN variance, while Peri-LN remains bounded regardless. $d$ denotes the model’s hidden dimensionality.}
    \label{fig:decay_and_init}
\vskip -0.1in
\end{figure*}

To probe massive activations across conditions, we sweep weight decay coefficient and initialization variance for both Pre- and Peri-LN models, holding other settings fixed. Per-run results and detailed settings are in Appendix \ref{appendix:weight_decay_and_init}.

\subsection{Weight Decay}\label{subsec:decay}
In Figure \ref{fig:decay_variance}, stronger L$2$ regularization markedly lowers the variance curve, confirming that heavier weight decay directly curbs forward-path explosions in Pre-LN. In contrast, the same increase in weight decay reduces Peri-LN’s variance growth only marginally. We take the stable run initialized with seed $3$ (Table \ref{tab:pre-ln-eval-all}) and sweep the weight decay coefficient. Table \ref{tab:appendix_weight_decay} in the Appendix further shows that, irrespective of the presence of massive activations, Peri-LN achieves better performance than Pre-LN.

To further probe stability under varying degrees of massive activation, we replicate the previously divergent run (seed $4$) and repeat the same weight decay sweep. As Figure \ref{fig:decay_loss} demonstrates, raising the weight decay coefficient from the baseline $0.033$ to $0.33$ (a tenfold rise) prevents divergence, providing empirical support for Proposition \ref{prop:theory}. Nevertheless, strong weight decay can stabilize Pre-LN, it still fails to close the performance gap relative to Peri-LN.

\subsection{Weight Initialization}\label{subsec:init}
As the initialization variance increases, the severity of massive activations rises correspondingly for Pre-LN (Figure \ref{fig:init_variance}); at the largest variance, the model diverges outright (Appendix \ref{appendix:weight_init}). Pre-LN therefore displays marked sensitivity to its initial conditions. In contrast, across the same range of ablations, Peri-LN’s loss curves and activation variances shift only marginally. This hyperparameter-insensitive robustness is corroborated by the low downstream standard deviations reported in Table \ref{tab:pre-train}. 


\subsection{Additional Results}
For brevity, we defer an extensive set of supplementary experiments to the appendix. Appendix \ref{appendix:additionalresults} reports the core robustness checks: replacing RMSNorm with LayerNorm, varying sequence lengths, reducing pre-training budgets, and ablating embedding-level normalization. OLMo$2$-style Peri-LN pre-training runs appear in Appendix~\ref{appendix:olmo2}. Stochastic gradient descent (SGD) baselines are summarized in Appendix~\ref{appendix:sgd}. Alternative LN placements, extending Figure~\ref{fig:LN Placement}, are provided in Appendix~\ref{appendix:additional-LN-placement}. Across all settings, the results are consistent with the trends presented in the main Section~\ref{sec:experiments} and~\ref{sec:analysis}, further substantiating our conclusions.

\section{Implications}\label{sec:impications}
This section integrates our findings into practical guidance on variance-driven stability and precision constraints in large-scale Transformers.

\subsection{Mitigating Variance-Driven Instability via Peri-LN}\label{subsec:stability and LN}
Pre-LN, the prevailing normalization strategy, is inherently prone to unchecked growth in activation variance (\S\ref{sec:analysis}), which in turn induces numerical instability during training. Our extensive empirical analysis shows that Peri-LN— which normalizes the outputs of the Attention and MLP sub-layers—markedly curbs this variance and often prevents divergence (\S\ref{sec:experiments} \& \S\ref{sec:ablation}). Proposition \ref{prop:theory} formalizes how excessive variance amplifies gradient norms, clarifying its causal role in destabilizing large-scale pre-training. In Pre-LN, instability is further exacerbated when the statistical conditions assumed at initialization depart markedly from those observed later in training (\S\ref{subsec:peri_ln}). By contrast, Peri-LN alleviates this discrepancy, thereby improving training stability, and delivering additional performance gains.

\subsection{Precision Constraints Imposed by Pre-LN}\label{subsec:precision}
Both Pre-LN and Peri-LN architectures leave the main hidden state path unnormalized, so once large values arise in earlier layers, they persist through to later layers. Consequently, Pre-LN’s additive residual path might generates activations near or beyond the FP16 limit. To gauge how often these values exceed FP16 yet remain within BF16, we track the top-$100$ absolute hidden state values for $3.2$B-parameter Pre-LN and Peri-LN models. In Figure~\ref{fig:precision}, the blue band for Pre-LN surpasses the FP16 maximum bound as early as $0.5$B training tokens whereas Peri-LN (red band) consistently remains below this threshold. This pattern, echoing \citet{massiveactivation}, highlights that choosing FP16 or BF16 is not just a hardware preference but is closely linked to how hidden state magnitudes evolve within the model. Earlier work on OPT \citep{opt}, which was pre-trained using FP16 precision, suggests that the training instabilities they observed were likely exacerbated by numerical overflows and gradient pathologies (Proposition~\ref{prop:theory}) arising when activations exceeded the representable range of FP16.

\begin{figure}
\vskip -0.05in
    \centering
    \includegraphics[width=0.85\linewidth]{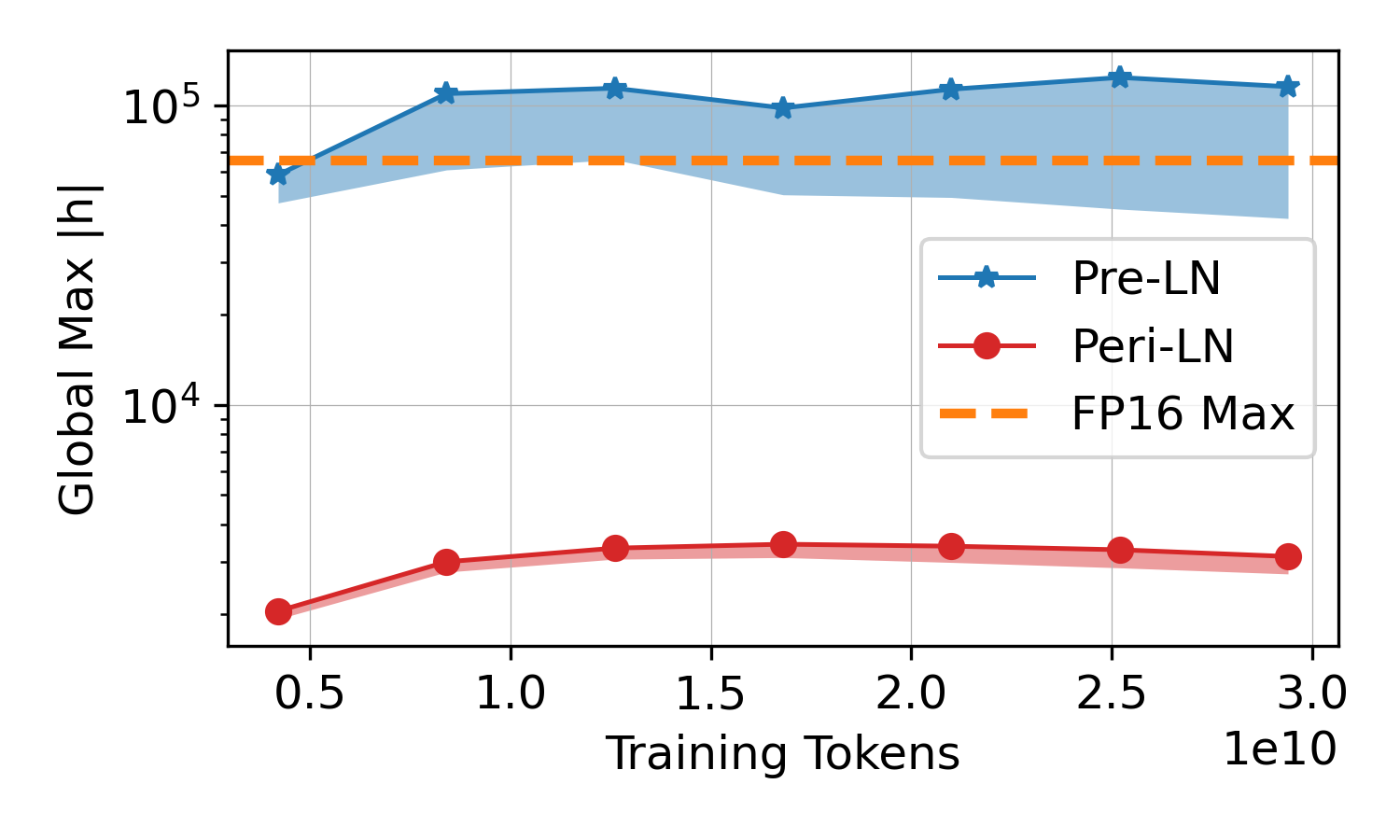}
    \vskip -0.2in
    \caption{Evolution of extreme hidden state absolute magnitudes during training. Colored bands trace the range of the global top-$100$ absolute activations. Pre-LN (blue) quickly surpasses the FP16 representable maximum, while Peri-LN (red) stays below it.}
    \label{fig:precision}
    \vskip -0.1in
\end{figure}

\section{Conclusion}\label{sec:conclusion}
We explore the placement of layer normalization within the Transformer architecture to better understand its role during training. By systematically comparing Post-LN, Pre-LN, and newly termed Peri-LN, we highlight their distinct impacts on stability, final performance, and optimization dynamics. Our findings suggest that placing LN on module outputs in addition to the Pre-LN can help manage large activations while preserving beneficial gradient flow, thereby offering a promising balance for stable optimization. By unifying these approaches under the term \emph{Peri-LN}, we seek to consolidate existing variants and encourage deeper investigation into this underexplored alternative.

\clearpage
\newpage

\section*{Acknowledgements} We thank our colleague Jeongin Bae for inspiring the underlying motivation for this research. We are also grateful to Jung Hyun Lee, Seonghyeon Kim, and Seunghyun Seo for their valuable assistance during the early stages discussions. Finally, we extend our gratitude to Gichang Lee, Lead of the Backbone Mission at NAVER Cloud, for his unwavering support. This work was partly supported by Institute for Information \& communications Technology Technology Planning \& Evaluation(IITP) grant funded by the Korea government(MSIT)(RS-2019-II190075, Artificial Intelligence Graduate School Support Program(KAIST), No.RS-2021-II212068, Artificial Intelligence Innovation Hub, No. RS-2024-00509279, Global AI Frontier Lab)

\section*{Impact Statement}
The rapid advancement of Transformer-based large language models (LLMs) has enabled remarkable breakthroughs in natural language understanding and generation. However, these models also pose significant challenges, including concerns around safety, bias, and the computational cost associated with large-scale training. As LLMs become increasingly integral to various AI applications, ensuring their stability, efficiency, and accessibility remains a critical research focus.

Our work addresses these challenges by proposing a more stable and cost-effective large-scale training methodology. By improving training efficiency and reducing the associated computational overhead, we lower the barrier to entry for organizations seeking to develop or fine-tune foundation models. This democratization of LLM technology fosters broader participation in AI research and development, accelerating innovation while mitigating concerns over resource concentration in a few major players. Given the growing industry focus on optimizing LLM deployment costs, our contributions are particularly relevant in the current AI research landscape.

Improving the cost-effectiveness of large-scale training simultaneously lowers AI’s environmental footprint by reducing the vast energy consumption and carbon emissions inherent in state-of-the-art LLM development. This efficiency not only aligns with global sustainability goals but also enables smaller research labs and academic groups to pursue cutting-edge AI without prohibitive resource demands, fostering a more inclusive and responsible ecosystem.


\vspace{0.5in}

\bibliography{ref}
\bibliographystyle{icml2025}

\newpage
\appendix
\onecolumn
\section{Related Work} \label{appendix:relatedwork}
\paragraph{Activation Dynamics in Large Language Models.}
Studies on the distribution and magnitude of activations in deep neural networks have revealed that certain outlier features can significantly affect model behavior and efficiency. \citet{llm.int8} examined Transformer architectures, highlighting how specific feature dimensions may exhibit unusually large values (outliers) that disrupt quantization and overall system performance. Extending this line of work, \citet{massiveactivation} identified the occurrence of ``massive activations''---extremely large activation values that persist across multiple layers. Unlike standard outliers, these massive activations remain relatively invariant to different inputs, effectively functioning as implicit bias terms in large language models (LLMs). Notably, such extreme values can skew the self-attention mechanism, causing the model to attend disproportionately to certain tokens. These observations demonstrate that even with standard normalization layers in place, hidden biases may linger in internal representations, underscoring the importance of deeper analyses of activation behavior in LLMs.

\paragraph{Variance Control and Normalization in Convolutional Networks.}
The interplay between activation variance and training stability has also been extensively explored in convolutional neural networks (CNNs). \citet{cnnvariance} showed that Batch Normalization (BN) stabilizes the training of residual networks by effectively downscaling activation variance in the residual branches, thereby improving gradient behavior. However, BN imposes certain constraints, such as dependence on batch size and additional computational overhead for estimating batch statistics. Consequently, several normalization-free or alternative normalization approaches have been investigated. For instance, \citet{resnet_scale} introduced ``Normalizer-Free ResNets,'' which manage activation variance through learnable scaling parameters. This approach achieved competitive performance without relying on BN, highlighting the critical role of effective variance control in fostering stable optimization and strong generalization in CNNs.

\paragraph{Layer Normalization in Transformers.}
Training stability in Transformer architectures is closely tied to the choice and placement of layer normalization (LN). \citet{onlayer} reported that Transformers employing a Post-LN structure often suffer from gradient instabilities at initialization, requiring a careful learning-rate warm-up phase to mitigate these issues. In contrast, Pre-LN helps maintain more stable gradients during the early stages of training. However, \citet{transformersgetstable} showed that while Post-LN can lead to vanishing or exploding gradients in deep Transformers, Pre-LN may induce hyperbolic gradient growth. These findings illustrate the nuanced trade-offs of normalization placement and draw parallels to earlier CNN studies, where careful management of activation variance proved essential for stable deep learning. \citet{ding2021cogview} introduced Sandwich-LN in the vision domain for the first time, yet they paid little attention to the structural characteristics and differences that distinguish one LN placement from another. In language domain, several major open-source language models (e.g., Olmo$2$ \citep{olmo2}, Gemma$2$ \citep{gemma2}, and Gemma$3$ \citep{gemma3}) already employ a Peri-LN-like structure (see Section~\ref{sec:ln_in_transformer}). Nevertheless, previous technical reports have not explained why this design might be advantageous compared with the more widely studied Pre-LN and Post-LN. By investigating Peri-LN in detail, we aim to highlight the structural benefits that appear to underlie its empirical success in these implementations.

\paragraph{Gradient Propagation and Depth Scaling}
Ensuring consistent gradient propagation across many layers is pivotal for stable training in very deep models. \citet{tensorprogram4} (Tensor Programs IV) introduced the concept of Maximal Update Parametrization ($\mu$P) in the infinite-width regime to preserve feature learning, preventing gradients from collapsing into kernel-like dynamics. Building on this, \citet{tensorprogram6} (Tensor Programs VI) proposed Depth-$\mu$P, which scales residual branches and learning rates according to network depth. Their theoretical analysis indicates that improper depth-dependent scaling leads to vanishing or exploding gradients, ultimately diminishing the diversity of learned representations. These insights highlight the necessity for principled scaling strategies and careful initialization to maintain robust gradient flow in deep architectures.

\smallskip
\noindent
\paragraph{Summary.}
Taken together, these studies underscore the importance of managing activation variance and hidden biases to achieve stable training and expressive internal representations in modern deep networks. In Transformer-based models, normalization choice and placement---such as Post-LN, Pre-LN, or other variants---play a significant role in controlling gradient dynamics and overall performance. While Post-LN and Pre-LN have received significant attention, we focus on a comparative analysis that includes \textit{Peri-LN}, an alternative normalization placement that has thus far been underexplored but holds potential for enhancing training stability and model performance.

\newpage
\section{Proposition~\ref{prop:theory} of Post-LN}\label{appendix:theory_postln}

\begin{proposition}    
\medskip
\noindent
\textbf{Post-LN (vanishing gradient).} Consider the following sequence of operations:
\begin{equation}
a = \mathrm{MLP}(x), o = x + a, \tilde{o} = \mathrm{Norm}(o),
\end{equation}
then
\begin{equation}
\left\lVert \frac{\partial \mathcal{L}(\tilde{o})}{\partial W_{i,j}^{(2)}} \right\rVert 
\;\le\; \frac{4\,\gamma\,\sqrt{D}\,\|h\|}{\|x + a\|}, 
\end{equation}
where $h := \mathrm{ReLU}\left(x W^{(1)} + b^{(1)}\right)$. Since Post-LN normalizes the hidden state after each residual addition along the main path, the activation norm $|h|$ tends to remain relatively moderate. As a result, in Post-LN, the gradient scale is less sensitive to the magnitude of activations and more significantly influenced by model depth, as previously analyzed by \citet{onlayer} and \citet{transformersgetstable}, leading to vanishing gradients as depth increases.

\vskip -0.1in
\end{proposition}


\section{Proof of Theoretical Insight} \label{appendix:theory_proof}

To support the claim that Peri-LN enhances the stability of training in such cases, we analyze the gradient norm in the final layer. For simplicity, we use RMSNorm and ReLU here. Here, we assume that $\gamma$, the scaling parameter used in RMSNorm, is positive, and we empirically verified that it remains strictly positive across models of all sizes during training.

\begin{proposition}
Consider the following sequence of operations:
\begin{align*}
    & \quad \tilde{x} = \mathrm{RMSNorm}(x), \\
    & \quad a = \mathrm{ReLU}(\tilde{x} W^{(1)} + b^{(1)})W^{(2)} + b^{(2)}, \\
    & \quad o = x + a.
\end{align*}
Then,
\begin{equation}
    \frac{\partial \mathcal{L}(o)}{\partial W_{i,j}^{(2)}} = h_{i} (\hat{p}_{j} - y_{j}),  
\end{equation}
where $h := \mathrm{ReLU}\left(x W^{(1)} + b^{(1)}\right)$, $\hat{p} := \mathrm{softmax}(o)$, and $y$ is the label (one-hot vector).
\end{proposition}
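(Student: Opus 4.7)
The plan is to isolate how $W^{(2)}$ enters the output $o$, invoke the standard softmax/cross-entropy gradient identity, and combine the two via the chain rule. The key structural simplification is that the residual branch $x$ does not depend on $W^{(2)}$, so differentiating $o = x + a$ with respect to $W^{(2)}_{i,j}$ reduces to differentiating $a$ alone.

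First, I would write the second linear layer in index form, $a_k = \sum_m h_m W^{(2)}_{m,k} + b^{(2)}_k$, where $h$ denotes the post-ReLU hidden activations. (I read the statement's $h := \mathrm{ReLU}(x W^{(1)} + b^{(1)})$ as a harmless typo for $\mathrm{ReLU}(\tilde x W^{(1)} + b^{(1)})$; this does not affect the derivation.) This immediately gives $\partial a_k/\partial W^{(2)}_{i,j} = h_i \delta_{jk}$, and since $x$ is independent of $W^{(2)}$ the same equality holds for $\partial o_k/\partial W^{(2)}_{i,j}$.

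Second, I would assume $\mathcal{L}$ is the cross-entropy $\mathcal{L}(o) = -\sum_k y_k \log \hat p_k$ with $\hat p = \mathrm{softmax}(o)$; this assumption is implicit in the statement, since $y$ is introduced as a one-hot label and $\hat p$ as the softmax of $o$. A short computation using $\partial \hat p_k/\partial o_\ell = \hat p_k(\delta_{k\ell} - \hat p_\ell)$ together with $\sum_k y_k = 1$ produces the familiar identity $\partial \mathcal{L}/\partial o_\ell = \hat p_\ell - y_\ell$, which I would quote rather than rederive in detail.

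Third, chaining the two pieces gives
\begin{equation*}
\frac{\partial \mathcal{L}}{\partial W^{(2)}_{i,j}} \;=\; \sum_k \frac{\partial \mathcal{L}}{\partial o_k}\,\frac{\partial o_k}{\partial W^{(2)}_{i,j}} \;=\; \sum_k (\hat p_k - y_k)\, h_i \delta_{jk} \;=\; h_i(\hat p_j - y_j),
\end{equation*}
which is the claim. There is no genuine obstacle here: the calculation is routine once the loss is fixed. The only point worth flagging is that the loss is not spelled out in the proposition, so one must recognize the form $\hat p_j - y_j$ as the signature of softmax plus cross-entropy and adopt that loss; the RMSNorm and the residual $x$ play no role in the final expression because $W^{(2)}$ only influences $o$ through the post-ReLU path.
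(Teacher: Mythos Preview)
Your proposal is correct and follows essentially the same route as the paper: decompose via the chain rule, invoke the standard softmax/cross-entropy identity $\partial \mathcal{L}/\partial o_k = \hat p_k - y_k$, compute $\partial a_k/\partial W^{(2)}_{i,j} = h_i \delta_{jk}$ from the index form of the second linear layer, and note that the residual $x$ contributes nothing. The paper presents the same three ingredients in matrix/vector notation (writing $\partial o/\partial a = I$ separately), but the argument is identical in substance; your remarks about the implicit choice of loss and the $x$ vs.\ $\tilde x$ typo in the definition of $h$ are apt.
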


\begin{proof}
By the chain rule,
\begin{equation}
\frac{\partial \mathcal{L}(o)}{\partial W_{i,j}^{(2)}} = 
    \underbrace{\frac{\partial \mathcal{L}(o)}{\partial o}}_{\text{(a)}: 1 \times D} \times 
    \underbrace{\frac{\partial o}{\partial a}}_{\text{(b)}: D \times D} \times 
    \underbrace{\frac{\partial a}{\partial W_{i,j}^{(2)}}}_{\text{(c)}: D \times 1}.    
\end{equation}

(a) It is known that 
\begin{equation}
    \frac{\partial \mathcal{L}(o)}{\partial o_{k}} = \hat{p}_{k} - y_{k}.
\end{equation}
So,
\begin{equation}
\frac{\partial \mathcal{L}(o)}{\partial o} =   
\begin{bmatrix}
    \hat{p}_{1} - y_{1} & \hat{p}_{2} - y_{2} & \cdots & \hat{p}_{D} - y_{D}
    \end{bmatrix}.
\end{equation}

(b) Since $o = x + a$,
\begin{equation}
    \frac{\partial o}{\partial a} = I.
\end{equation}

(c) Recall that 
\begin{equation}
    a := \mathrm{ReLU}(\tilde{x} W^{(1)} + b^{(1)}) W^{(2)} + b^{(2)}.
\end{equation}

For convenience, let
\begin{equation}
    h := \mathrm{ReLU}(\tilde{x} W^{(1)} + b^{(1)}).
\end{equation}

Then, we have
\begin{equation}
    \frac{\partial a_{k}}{\partial W_{i,j}^{(2)}} = 
    \frac{\partial}{\partial W_{i,j}^{(2)}} 
    \left( \sum_{p=1}^{H} h_{p} W_{p,k}^{(2)} + b_{k}^{(2)} \right) = h_{i} \, \delta_{k,j}.
\end{equation}

In vector representation,
\begin{equation}
    \frac{\partial a}{\partial W_{i,j}^{(2)}} =
    \begin{bmatrix}
    0 & \cdots & h_{i} & \cdots & 0
    \end{bmatrix}^\top,
\end{equation}
where the only nonzero entry is in the $j$-th component.

Thus, by putting these all together,
\begin{equation}
    \frac{\partial \mathcal{L}(o)}{\partial W_{i,j}^{(2)}} = h_{i} (\hat{p}_{j} - y_{j}).    
\end{equation}
\end{proof}

\begin{proposition}
Consider the following sequence of operations:
\begin{align*}
    & \quad \tilde{x} = \mathrm{RMSNorm}(x), \\
    & \quad a = \mathrm{ReLU}(\tilde{x} W^{(1)} + b^{(1)})W^{(2)} + b^{(2)}, \\
    & \quad \tilde{a} = \mathrm{RMSNorm}(a), \\
    & \quad o = x + \tilde{a}.
\end{align*}
Then,
\begin{equation}
    \left\lVert \frac{\partial \mathcal{L}(o)}{\partial W_{i,j}^{(2)}} \right\rVert 
\leq  
\frac{4\gamma \sqrt{D} \| h \|}{\|a\|}, 
\end{equation}
where $\gamma$ is the scaling parameter used in $\mathrm{RMSNorm}(\cdot)$, $D$ is the dimensionality, and $h := \mathrm{ReLU}\left(x W^{(1)} + b^{(1)}\right)$.
\end{proposition}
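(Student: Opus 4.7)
The plan is to mirror the chain-rule expansion of Proposition~1, inserting one additional Jacobian factor for the new $\mathrm{RMSNorm}$ applied to $a$:
\begin{equation*}
\frac{\partial \mathcal{L}(o)}{\partial W^{(2)}_{i,j}} = \frac{\partial \mathcal{L}}{\partial o}\,\frac{\partial o}{\partial \tilde a}\,\frac{\partial \tilde a}{\partial a}\,\frac{\partial a}{\partial W^{(2)}_{i,j}}.
\end{equation*}
Three of the four factors carry over verbatim from the previous proof: $\partial \mathcal{L}/\partial o$ is the row vector with entries $\hat p_k - y_k$; $\partial o/\partial \tilde a = I$, since $o = x + \tilde a$; and $\partial a/\partial W^{(2)}_{i,j}$ is the column vector $h_i\,e_j$, since the subnetwork feeding $a$ is unchanged. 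So the only genuinely new piece of work is the Jacobian of the $\mathrm{RMSNorm}$ layer, $\partial \tilde a/\partial a$.

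For that Jacobian I would set $r := \|a\|/\sqrt{D}$, so $\tilde a = (\gamma/r)\,a$, and differentiate directly to obtain
\begin{equation*}
\frac{\partial \tilde a_k}{\partial a_l} = \frac{\gamma}{r}\left(\delta_{kl} - \frac{a_k a_l}{\|a\|^2}\right),
\end{equation*}
using $Dr^2 = \|a\|^2$. In matrix form this is $(\gamma/r)(I - aa^{\top}/\|a\|^2)$; the parenthesized matrix is the orthogonal projection onto $a^{\perp}$, so its operator norm equals exactly $1$, giving $\|\partial\tilde a/\partial a\|_{\mathrm{op}} = \gamma\sqrt{D}/\|a\|$.

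Combining the four factors yields the closed form
\begin{equation*}
\frac{\partial \mathcal{L}(o)}{\partial W^{(2)}_{i,j}} = \frac{\gamma\,h_i}{r}\left[(\hat p_j - y_j) - \frac{\langle \hat p - y,\,a\rangle\,a_j}{\|a\|^2}\right],
\end{equation*}
after which I would bound the two bracketed terms separately. For the first, $|\hat p_j - y_j| \le \|\hat p - y\|_1 \le 2$ since $\hat p$ is a probability vector and $y$ is one-hot. For the second, Cauchy--Schwarz together with $|a_j|\le\|a\|$ and $\|\hat p - y\|_2 \le \|\hat p - y\|_1 \le 2$ gives the same bound of $2$. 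Summing and using $|h_i| \le \|h\|$ together with $\gamma/r = \gamma\sqrt{D}/\|a\|$ then delivers the claimed $4\gamma\sqrt{D}\|h\|/\|a\|$.

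The main obstacle I expect is the $\mathrm{RMSNorm}$ Jacobian step: one has to derive the explicit $aa^{\top}$ correction cleanly and then recognize it as a scaled projection so that the operator norm is exactly $\gamma\sqrt{D}/\|a\|$, rather than the larger estimate that a naive entrywise or Frobenius bound would produce (which would contribute an additional power of $\sqrt{D}$ and spoil the stated inequality). Everything else is either borrowed from Proposition~1 or is routine triangle-inequality bookkeeping, with the constant $4$ arising from applying the loose $\ell^{1}$ bound $\|\hat p - y\|_1 \le 2$ to each of the two bracketed terms.
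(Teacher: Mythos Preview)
Your argument is correct and follows the same chain-rule decomposition as the paper, but you handle the $\mathrm{RMSNorm}$ Jacobian differently. You set $\epsilon=0$, recognise $\partial\tilde a/\partial a$ as $(\gamma\sqrt{D}/\|a\|)$ times an orthogonal projection, and thereby obtain the exact operator norm $\gamma\sqrt{D}/\|a\|$; you then expand the full product into a scalar closed form and bound its two bracketed terms by $2$ each, recovering the constant $4$. The paper instead keeps $\epsilon$, writes the Jacobian as $A-B$ with $A=\tfrac{\gamma}{\sqrt{\alpha+\epsilon}}I$ and $B$ the rank-one correction, and applies the triangle inequality $\|A-B\|\le\|A\|+\|B\|\le 2\gamma\sqrt{D}/\|a\|$; it then simply multiplies the four operator-norm bounds $2\cdot 1\cdot \tfrac{2\gamma\sqrt{D}}{\|a\|}\cdot\|h\|$. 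So your Jacobian estimate is sharper by a factor of $2$, but you give that factor back in the termwise bound on the closed form; conversely, the paper's Jacobian bound is looser but it never needs your explicit scalar expansion. Your worry that a ``naive'' bound on the Jacobian would introduce an extra $\sqrt{D}$ is unfounded here: the paper's triangle-inequality route costs only a constant factor, not a dimensional one. (Your argument also extends to $\epsilon>0$ with no change: the bracketed matrix becomes $I - aa^\top/(\|a\|^2+D\epsilon)$, still a contraction, and the scalar prefactor $\gamma/\sqrt{\alpha+\epsilon}\le \gamma\sqrt{D}/\|a\|$.)
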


\begin{proof}
By the chain rule,
\begin{equation}
\frac{\partial \mathcal{L}(o)}{\partial W_{i,j}^{(2)}} = 
    \underbrace{\frac{\partial \mathcal{L}(o)}{\partial o}}_{\text{(a)}: 1 \times D} \times 
    \underbrace{\frac{\partial o}{\partial \tilde{a}}}_{\text{(b)}: D \times D} \times 
    \underbrace{\frac{\partial \tilde{a}}{\partial a}}_{\text{(c)}: D \times D} \times 
    \underbrace{\frac{\partial a}{\partial W_{i,j}^{(2)}}}_{\text{(d)}: D \times 1}.    
\end{equation}

(a) We have
\begin{equation}
    \left\| \frac{\partial \mathcal{L}(o)}{\partial o} \right\| = \| \hat{p} - y \| \leq \| \hat{p} \| + \| y \| = 2.    
\end{equation}

(b) We also have
\begin{equation}
    \left\| \frac{\partial o}{\partial \tilde{a}} \right\| = \| I \| = 1.
\end{equation}

(c) Recall that
\begin{equation}
    \tilde{a} := \mathrm{RMSNorm}(a) = \gamma \cdot \frac{a}{\sqrt{\frac{1}{D} \sum_{k=1}^D a_{k}^2 + \epsilon}}.    
\end{equation}

Then, $\frac{\partial \tilde{a}}{\partial a}$ is the Jacobian matrix $J$ of $\mathrm{RMSNorm}(\cdot)$. For brevity, let
\begin{equation}
    \alpha := \frac{1}{D} \sum_{k=1}^D (a_{k})^2.
v\end{equation}

Then,
\begin{align}
J_{p,q} = \frac{\partial \tilde{a}_{p}}{\partial a_{q}} 
&= \gamma \cdot \frac{\partial}{\partial a_{q}} \left( \frac{a_{p}}{\sqrt{\alpha + \epsilon}} \right) \\
&= \gamma \cdot \frac{1}{\sqrt{\alpha + \epsilon}} \frac{\partial a_{p}}{\partial a_{q}} 
+ \gamma \cdot a_{p} \frac{\partial}{\partial a_{q}} \left( \frac{1}{\sqrt{\alpha + \epsilon}} \right) \\
&= \gamma \cdot \frac{1}{\sqrt{\alpha + \epsilon}} \delta_{p,q} 
- \gamma \cdot \frac{a_{p} a_{q}}{D (\alpha + \epsilon)^{3/2}}.
\end{align}

In matrix representation,
\begin{equation}
    J = \underbrace{\frac{\gamma}{\sqrt{\alpha + \varepsilon}} I}_{A} 
    - \underbrace{\frac{\gamma}{D(\alpha + \varepsilon)^{3/2}} \left(a\right)^\top \left(a\right)}_{B}.
\end{equation}

Then, we have
\begin{equation}
    \|A\| = \left\|\frac{\gamma}{\sqrt{\alpha + \varepsilon}} I \right\| 
    = \frac{\gamma}{\sqrt{\alpha + \varepsilon}} \|I\| 
    = \frac{\gamma}{\sqrt{\alpha + \varepsilon}},
\end{equation}
and
\begin{equation}
    \|B\| = \left\|\frac{\gamma}{D(\alpha + \varepsilon)^{3/2}} \left(a\right)^\top \left(a\right)\right\| 
    = \frac{\gamma}{D(\alpha + \varepsilon)^{3/2}} \times D\alpha 
    = \frac{\gamma \alpha}{(\alpha + \varepsilon)^{3/2}}.
\end{equation}

So, we have
\begin{equation}
    \| J \| = \| A - B \| \leq \| A \| + \| B \| \leq \frac{2\gamma}{\sqrt{\alpha}} = \frac{2\gamma \sqrt{D}}{\|a\|}.
\end{equation}

(d) Since
\begin{equation}
    \frac{\partial a}{\partial W_{i,j}^{(2)}} =
    \begin{bmatrix}
    0 & \cdots & h_{i} & \cdots & 0
    \end{bmatrix}^\top,
\end{equation}
we have
\begin{equation}
    \left\| \frac{\partial a}{\partial W_{i,j}^{(2)}} \right\| \leq \| h \|.
\end{equation}

Thus,
\begin{equation}
    \left\| \frac{\partial \mathcal{L}(o)}{\partial W_{i,j}^{(2)}} \right\| \leq 2 \times 1 \times \frac{2\gamma \sqrt{D}}{\|a\|} \times \| h \| = \frac{4\gamma \sqrt{D} \| h \|}{\|a\|}.
\end{equation}

\end{proof}

\begin{proposition}
Consider the following sequence of operations:
\begin{align*}
    & \quad a = \mathrm{ReLU}(x W^{(1)} + b^{(1)})W^{(2)} + b^{(2)}, \\
    & \quad o = x + a, \\
    & \quad \tilde{o} = \mathrm{RMSNorm}(o). \\
\end{align*}
Then,
\begin{equation}
    \left\lVert \frac{\partial \mathcal{L}(\tilde{o})}{\partial W_{i,j}^{(2)}} \right\rVert 
\leq  
\frac{4\gamma \sqrt{D} \| h \|}{\|x + a\|}, 
\end{equation}
where $\gamma$ is the scaling parameter used in $\mathrm{RMSNorm}(\cdot)$, $D$ is the dimensionality, and $h := \mathrm{ReLU}\left(x W^{(1)} + b^{(1)}\right)$.
\end{proposition}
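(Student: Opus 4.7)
The plan is to mirror the chain-rule decomposition used in Proposition~2, but to reorder the factors to reflect the fact that here $\mathrm{RMSNorm}$ is applied \emph{after} the residual addition rather than between $a$ and the skip connection. Concretely, I would write
\begin{equation*}
\frac{\partial \mathcal{L}(\tilde{o})}{\partial W_{i,j}^{(2)}} = \frac{\partial \mathcal{L}(\tilde{o})}{\partial \tilde{o}} \cdot \frac{\partial \tilde{o}}{\partial o} \cdot \frac{\partial o}{\partial a} \cdot \frac{\partial a}{\partial W_{i,j}^{(2)}},
\end{equation*}
apply submultiplicativity of the norm, and bound each of the four factors separately.

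Three of the four bounds are essentially recycled from the earlier propositions. The leftmost factor is the softmax--cross-entropy gradient $\hat{p} - y$, whose norm is at most $2$, exactly as in Proposition~2(a). The factor $\partial o / \partial a$ equals the identity since $o = x + a$, contributing $1$. The rightmost factor $\partial a / \partial W_{i,j}^{(2)}$ is the one-hot-like vector with a single nonzero entry $h_i$, whose norm is at most $\|h\|$, exactly as in Proposition~2(d); note that $h$ is now defined without an inner $\mathrm{RMSNorm}$ on $x$, matching the statement of the proposition.

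The only factor requiring fresh work is the middle one, $\partial \tilde{o}/\partial o$, namely the Jacobian of $\mathrm{RMSNorm}$ evaluated at $o$ rather than at $a$. I would repeat the derivation of Proposition~2(c) verbatim with $a$ replaced by $o$: set $\alpha' := \tfrac{1}{D}\sum_{k=1}^{D} o_k^2$, decompose the Jacobian as $J = A - B$ into the scaled identity $A = \tfrac{\gamma}{\sqrt{\alpha'+\varepsilon}} I$ and the rank-one outer-product $B = \tfrac{\gamma}{D(\alpha'+\varepsilon)^{3/2}} o^\top o$, bound each in operator norm, and use the triangle inequality to conclude
\begin{equation*}
\|J\| \leq \frac{2\gamma}{\sqrt{\alpha'}} = \frac{2\gamma\sqrt{D}}{\|o\|} = \frac{2\gamma\sqrt{D}}{\|x + a\|}.
\end{equation*}
This is where the denominator $\|x+a\|$ in the target bound enters.

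Multiplying the four bounds yields $2 \cdot (2\gamma\sqrt{D}/\|x+a\|) \cdot 1 \cdot \|h\| = 4\gamma\sqrt{D}\|h\|/\|x+a\|$, as claimed. I do not expect a genuine obstacle: the argument is a near-verbatim rerun of Proposition~2, and the only thing to watch is that the $\mathrm{RMSNorm}$ Jacobian is now taken at $o = x + a$, which is precisely what makes $\|x+a\|$ (rather than $\|a\|$) appear in the denominator. This swap is the sole structural difference between the post-norm variant treated here and the sandwich-norm variant of Proposition~2.
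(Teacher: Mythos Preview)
Your proposal is correct and is precisely the argument the paper intends: the paper's own proof reads in full ``The proof is analogous to the proof of the previous proposition,'' and your write-up carries out exactly that analogy, swapping the $\mathrm{RMSNorm}$ Jacobian evaluation point from $a$ to $o = x + a$ to obtain the denominator $\|x+a\|$. Nothing further is needed.
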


\begin{proof}
The proof is analogous to the proof of the previous proposition.
\end{proof}

\newpage

\section{Detailed Experimental Setting} \label{appendix:exp_settings}
In this section, we provide detailed configurations of both the pretraining and supervised fine-tuning to reproduce our results.
\subsection{Configurations on Pre-Training}
The common training settings are provided in Table~\ref{tab:trainingconfigurations}. Embedding settings for the language models are listed in Table~\ref{tab:embedding config}. For the model architecture, we primarily follow the Llama~$3$ architecture \citep{llama3}. In the MLP module, we use SwiGLU activations. Additional details regarding the model configurations are shown in Table~\ref{tab:modelsize}. Note that embedding parameters are excluded from the model size. Unless otherwise noted, most training and model settings follow those of the DCLM experiments \citep{dclm}. During the pretraining stage, each model was trained under a controlled random seed.

\begin{table}[ht]
    \centering
    \caption{Common configurations. \textit{LR Schedule} denotes learning rate schedule. }
    \label{tab:trainingconfigurations}
    \begin{tabular}{cccccccc}
    \toprule
        Global Batch Size  &  Weight Decay & Iterations & Optimizer & LR Schedule & Warmup & Weight Initialization\\
    \toprule
        $256$ & $0.033$ & $14400$ & Adam & Cosine & $10$\% & 0.02 \\ 
        \bottomrule
    \end{tabular}
\end{table}

\begin{table}[ht]
    \centering
    \caption{Embedding configurations.}
    \label{tab:embedding config}
    \begin{tabular}{ccc}
    \toprule
        Max Position Embeddings  &  Position Embedding Type & Untie-embeddings-and-output-weights\\
    \toprule
        $8192$ & Rope & $True$ \\ 
        \bottomrule
    \end{tabular}
\end{table}

\begin{table}[ht]
    \centering
    \caption{Model configurations.}
    \label{tab:modelsize}
    \begin{tabular}{lcccc}
    \toprule
        Size & $n_{layers}$ & $n_{heads}$ & $d_{model}$ & $d_{head}$\\
    \toprule
        $400$M & $24$ & $8$ & $1024$ & $128$ \\ 
        $1.5$B & $24$ & $16$ & $2048$ & $128$  \\ 
        $3.2$B & $32$ & $16$ & $2560$ & $160$ \\ 
        \bottomrule
    \end{tabular}
\end{table}

\subsection{Configurations on Supervised Fine-Tuning}\label{appendix:SFTsetup}
 To examine downstream task performance after instruction tuning, we employed a high-quality LIMA alignment training set consisting of $1,000$ samples~\citep{lima}. Our supervised fine-tuning configuration was slightly modified from the original setup of LIMA: we fine-tuned the model for $15$ epochs with a batch size of $128$. The optimizer was Adam with an initial learning rate of $1$e-$5$ and a cosine learning rate schedule. We selected the best checkpoints for each model by evaluating on OpenBookQA~\citep{OpenBookQA2018}, CommonSenseQA~\citep{talmor-etal-2019-commonsenseqa}, and the NLI dataset in GLUE~\citep{wang-etal-2018-glue}.

\newpage
\section{Additional Results on Pre-Training Study} \label{appendix:additionalresults_pretraining}

\subsection{Post-Layer Normalization Architecture \& Learning Rate Exploration} \label{appendix:postln}
In order to identify the optimal performance configuration for Post-LN within the experimental setup, we conducted a learning rate exploration as shown in Figure~\ref{fig:pretrain_lrwseep_post}. Because the appropriate learning rate for Post-LN fell into a much lower range than those for Pre-LN and Peri-LN, we treated it separately. For each Post-LN setting, the best learning rate was determined as the one yielding the lowest final training loss, with the random seed held constant during this selection process.

\begin{figure}[ht!]
    \centering
    \includegraphics[width=0.4\linewidth]{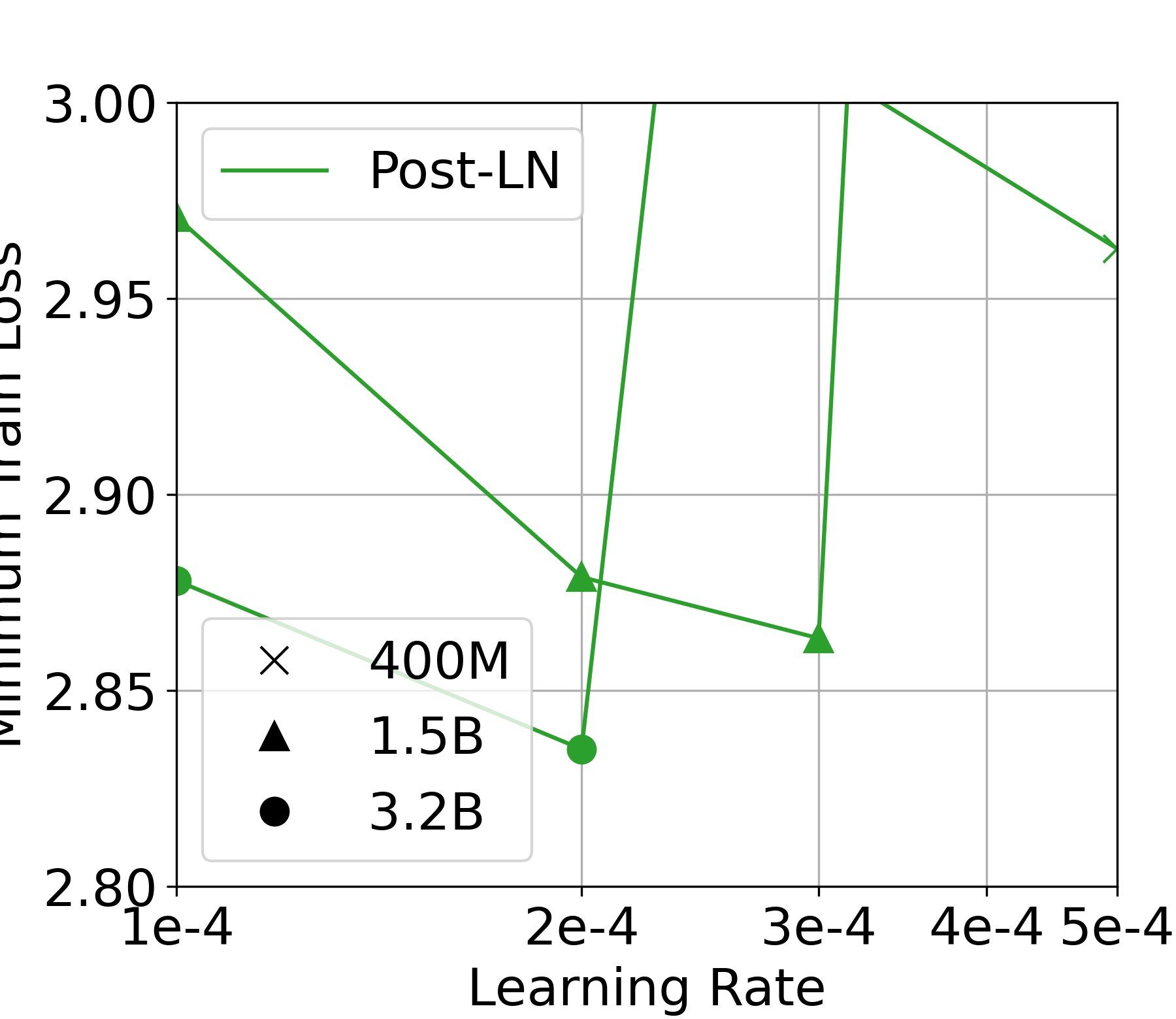} 
    \caption{Learning rate explorations for Post-LN architectures.}
    \label{fig:pretrain_lrwseep_post}
\end{figure}

\subsection{Best Performing Checkpoints Comparisons of Other Model Sizes} \label{appendix:best_perform_loss_others}
As an extension of Section~\ref{subsec:pretrain}, we present below the results for additional model sizes that were omitted previously due to space constraints. In Figures \ref{fig:pretraining_others}, we compare the pre-training loss and the gradient norm curve at each LN strategy’s best-performing learning rate of $3.2$B and $1.5$B size models. 

\begin{figure}[ht!]
    \centering
    \subfigure[Training loss for $3.2$B]
    {
    \includegraphics[width=.2\linewidth]{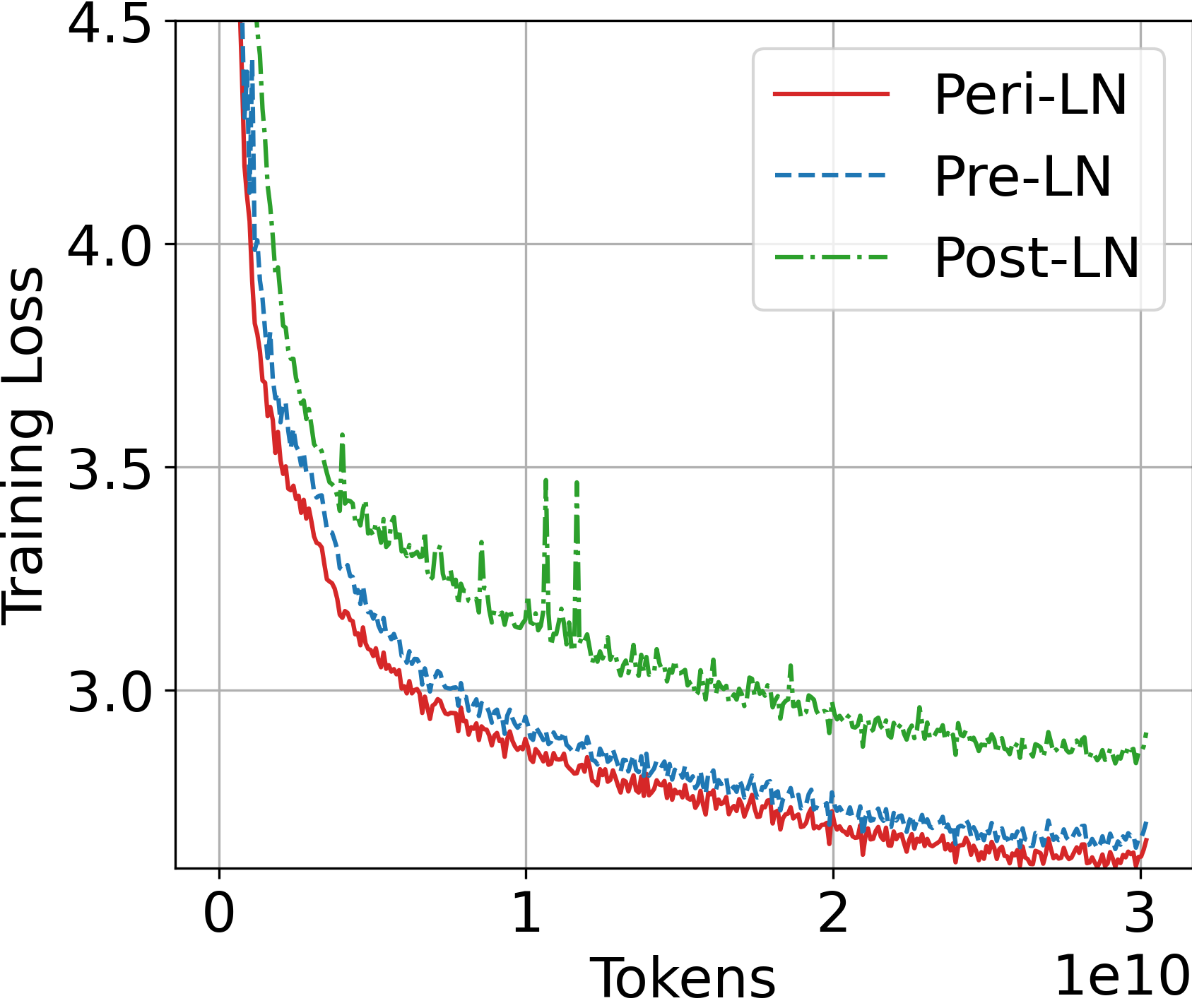}
    }
    \subfigure[Gradient-norm for $3.2$B]
    {
    \includegraphics[width=.2\linewidth]{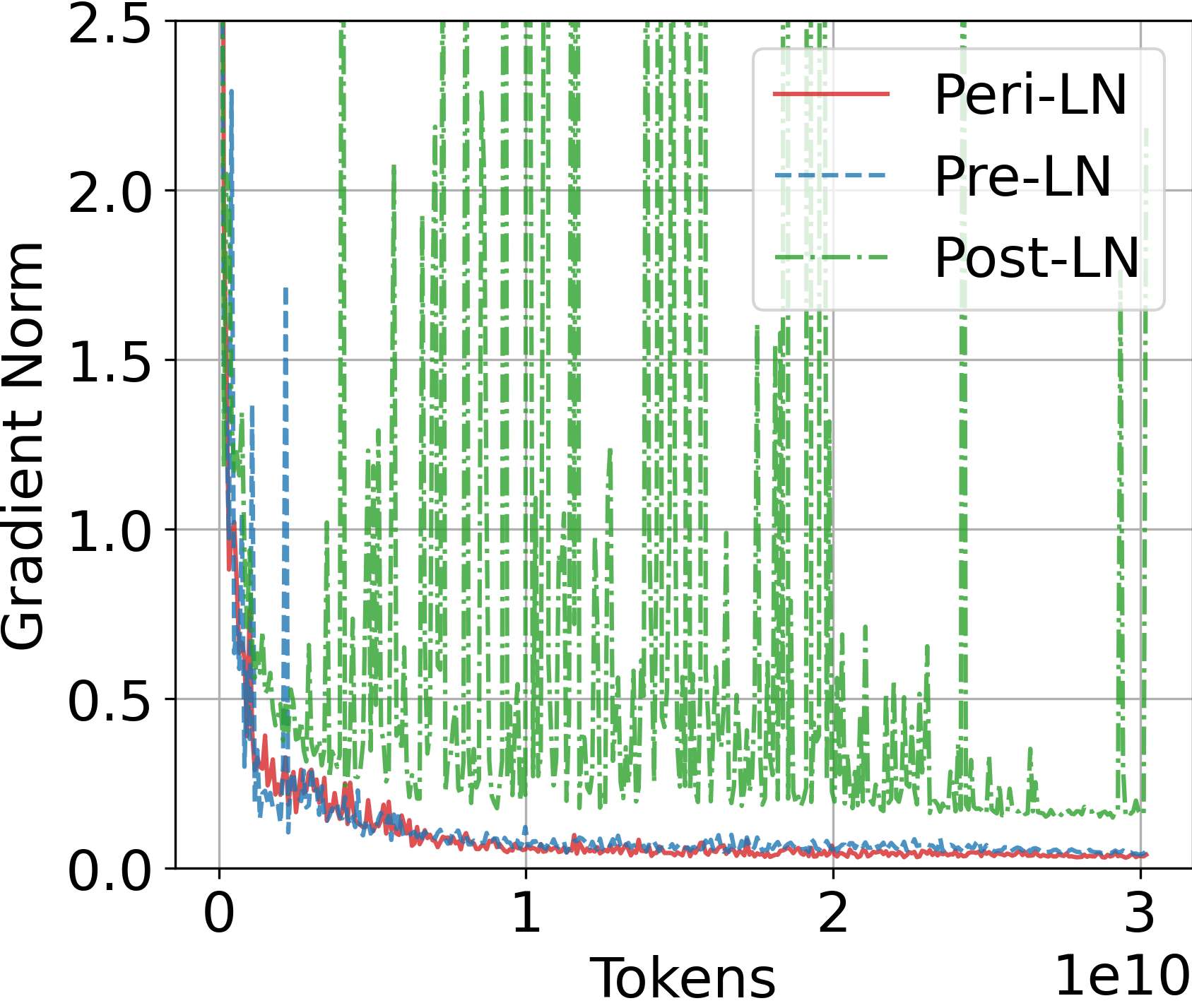}
    }
    \\
    \subfigure[Training loss for $1.5$B]
    {
    \includegraphics[width=.2\linewidth]{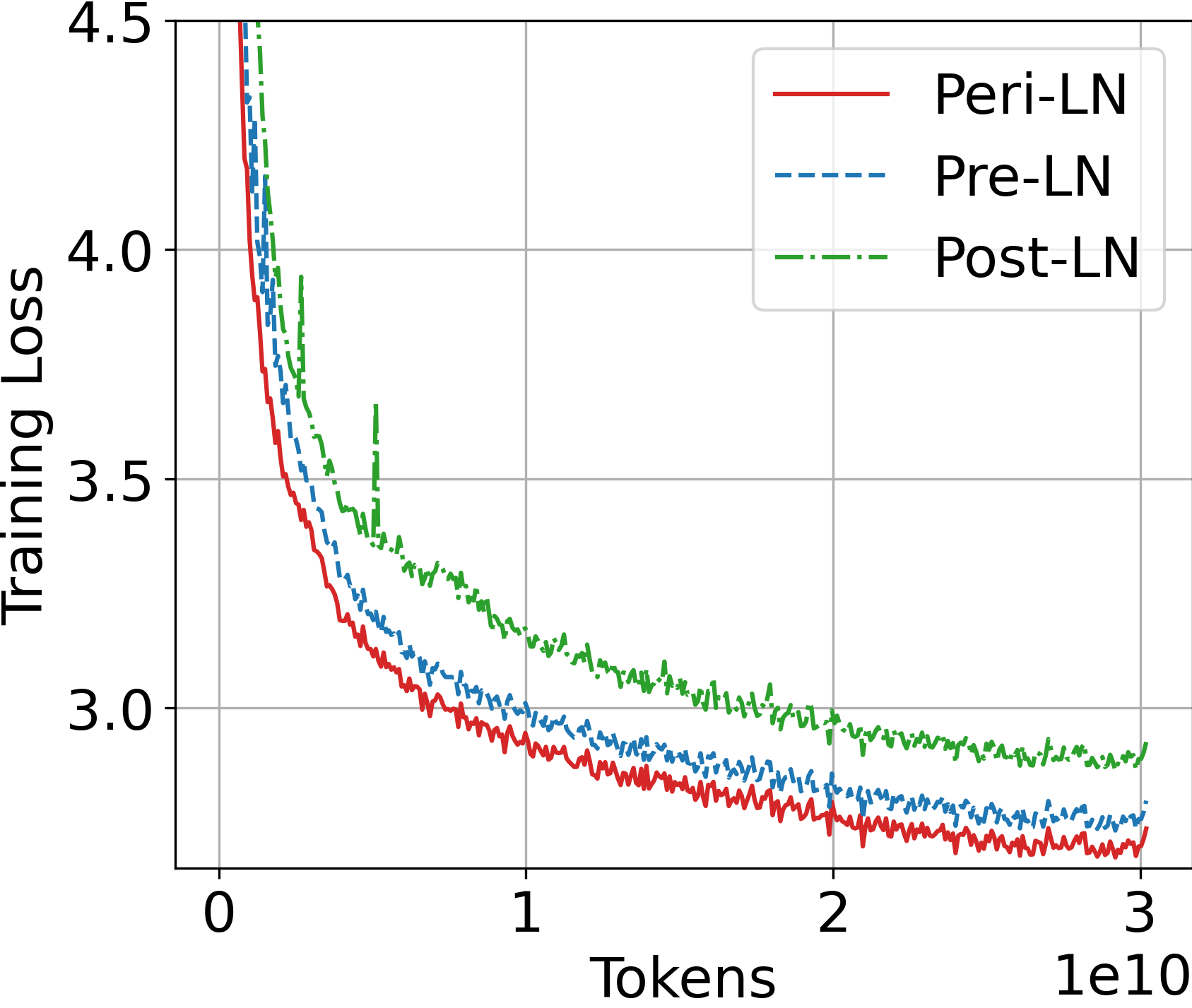}
    }
    \subfigure[Gradient-norm for $1.5$B]
    {
    \includegraphics[width=.2\linewidth]{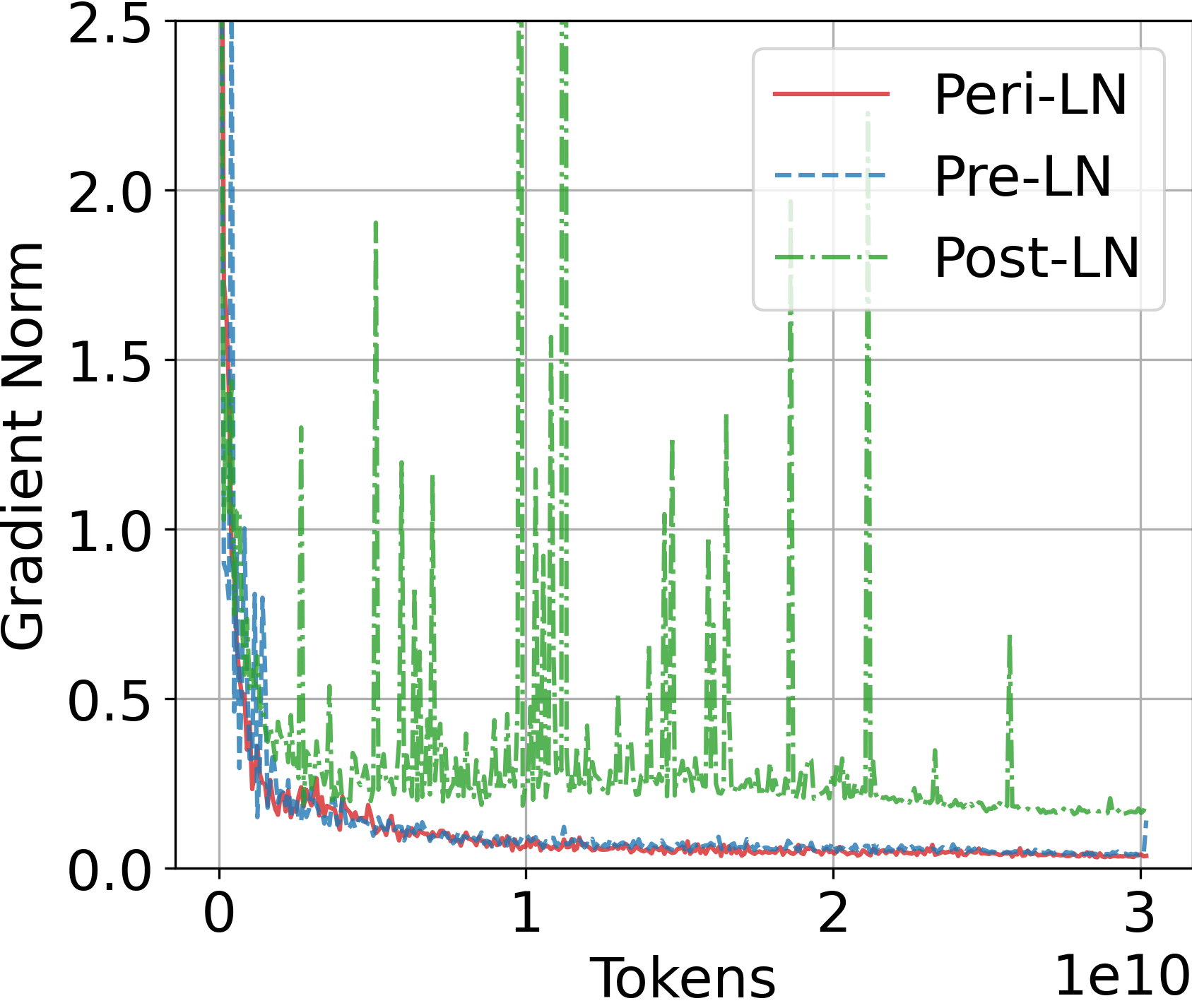}
    }
    \caption{
    Performance comparison of Post-LN, Pre-LN, and Peri-LN Transformers during pre-training for other two. 
    }
    \label{fig:pretraining_others}
\end{figure}

\newpage
\section{Additional Results on Growth of Hidden State}\label{appendix:additional_growth_hidden}
In this section, we examine the $400$M- and $3.2$B-parameter models, which were omitted in Section~\ref{subsec:growth of hidden state} due to space constraints. As illustrated in Figures~\ref{fig:growth_of_hidden_state_3B} and \ref{fig:growth_of_hidden_state_400M}, these models exhibit the same overall trend.

\begin{figure}[ht!]
    \centering
    \subfigure[Absolute magnitude growth]
    {
    \includegraphics[width=0.35\linewidth]{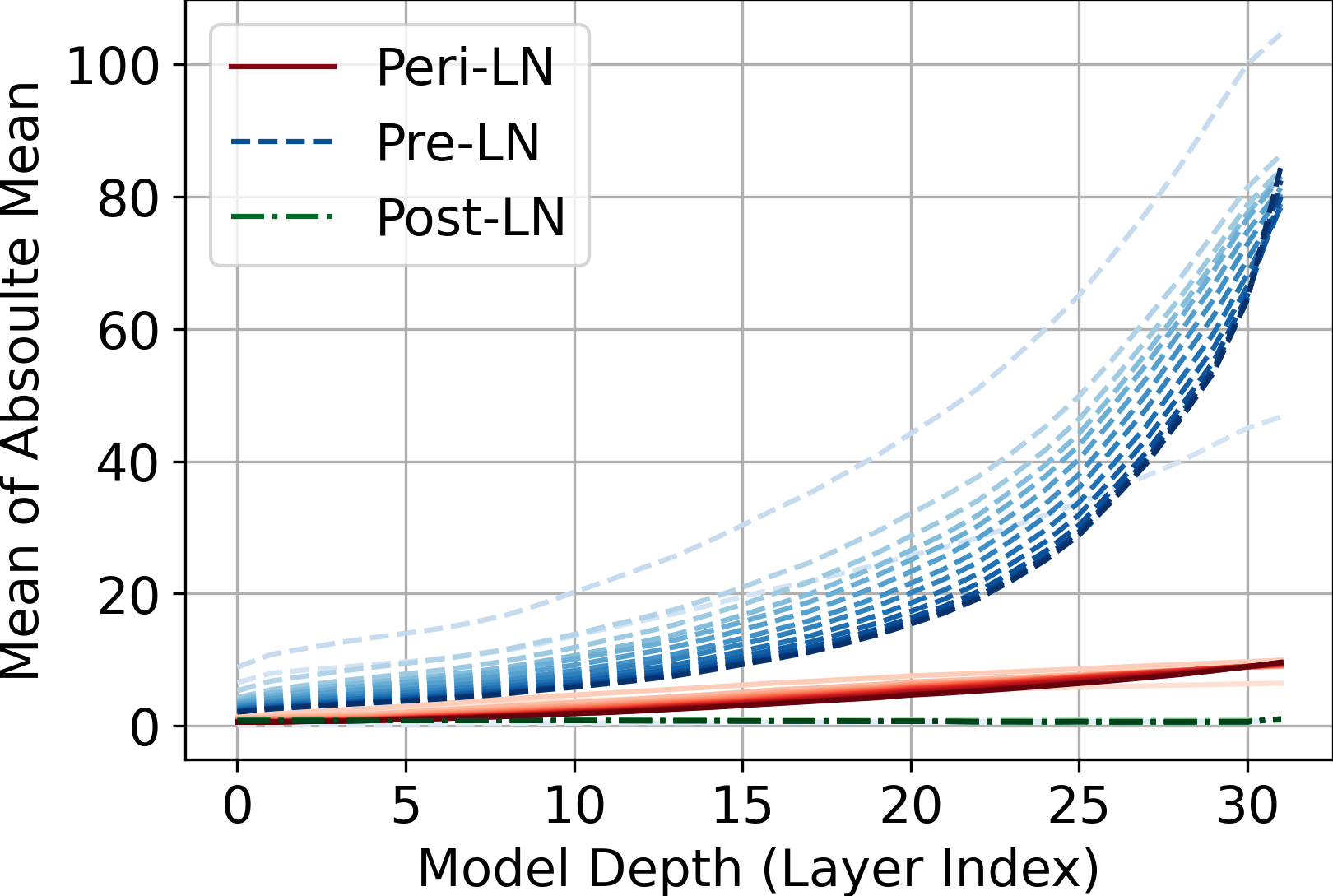} 
    }
    \subfigure[Variance growth]
    {
    \includegraphics[width=0.375\linewidth]{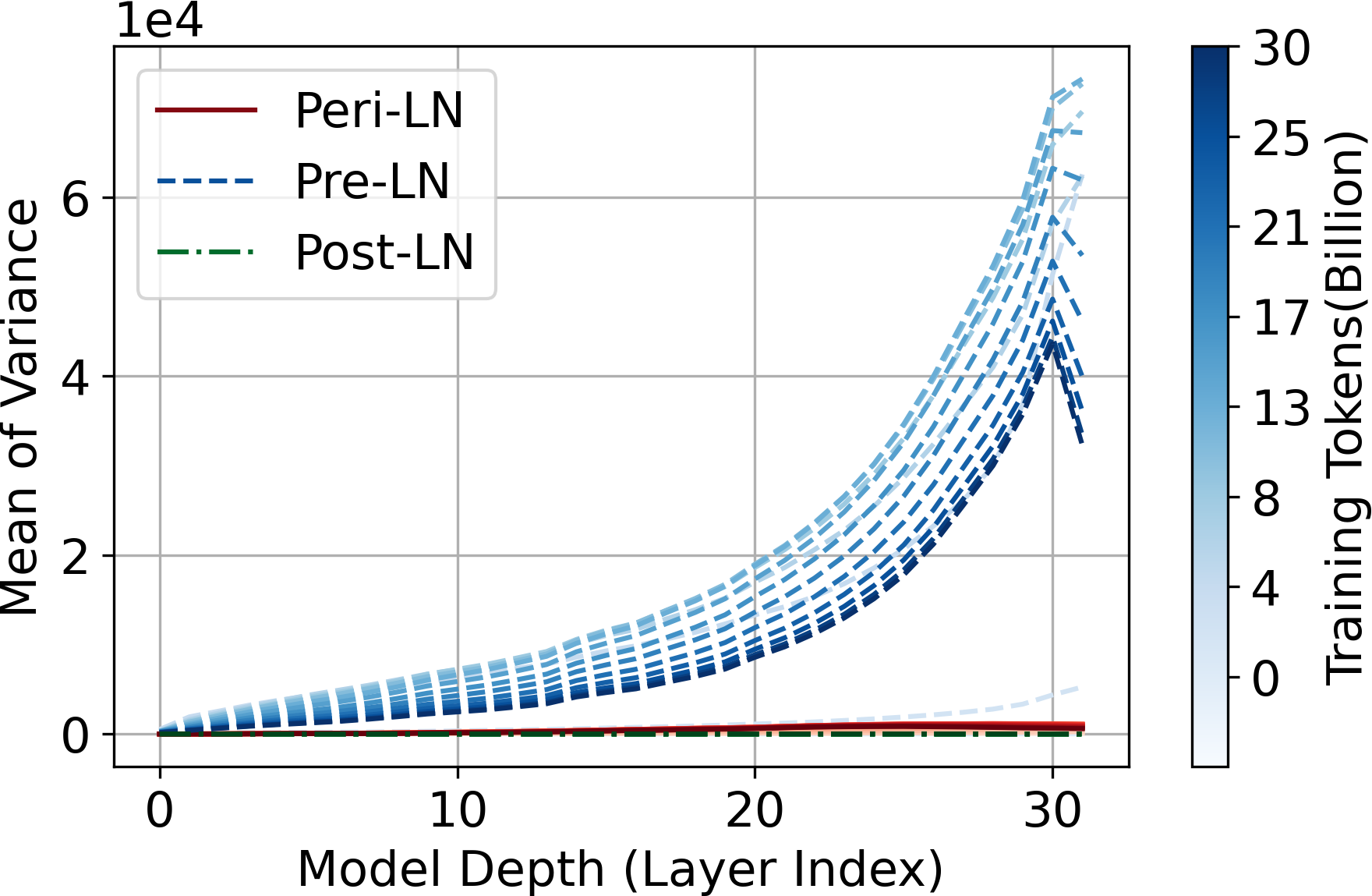}
    }
    \caption{The forward growth patterns of hidden state for different architectures highlight the structural impact of normalization placement. $3.2$B size model.}
    \label{fig:growth_of_hidden_state_3B}
\end{figure}

\begin{figure}[ht!]
    \centering
    \subfigure[Absolute magnitude growth]
    {
    \includegraphics[width=0.35\linewidth]{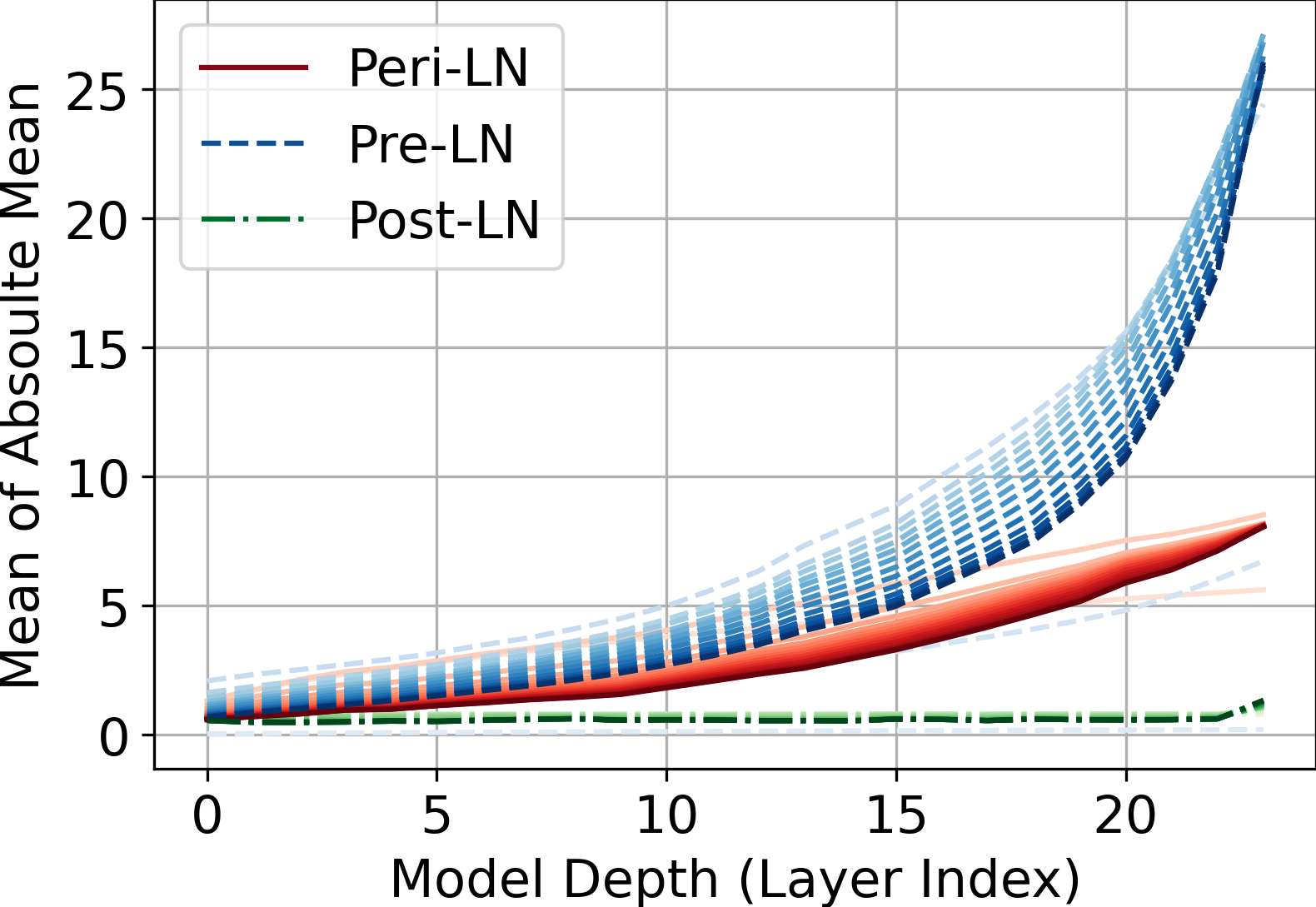} 
    }
    \subfigure[Variance growth]
    {
    \includegraphics[width=0.375\linewidth]{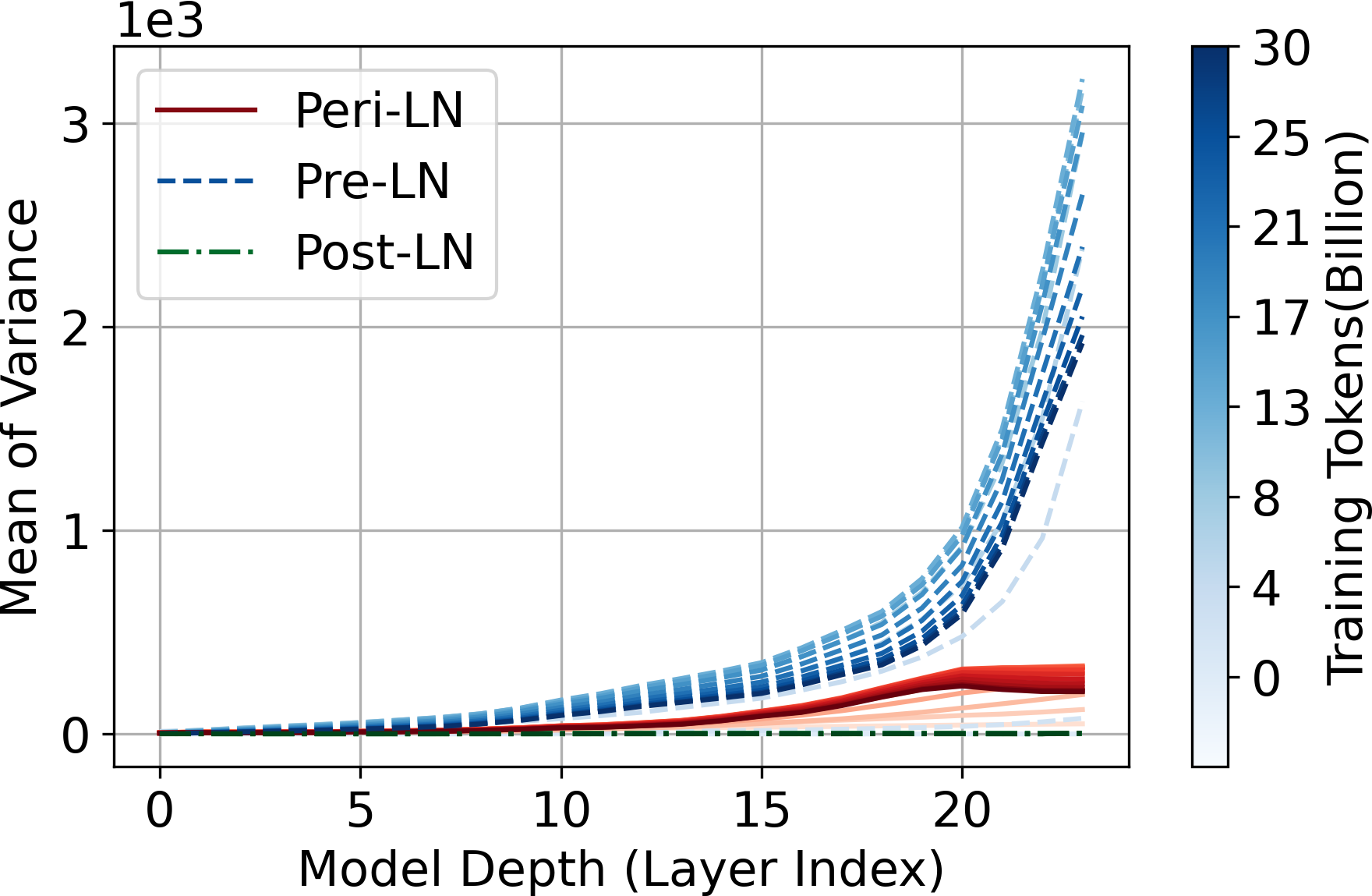} 
    }
    \caption{The forward growth patterns of hidden state for different architectures highlight the structural impact of normalization placement. $400$M size model.}
    \label{fig:growth_of_hidden_state_400M}
\end{figure}

\newpage
\section{Additional Experimental Results on Ablation Study} \label{appendix:additionalresults}

\subsection{Amount of Training Tokens}
In order to investigate whether the learning behavior of each LN strategy varies with the number of training tokens, we conducted an additional round of learning-rate exploration for both the Pre-LN and Peri-LN architectures. As shown in Figure~\ref{fig:tokensweep}, even as the number of training tokens increases, there is no observable shift in the optimal learning-rate range. Based on these findings, we conclude that our overall results \emph{remain consistent}, even when the training token count is further increased. Furthermore, although a learning rate of \(5\times10^{-3}\) leads to divergence in the smaller-scale experiments with $8$B or $16$B training tokens, it does not do so in the $30$B-token setting. We attribute this discrepancy to the $10$\% warmup rate, suggesting that the warmup phase may be insufficient for the smaller-scale experiments.

\begin{figure}[ht!]
    \centering
    \includegraphics[width=0.42\linewidth]{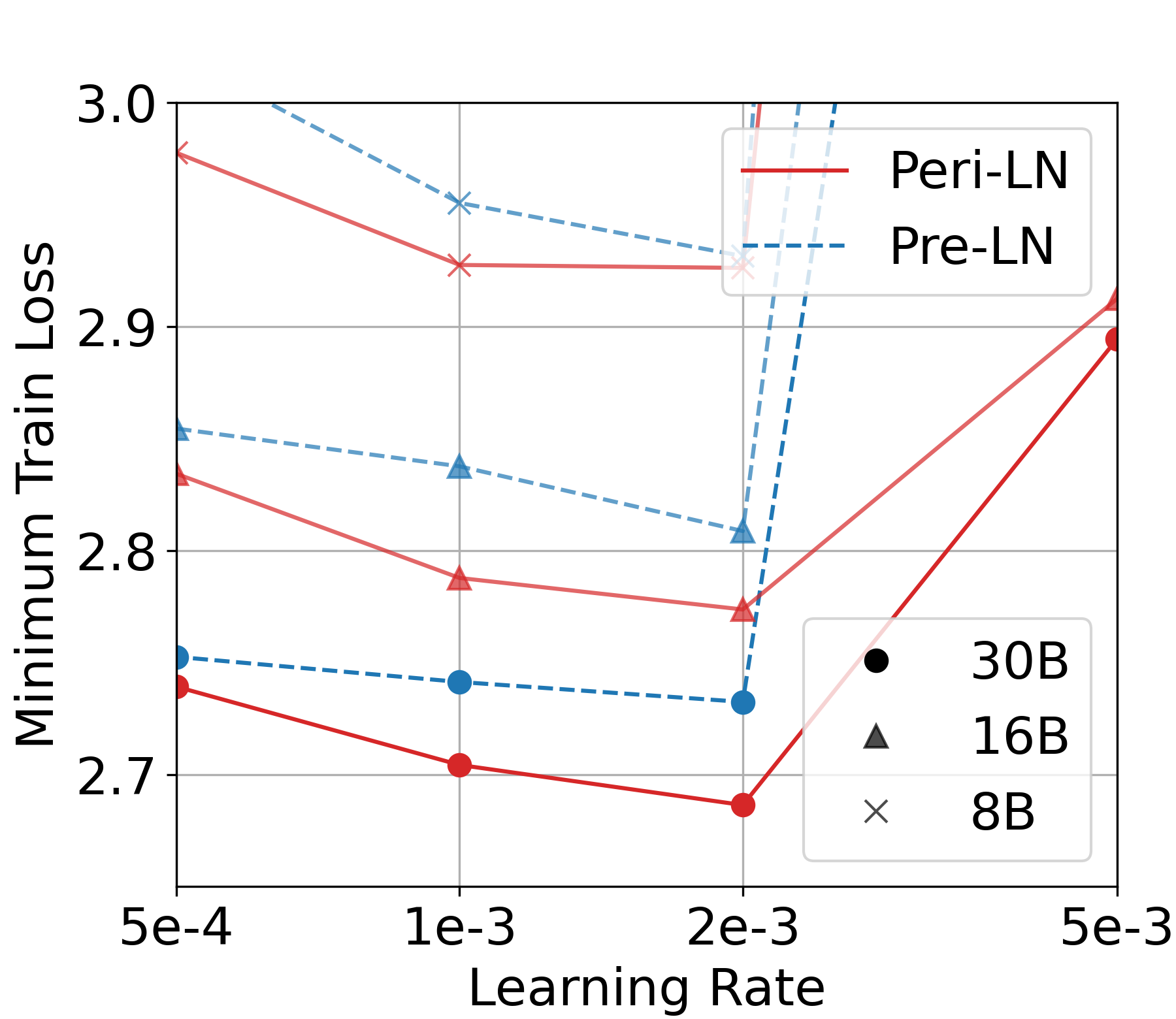} 
    \caption{Learning rate explorations of Pre-\& Peri-LN architecture with sequence length $2048$ configuration.}
    \label{fig:tokensweep}
\end{figure}

\subsection{Sequence Length}
In language models, the number of iterations per token is influenced by the sequence length, which in turn, along with the batch size, affects training statistics. We conducted an experiment to determine whether the performance trend changes when the sequence length is reduced from 8192, as set in the main text, to 2048. As shown in Figure~\ref{fig:sqlen2k}, Peri-LN still surpasses Pre-LN in the learning rate exploration.

\begin{figure}[ht!]
    \centering
    \includegraphics[width=0.42\linewidth]{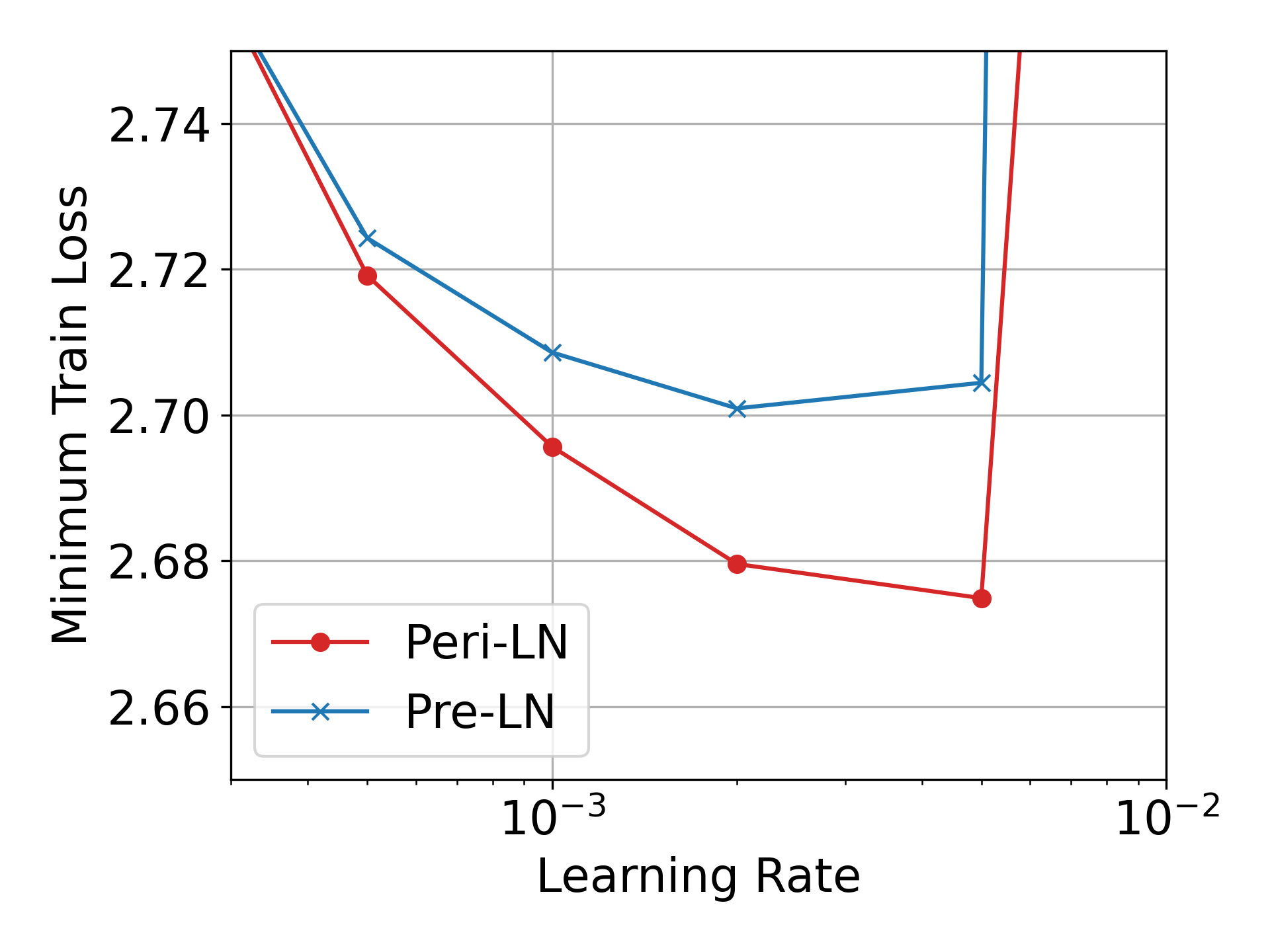} 
    \caption{Learning rate explorations of Pre-\& Peri-LN architecture with sequence length $2048$ configuration.}
    \label{fig:sqlen2k}
\end{figure}

\newpage

\subsection{Warm-up}
Warmup is widely recognized to influence training stability. To investigate whether a $10$\% warmup rate might unfairly disadvantage Pre-LN, we conducted an additional learning-rate exploration using a $30$\% warmup rate. As illustrated in Figure~\ref{fig:warmup30}, the overall trend remained unchanged, and Peri-LN continued to exhibit better performance than Pre-LN in terms of loss. Furthermore, we observed that increasing the warmup rate from $10$\% to $30$\% did not reduce the frequency of gradient norm spikes in Pre-LN.

\begin{figure}[ht!]
    \centering
    \includegraphics[width=0.4\linewidth]{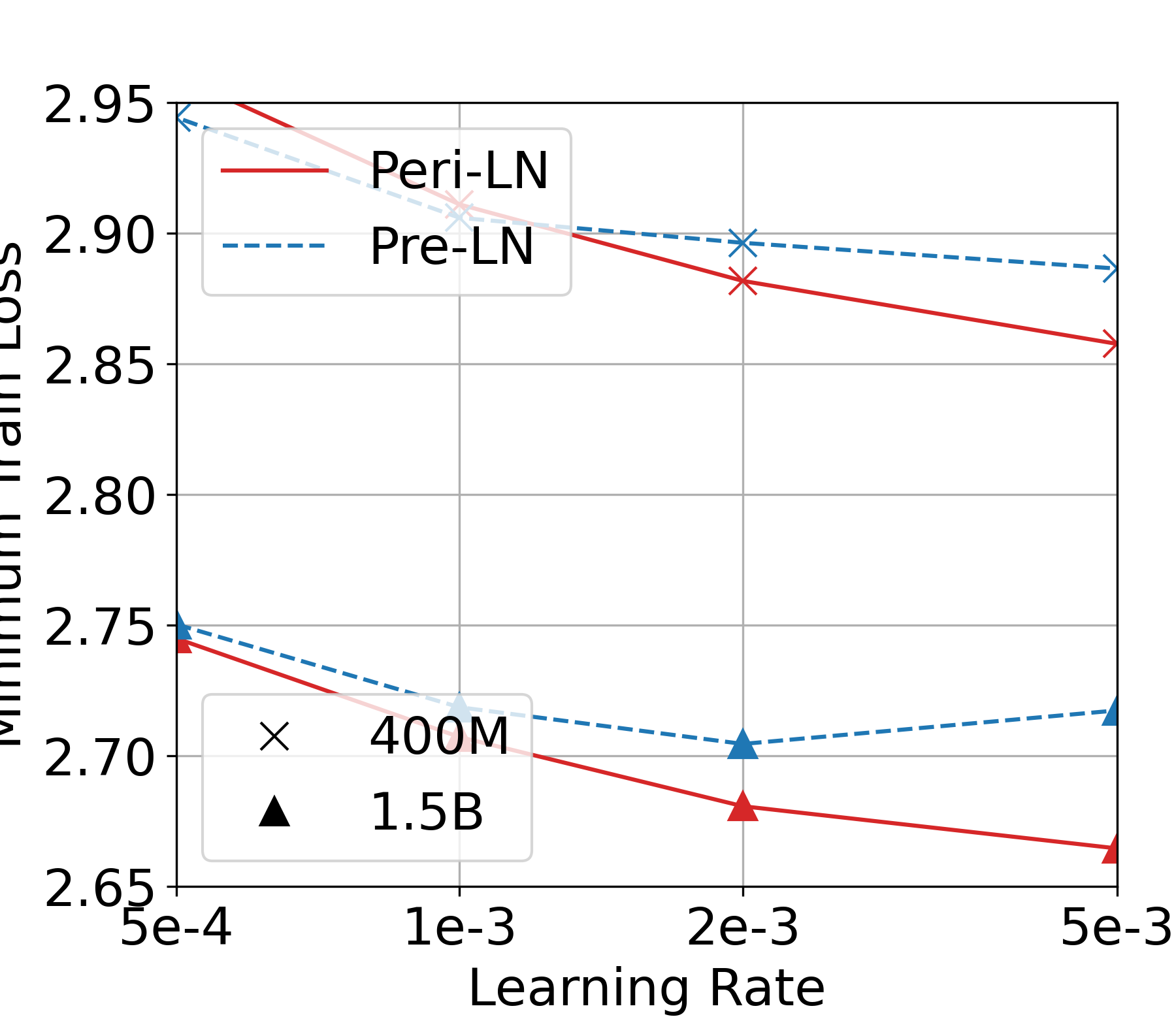} 
    \caption{Learning rate explorations of Pre-\& Peri-LN architecture with warmup $30$\% configuration.}
    \label{fig:warmup30}
\end{figure}

\subsection{RMSNorm \& LayerNorm} \label{appendix:rmsnorm}
As illustrated in Figure~\ref{fig:layernorm}, we conducted experiments in which RMSNorm and LayerNorm were interchanged. Consistent with the findings reported in \citep{olmo2}, we did not observe any notable performance differences in our RMSNorm and LayerNorm replacement experiments. Learning rate was set to $2$e-$3$ (best performance learning rate).

\begin{figure}[ht!]
    \centering
    \subfigure[Pre-training loss curve]
    {
    \includegraphics[width=0.40\linewidth]{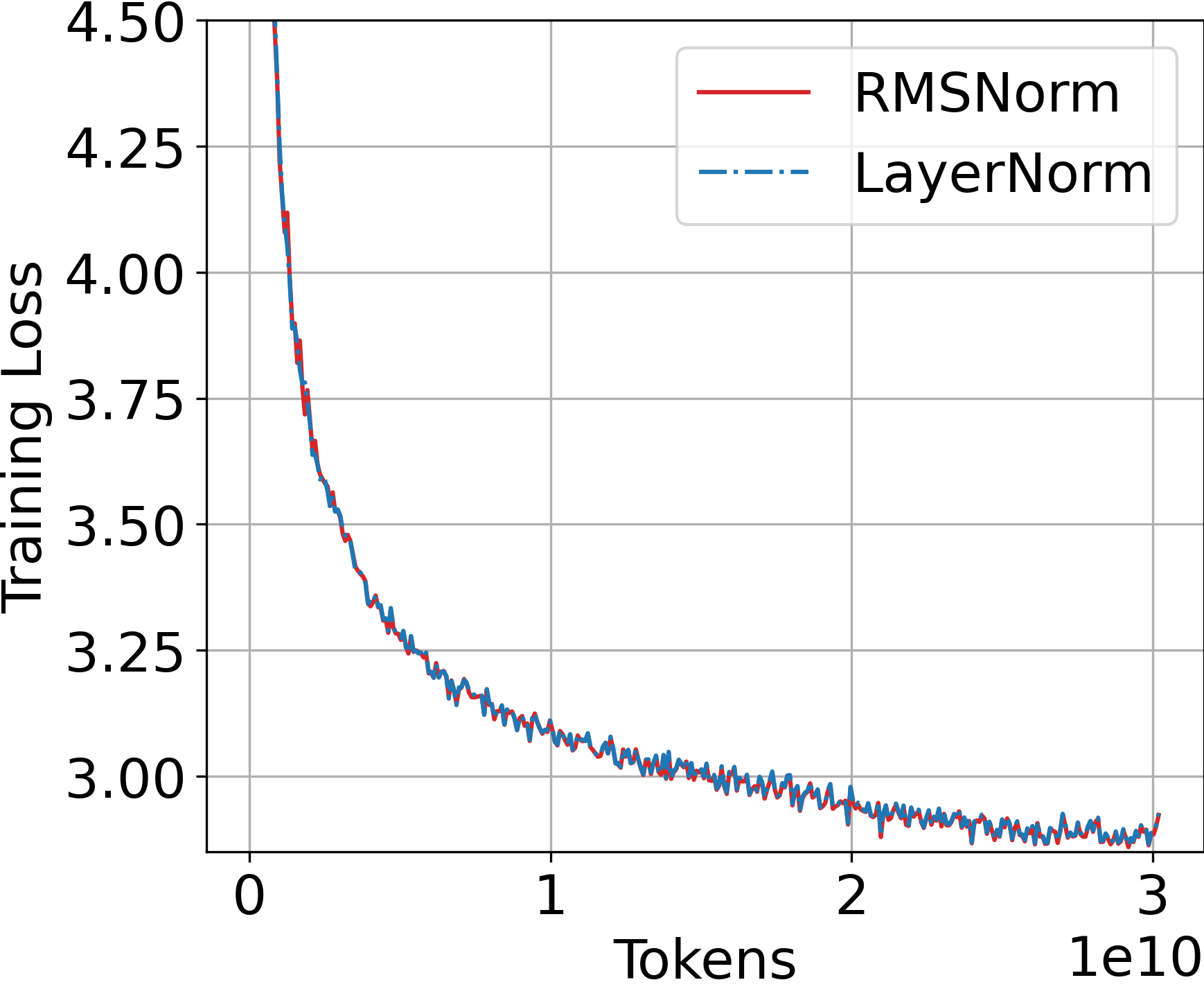} 
    }
    \subfigure[Gradient-norm curve]
    {
    \includegraphics[width=0.38\linewidth]{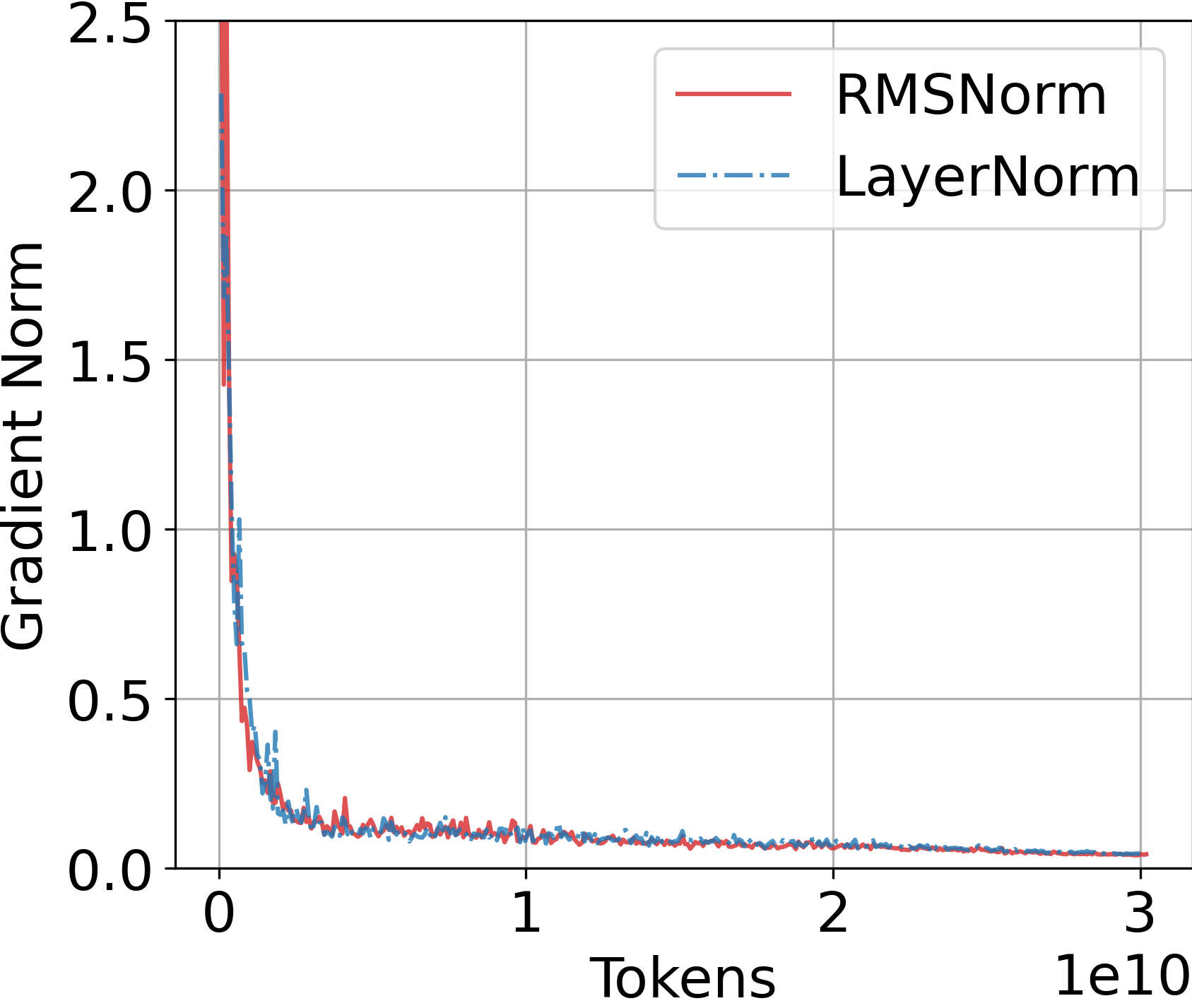}
    }
    \caption{LayerNorm vs. RMSNorm on Peri-LN architecture. $400$M size model.}
    \label{fig:layernorm}
\end{figure}

\newpage

\subsection{Embedding Layer Normalization of Peri-Layer Normalization Transformers}\label{appendix:embeddingln}
Motivated by \citet{spikenomore}, we empirically explore the addition of embedding layer normalization to improve training stability and overall model performance in Transformer architectures. As illustrated in Figures~\ref{fig:embedding_400M}, \ref{fig:embedding_1B}, and \ref{fig:embedding_3B}, incorporating Embedding LN in the Peri-LN architecture yields a slight improvement in pre-training loss. Furthermore, our empirical observations suggest that this effect becomes more pronounced in smaller models.

\begin{figure}[ht!]
    \centering
    \subfigure[Pre-training loss curve]
    {
    \includegraphics[width=0.27\linewidth]{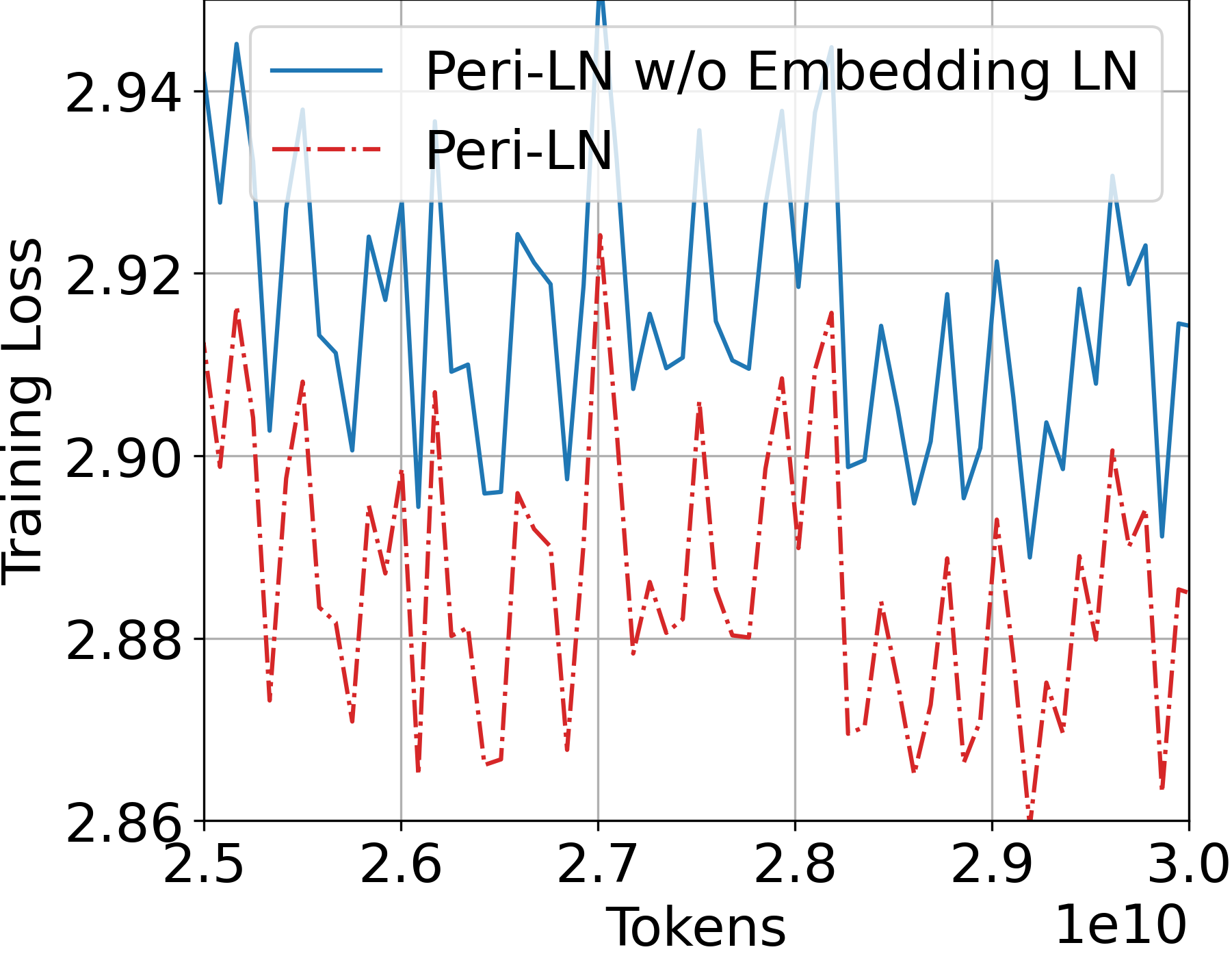} 
    }
    \subfigure[Gradient-norm curve]
    {
    \includegraphics[width=0.25\linewidth]{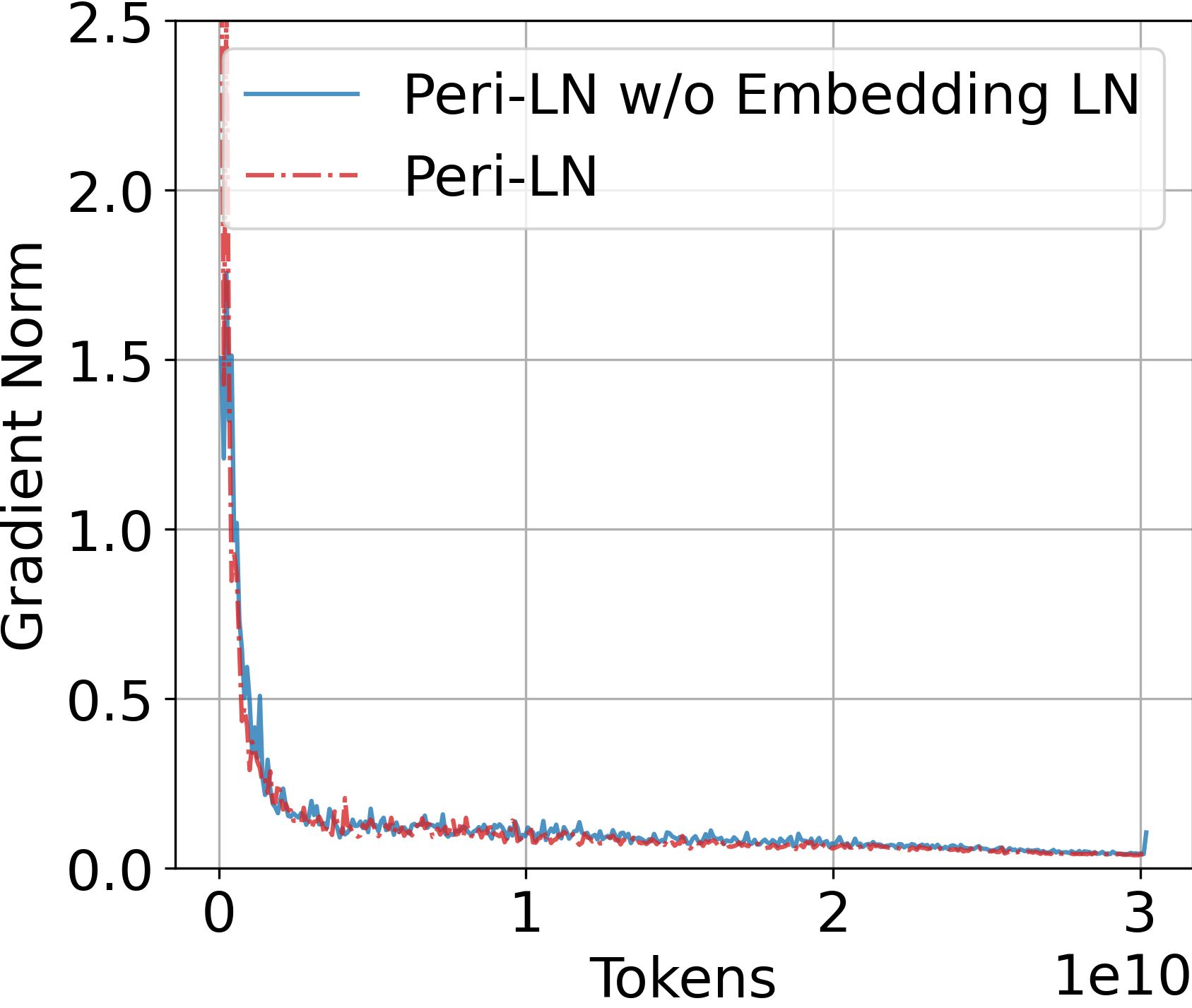}
    }
    \caption{Loss and Gradient-norm curves comparing the presence and absence of Embedding LN in the Peri-LN architecture. $400$M size model.}
    \label{fig:embedding_400M}
\end{figure}

\begin{figure}[ht!]
    \centering
    \subfigure[Pre-training loss curve]
    {
    \includegraphics[width=0.27\linewidth]{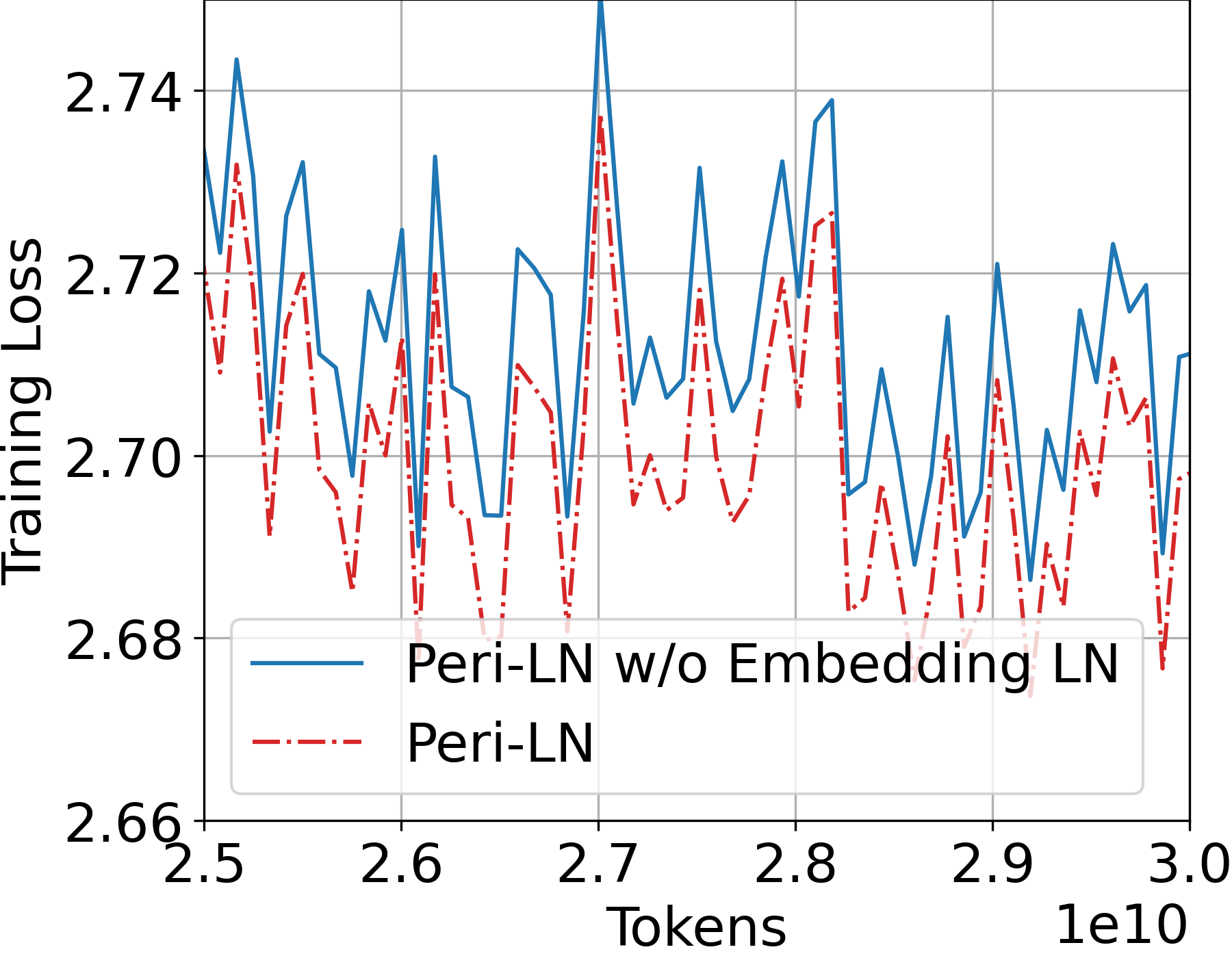} 
    }
    \subfigure[Gradient-norm curve]
    {
    \includegraphics[width=0.25\linewidth]{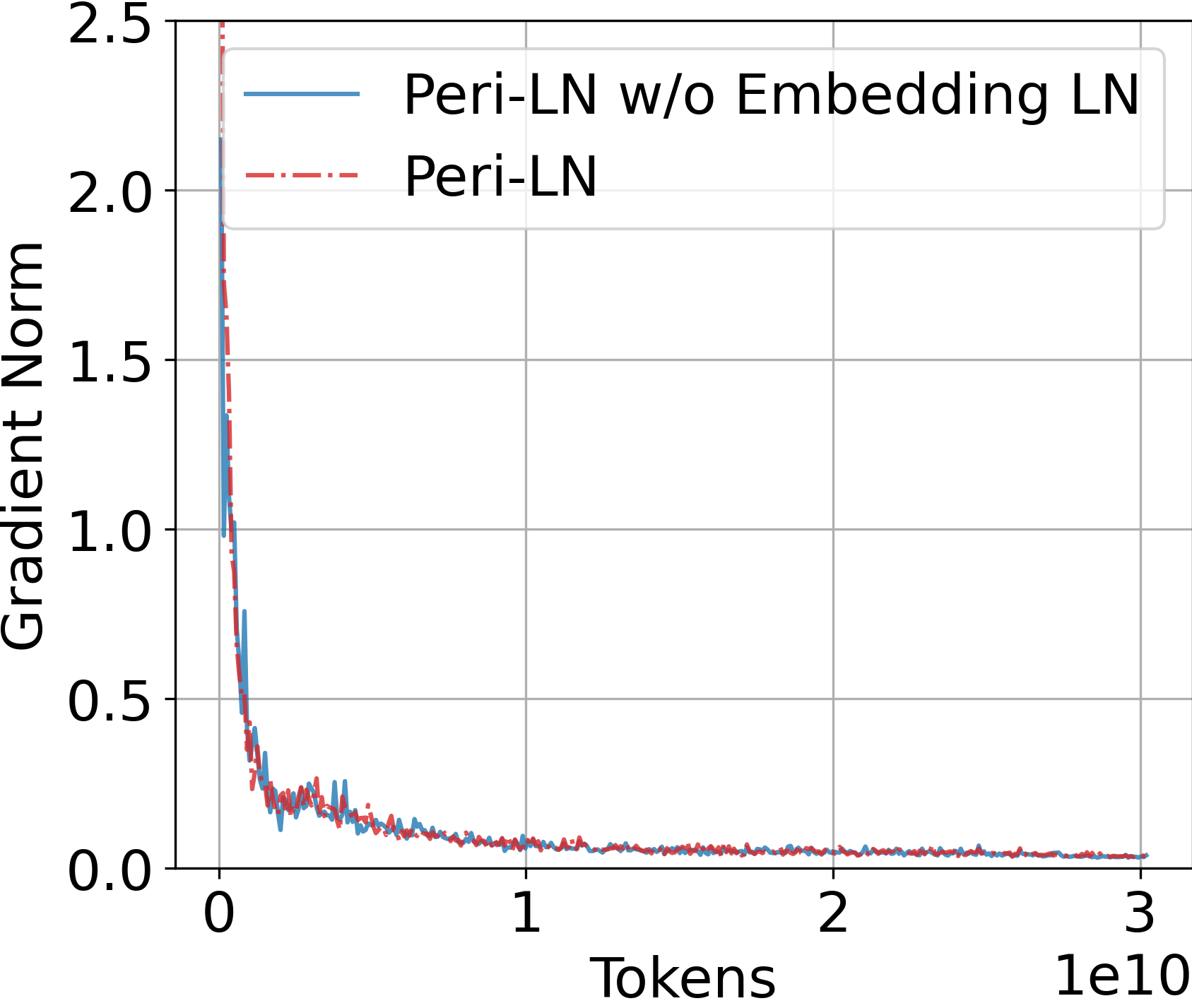}
    }
    \caption{Loss and Gradient-norm curves comparing the presence and absence of Embedding LN in the Peri-LN architecture. $1.5$B size model.}
    \label{fig:embedding_1B}
\end{figure}

\begin{figure}[ht!]
    \centering
    \subfigure[Pre-training loss curve]
    {
    \includegraphics[width=0.27\linewidth]{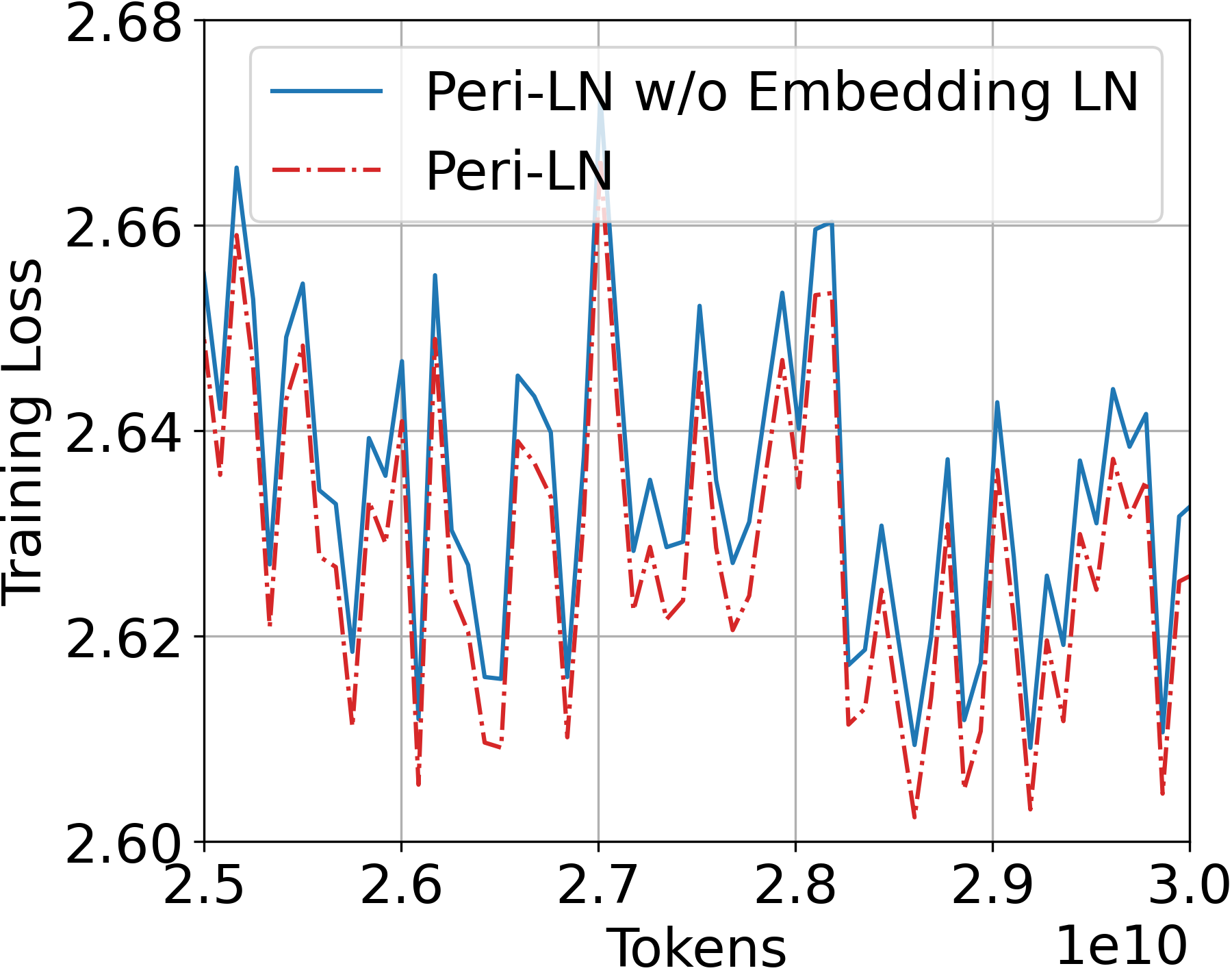} 
    }
    \subfigure[Gradient-norm curve]
    {
    \includegraphics[width=0.25\linewidth]{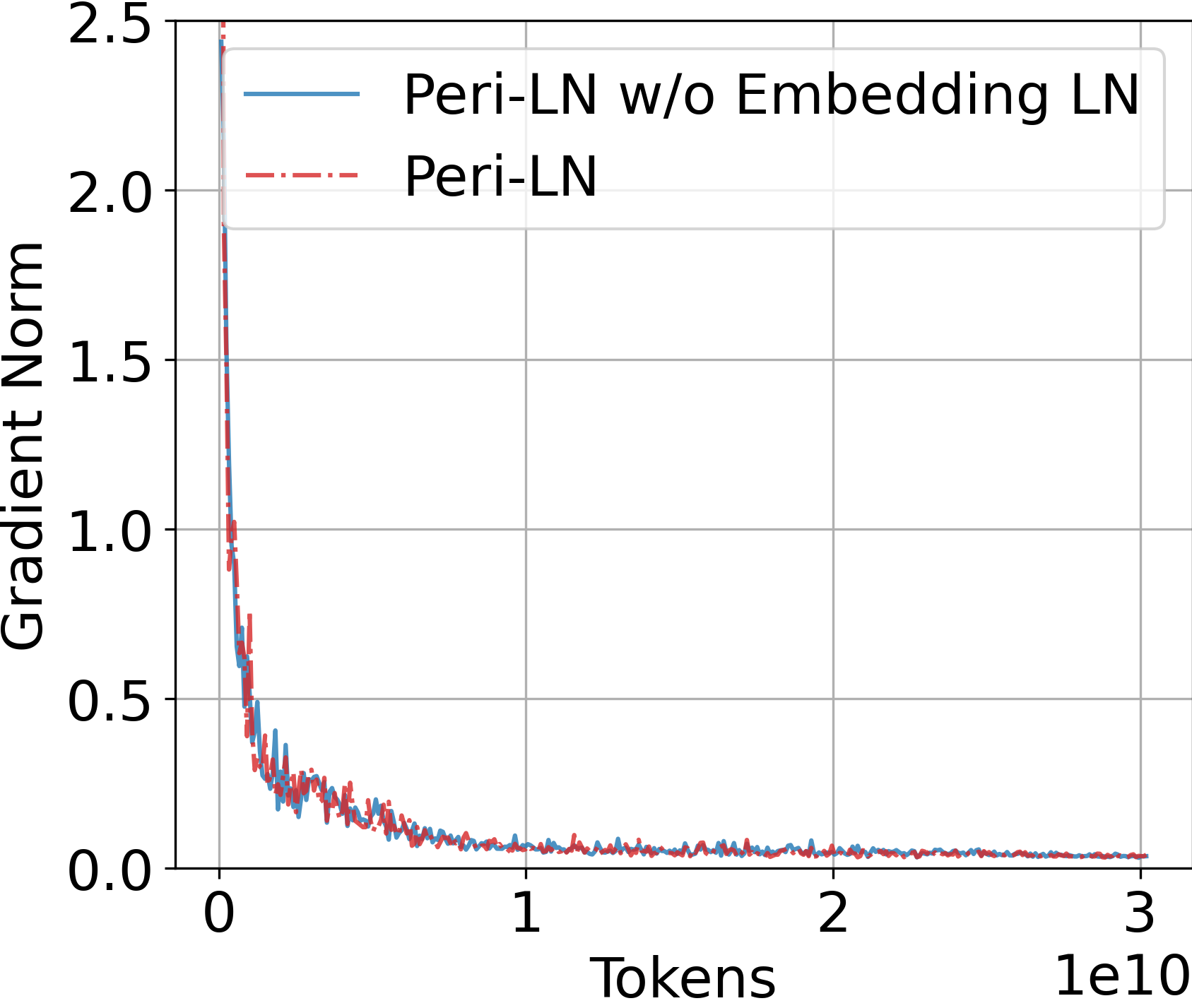}
    }
    \caption{Loss and Gradient-norm curves comparing the presence and absence of Embedding LN in the Peri-LN architecture. $3.2$B size model.}
    \label{fig:embedding_3B}
\end{figure}

\newpage
\subsection{Ablation Study on Additional Normalization Layer Placement}\label{appendix:additional-LN-placement}
We additionally conduct further experiments on LN placements to compare different combinations (referred to as A, B, and C positions in Figure~\ref{fig:LN Placement}). We add configurations where LN is placed at both A + C (akin to combining Pre- and Post-LN), as well as only at B, to compare them with Peri-LN at final training loss under the controlled same training seed. We pre-train the $400$M-parameter Transformers on $30$B tokens each, using the same training configurations described in Section~\ref{sec:experiments}. As aligned with \citet{onlayer}, our new results confirm that placing LN exclusively at C leads to training instability or suboptimal performance. In particular, the A + C configuration inherits characteristics of Post-LN (large gradient norm shifts), forcing the use of smaller learning rates and still resulting in lower overall performance than Peri-LN architecture. 

\begin{table}[!ht]
\caption{Final training loss and additional normalization layer placement.}
    \centering
    \begin{tabular}{lcccc}
    \toprule
        $400$M & A + C & Post-LN & B & Peri-LN \\ 
    \midrule
        Final Training Loss & $3.01$ & $3.05$ & Diverged & $2.91$ \\ 
    \bottomrule
    \end{tabular}
\end{table}

\subsection{Peri-LN with QK-Norm}
While Peri-LN alone provides robust training dynamics, QK-Norm can still enhance performance. We conducted additional experiments that confirm combining Peri-LN with QK-Norm yields slight improvements in training loss. We pre-train the $1.5$B-parameter Transformers on $30$B tokens each, using the same training configurations described in Section~\ref{sec:experiments}. As shown in Table~\ref{tab:peri-qk}, adding QK-norm to Peri-LN indeed yielded better performance, consistent with \citet{smallproxies}. In this experiment, the Peri-LN variant equipped with QK-norm used LayerNorm instead of RMSNorm. 

\begin{table}[!ht]
\caption{Peri-LN with QK-Norm.}\label{tab:peri-qk}
    \centering
    \begin{tabular}{lcc}
    \toprule
        $1.5$B & Peri-LN & +QK-Norm \citep{smallproxies}\\
    \midrule
        Final Training Loss & $2.722$ &	$2.711$ \\ 
    \bottomrule
    \end{tabular}
\end{table}

\clearpage
\newpage

\subsection{Weight Decay and Weight Initialization}\label{appendix:weight_decay_and_init}

\subsubsection{Common Settings}
We pre-train the $400$M-parameter Transformers on $30$B tokens each under the controlled same training seed. We measure the training loss and averaged benchmark score for these experiments under the same evaluation settings used in Table~\ref{tab:pre-train} of the paper. Other configurations follow those outlined in Section~\ref{sec:experiments}. For the variance growth experiments in Figure \ref{fig:decay_and_init}, we adopt the same settings as in Section \ref{subsec:growth of hidden state}, except that we use $100$ samples for the forward-pass statistics.

\subsubsection{Weight Decay}\label{appendix:weight_decay}
We conduct additional studies for various weight decay condition for both Pre-LN and Peri-LN architectures. As shown in the Table~\ref{tab:appendix_weight_decay}, Peri-LN continues to offer better performance than Pre-LN under the same settings. We provide per-run results as below:

\begin{table}[!ht]
\caption{Effect of weight decay on $400$M-parameter Pre-LN and Peri-LN Transformers: Final training loss and averaged benchmark score.}\label{tab:appendix_weight_decay}
    \centering
    \begin{tabular}{llcccc}
    \toprule
        $400$M & Weight Decay Coefficient & $0$ & $0.0033$ & $0.033$ & $0.33$ \\ 
        \toprule
        Final Training Loss & Pre-LN & $3.03$ & $3.03$ & $3.03$ & $3$ \\ 
        ~ & Peri-LN & $2.94$ & $2.94$ & $2.93$ & $2.90$ \\ 
        \midrule
        Averaged Benchmark Score & Pre-LN & $49.26$ & $49.18$ & $49.01$ & $49.51$ \\ 
         & Peri-LN & $51.41$ & $51.14$ & $50.68$ & $52.13$ \\ 
        \bottomrule
    \end{tabular}
    \vskip -0.2in
\end{table}

\begin{figure}[ht!]
    \centering
    \subfigure[Pre-training loss curve]
    {
    \includegraphics[width=0.30\linewidth]{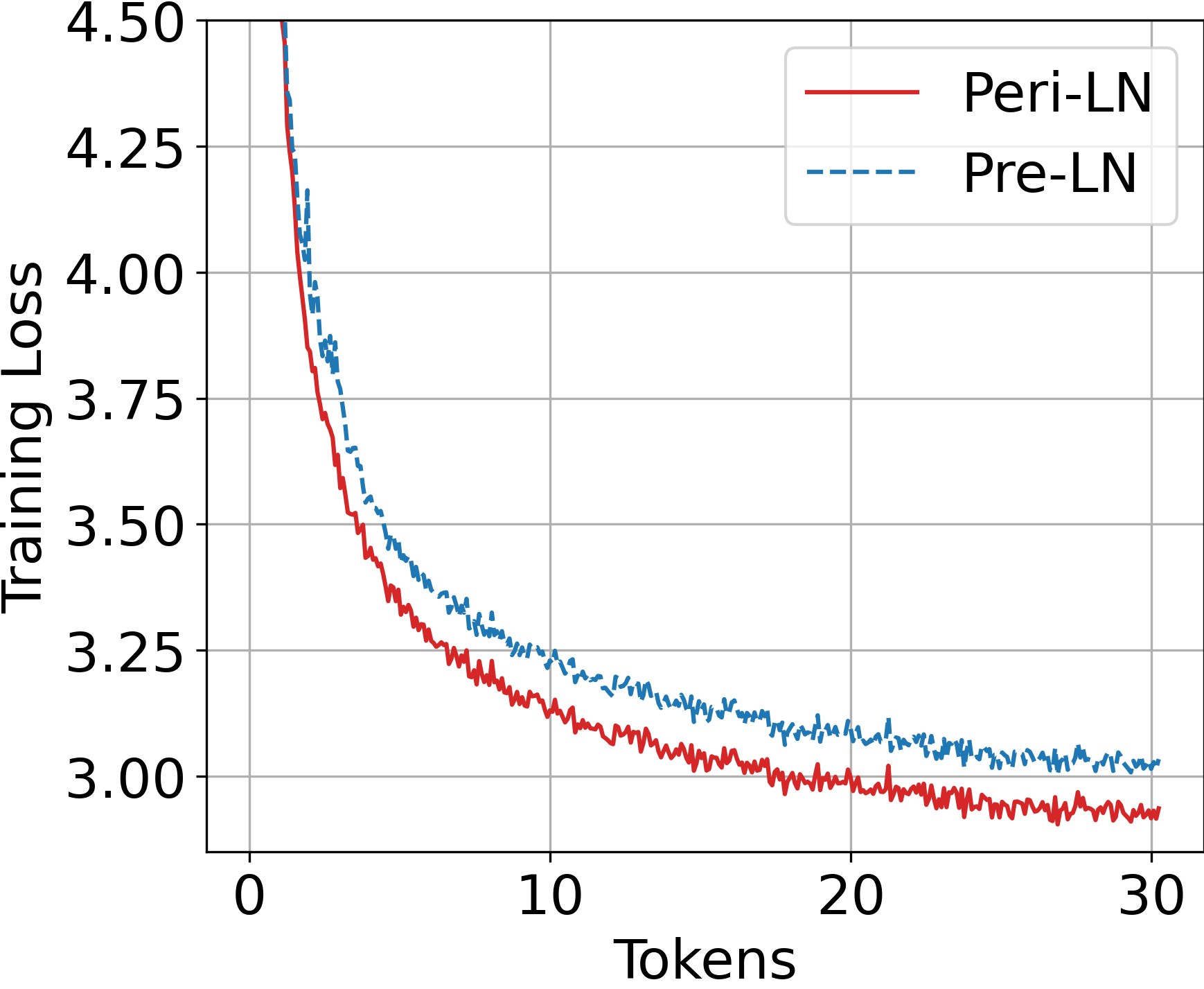} 
    }
    \subfigure[Gradient-norm curve]
    {
    \includegraphics[width=0.29\linewidth]{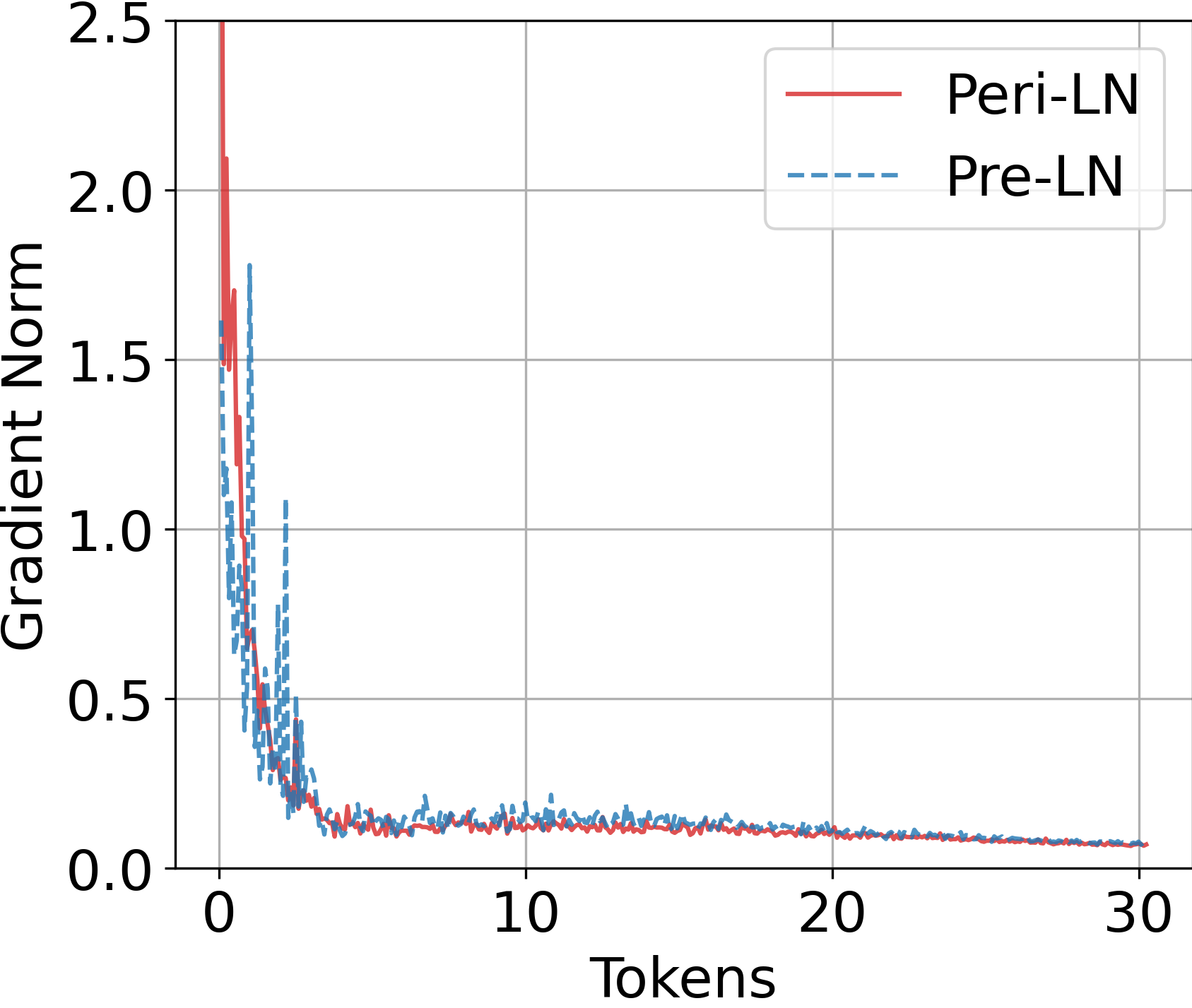}
    }
    \caption{Comparison of pre-training loss and gradient norms for Pre-LN and Peri-LN architectures with the weight decay coefficient fixed at $0$, while all other hyperparameters are held constant. }
    \vskip -0.2in
\end{figure}

\begin{figure}[ht!]
    \centering
    \subfigure[Pre-training loss curve]
    {
    \includegraphics[width=0.30\linewidth]{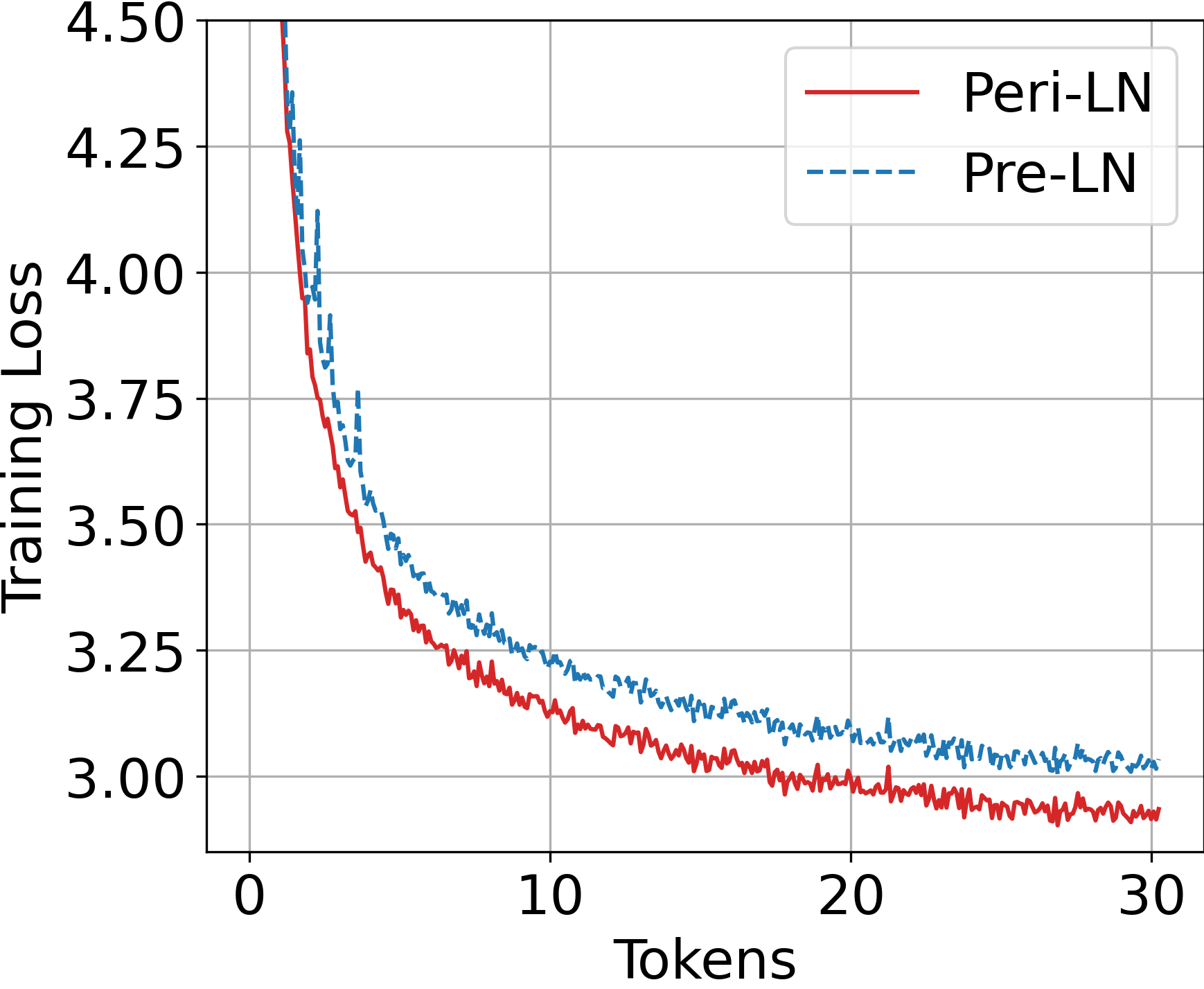} 
    }
    \subfigure[Gradient-norm curve]
    {
    \includegraphics[width=0.29\linewidth]{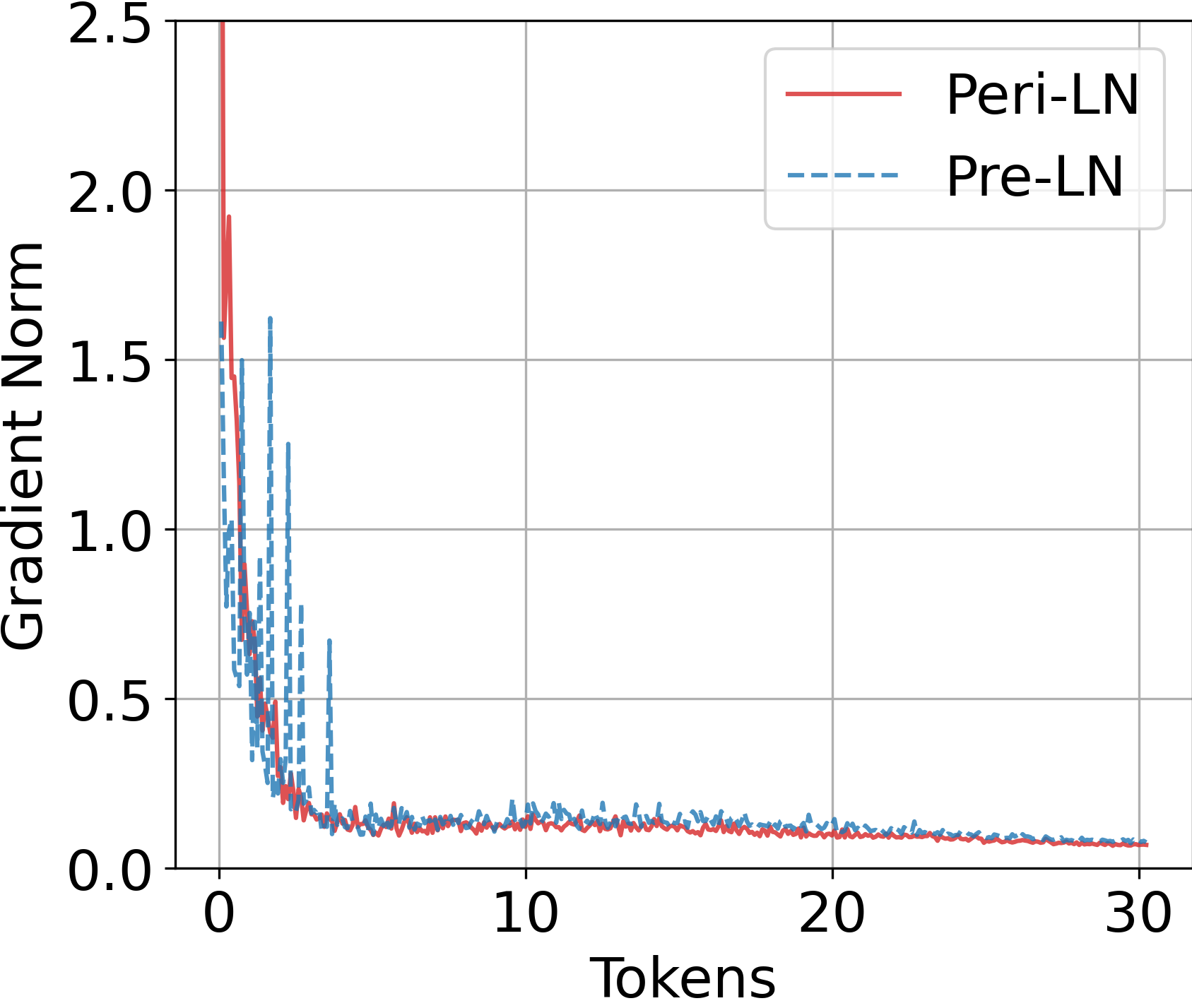}
    }
    \caption{Comparison of pre-training loss and gradient norms for Pre-LN and Peri-LN architectures with the weight decay coefficient fixed at $0.0033$, while all other hyperparameters are held constant. }
    \vskip -0.2in
\end{figure}

\begin{figure}[ht!]
    \centering
    \subfigure[Pre-training loss curve]
    {
    \includegraphics[width=0.30\linewidth]{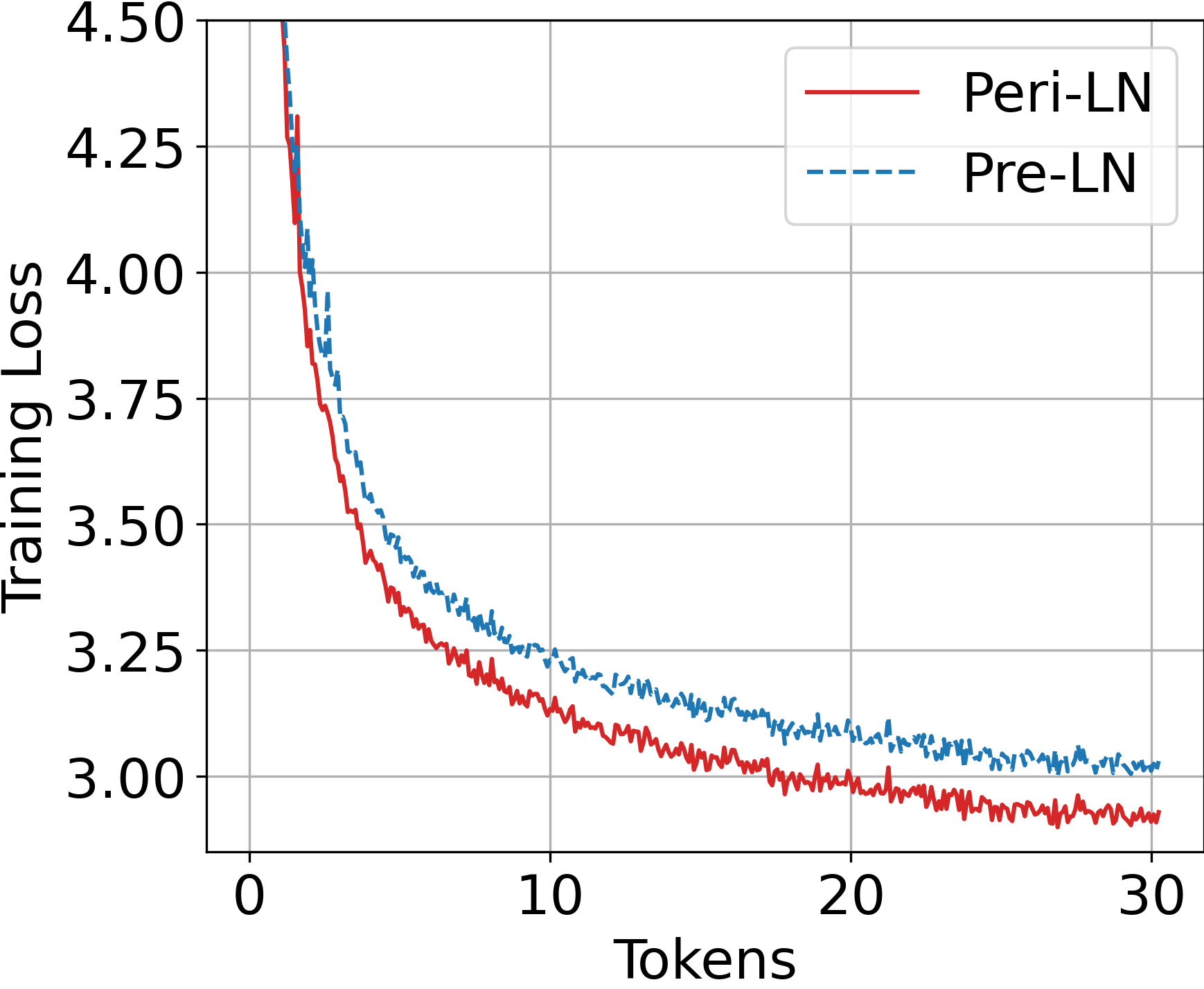} 
    }
    \subfigure[Gradient-norm curve]
    {
    \includegraphics[width=0.29\linewidth]{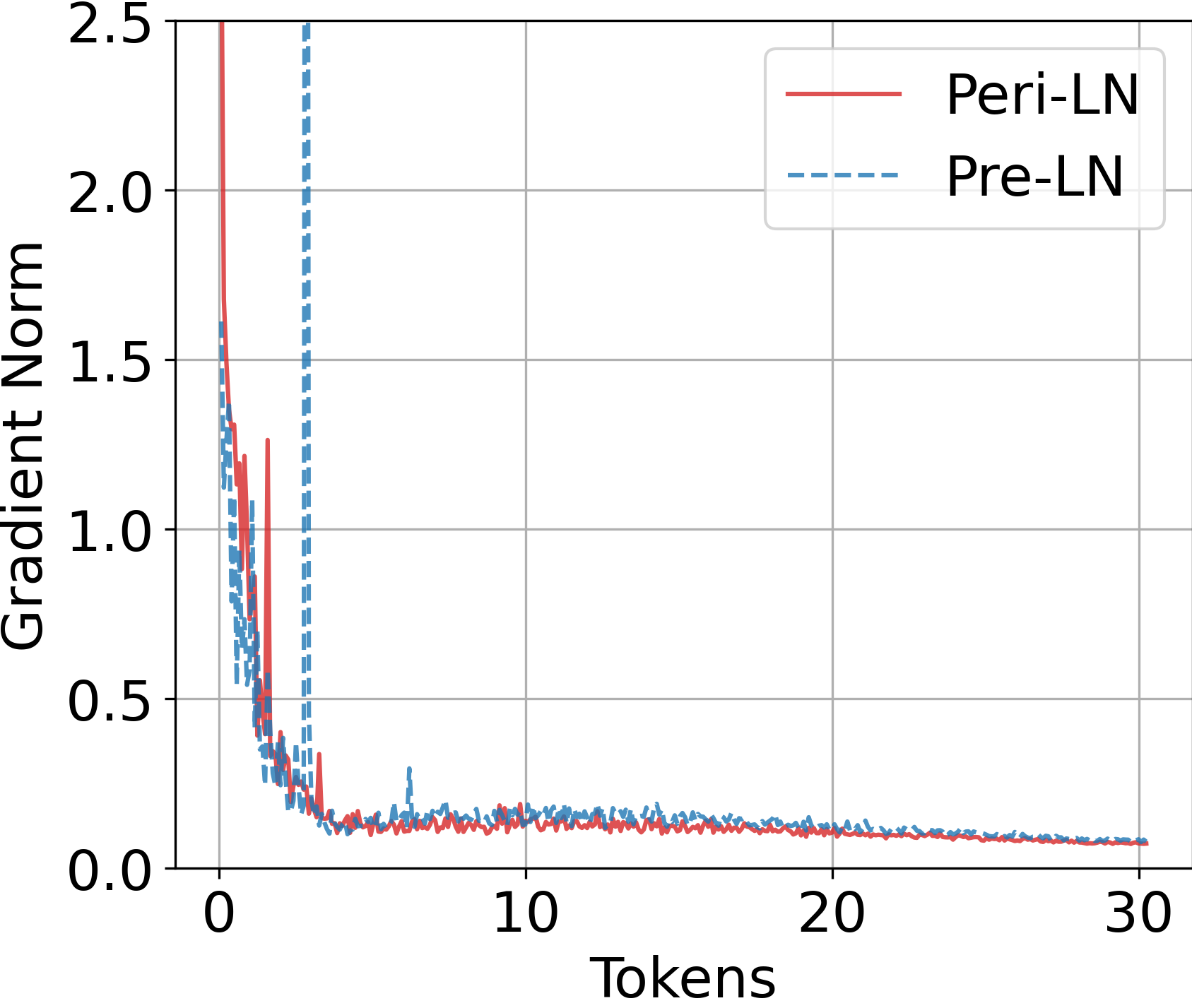}
    }
    \caption{Comparison of pre-training loss and gradient norms for Pre-LN and Peri-LN architectures with the weight decay coefficient fixed at $0.033$, while all other hyperparameters are held constant. }
    \vskip -0.2in
\end{figure}

\begin{figure}[ht!]
    \centering
    \subfigure[Pre-training loss curve]
    {
    \includegraphics[width=0.30\linewidth]{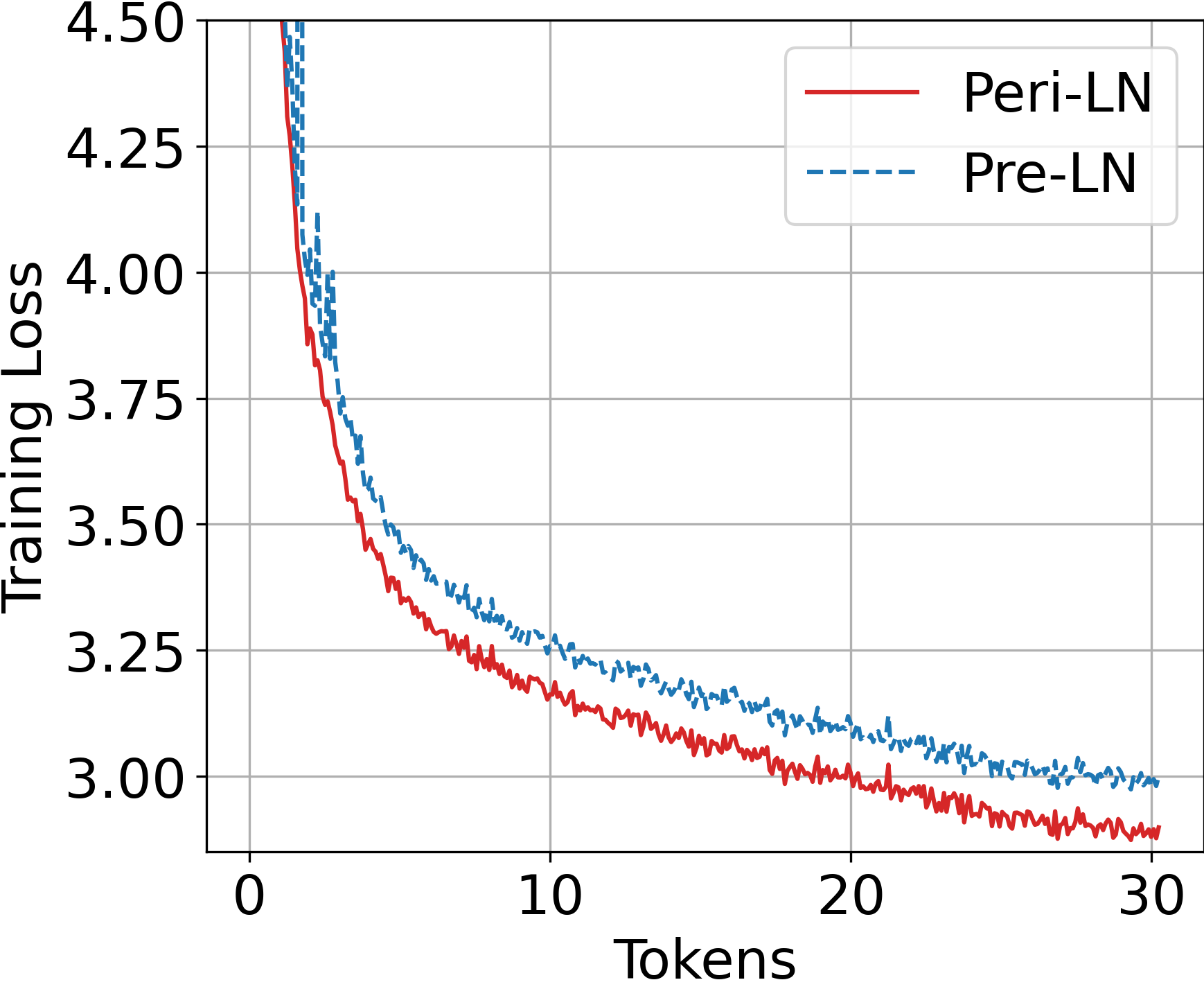} 
    }
    \subfigure[Gradient-norm curve]
    {
    \includegraphics[width=0.29\linewidth]{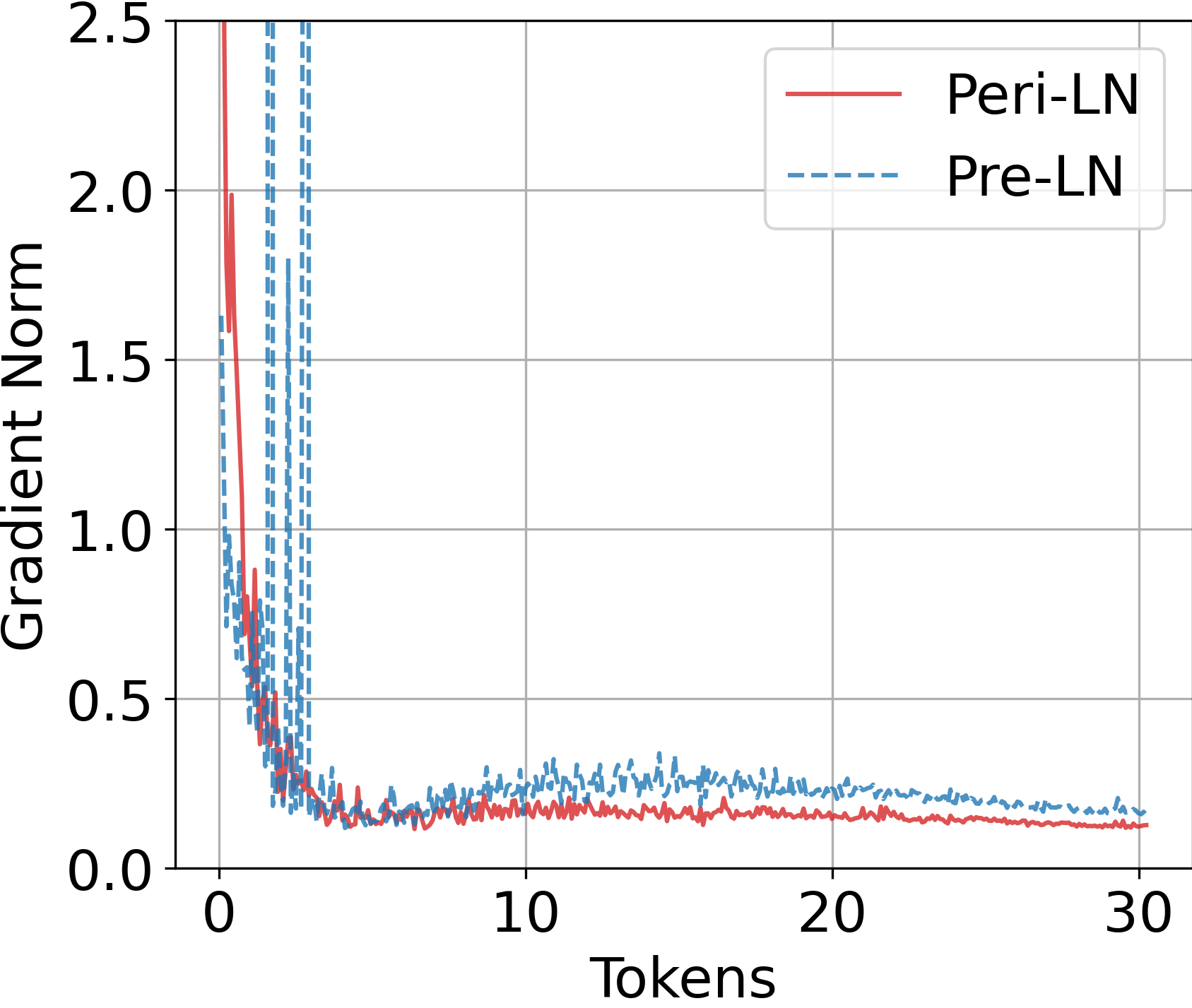}
    }
    \caption{Comparison of pre-training loss and gradient norms for Pre-LN and Peri-LN architectures with the weight decay coefficient fixed at $0.33$, while all other hyperparameters are held constant. }
    \vskip -0.2in
\end{figure}


\subsubsection{Weight Initialization}\label{appendix:weight_init}
We run an additional ablation on weight-initialization schemes. For both Pre-LN and Peri-LN models, we first adopt Xavier initialization \citep{xavier} and then compare it with He initialization ($2/d$) \citep{He_init}, LeCun initialization ($1/d$), and two scaled variants, $10/d$ and $1/(10d)$.

As Table \ref{tab:xavier} shows, Xavier initialization yields the strongest overall performance, improving on the configurations used in our earlier experiments. Crucially, our central finding remains intact: hidden-state variance sharply grows in Pre-LN Transformers but stays bounded in Peri-LN Transformers. Table \ref{tab:weight_init} confirms the same pattern across all initialization scales, and detailed per-run results appear below.

\begin{table}[!ht]
\caption{Xavier initialization \citep{xavier} yields better performance compared to our previous weight initialization configurations.}\label{tab:xavier}
    \centering
    \begin{tabular}{llcc}
    \toprule
        400M & Architecture & Baseline($0.02$) & Xavier Initialization \\ 
        \toprule
        Loss & Pre-LN & 3.03 & 2.95 \\ 
        ~ & Peri-LN & 2.93 & 2.91 \\ 
        \midrule
        Avg. & Pre-LN & 49.01 & 51.25 \\ 
        ~ & Peri-LN & 50.68 & 52.04 \\ 
        \bottomrule
    \end{tabular}
    \vskip -0.2in
\end{table}

\begin{table}[!ht]
\caption{Effect of weight-initialization variance on final pre-training loss for $400$M-parameter Pre-LN and Peri-LN Transformers.}\label{tab:weight_init}
    \centering
    \begin{tabular}{llccccc}
    \toprule
        400M & Initialization Variance & 10/d  & He (2/d) & LeCun (1/d) & 1/(10d) & Baseline ($0.02$) \\ 
        \toprule
        Loss & Pre-LN & 4.526 & 2.965 & 3.005 & 3.012 & 3.035 \\ 
        ~ & Peri-LN  & 3.027& 2.929 & 2.915 & 2.902 & 2.916 \\ 
        \bottomrule
    \end{tabular}
    \vskip -0.2in
\end{table}

\begin{figure}[ht!]
\vskip -0.2in
    \centering
    \subfigure[Pre-training loss curve]
    {
    \includegraphics[width=0.28\linewidth]{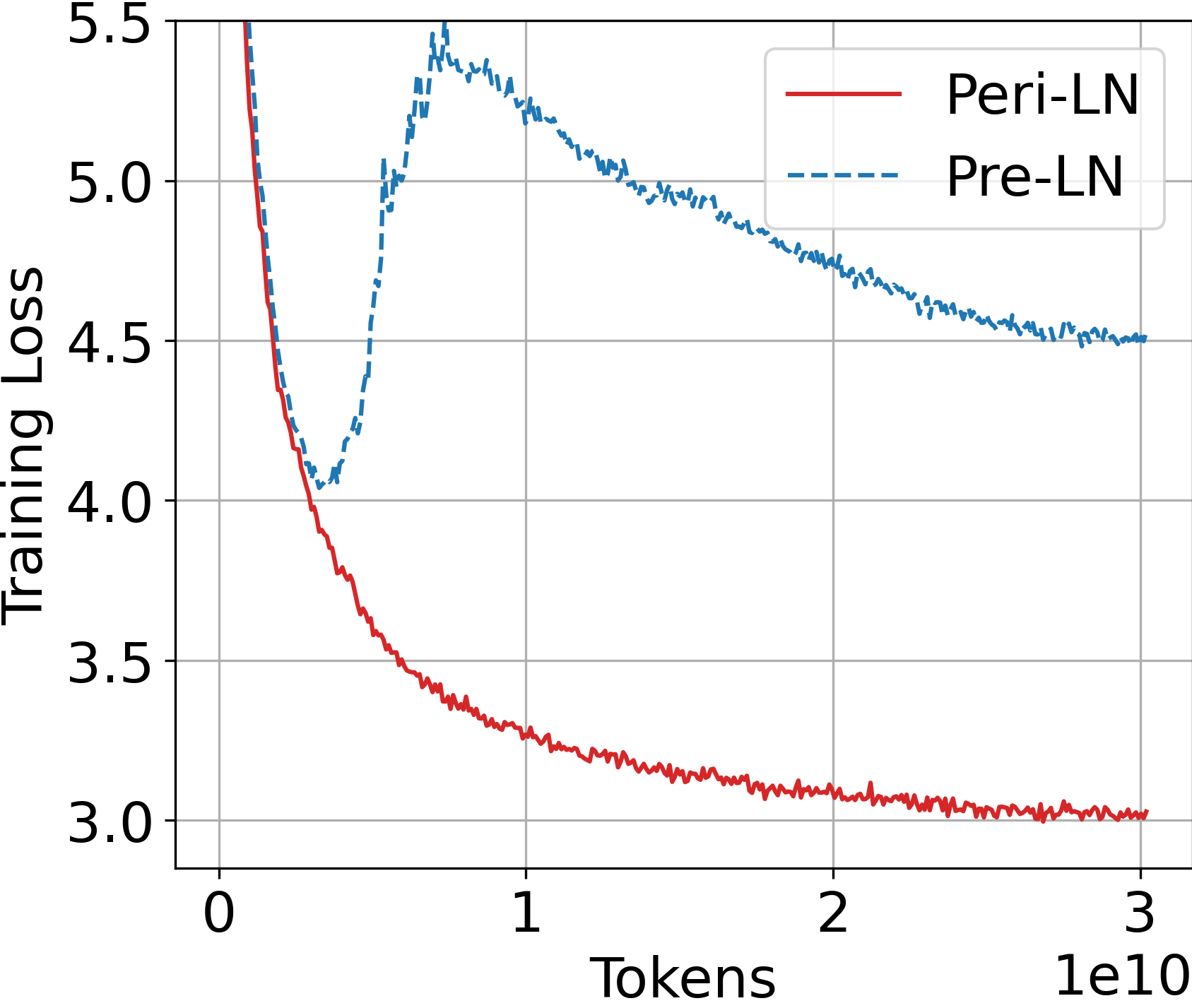} 
    }
    \subfigure[Gradient-norm curve]
    {
    \includegraphics[width=0.275\linewidth]{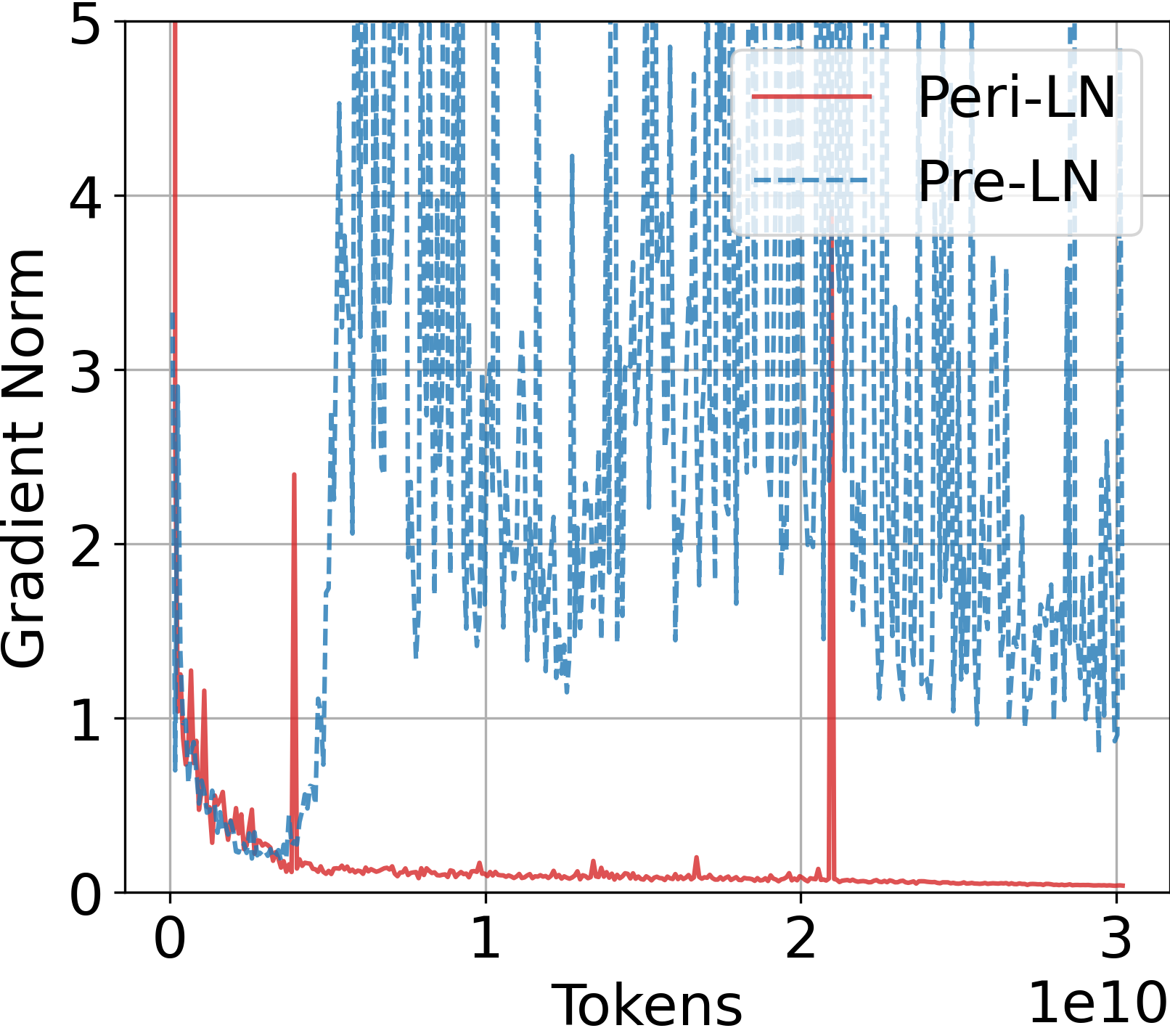}
    }
    \caption{Comparison of pre-training loss and gradient norms for Pre-LN and Peri-LN architectures with the weight initialization variance set to $10/d$, while all other hyperparameters are held constant. }
    \vskip -0.2in
\end{figure}

\begin{figure}[ht!]
\vskip -0.2in
    \centering
    \subfigure[Pre-training loss curve]
    {
    \includegraphics[width=0.28\linewidth]{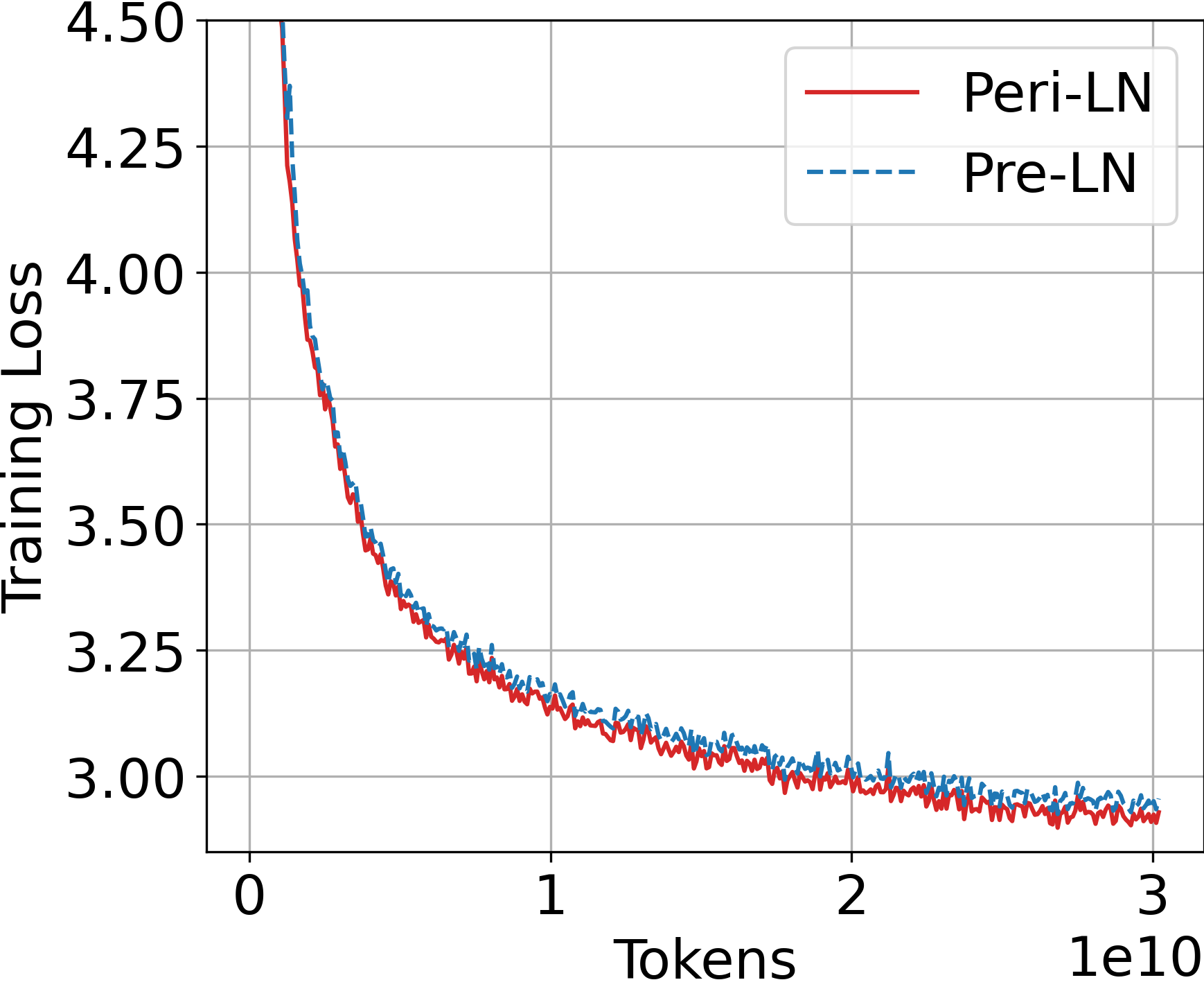} 
    }
    \subfigure[Gradient-norm curve]
    {
    \includegraphics[width=0.275\linewidth]{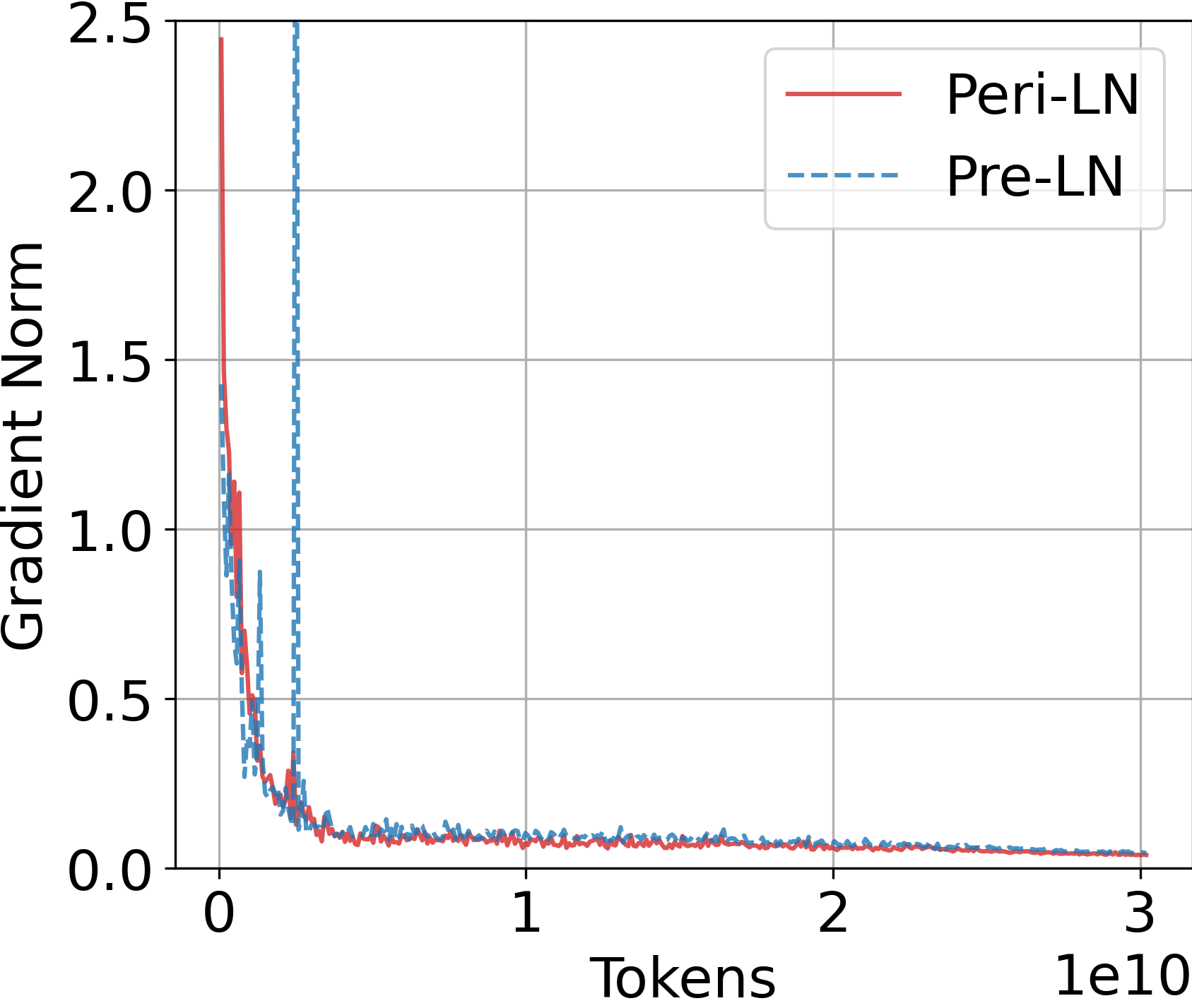}
    }
    \caption{Comparison of pre-training loss and gradient norms for Pre-LN and Peri-LN architectures with the weight initialization variance set to $2/d$ (He init \citep{He_init}), while all other hyperparameter are held constant. }
    \vskip -0.2in
\end{figure}

\begin{figure}[ht!]
\vskip -0.2in
    \centering
    \subfigure[Pre-training loss curve]
    {
    \includegraphics[width=0.28\linewidth]{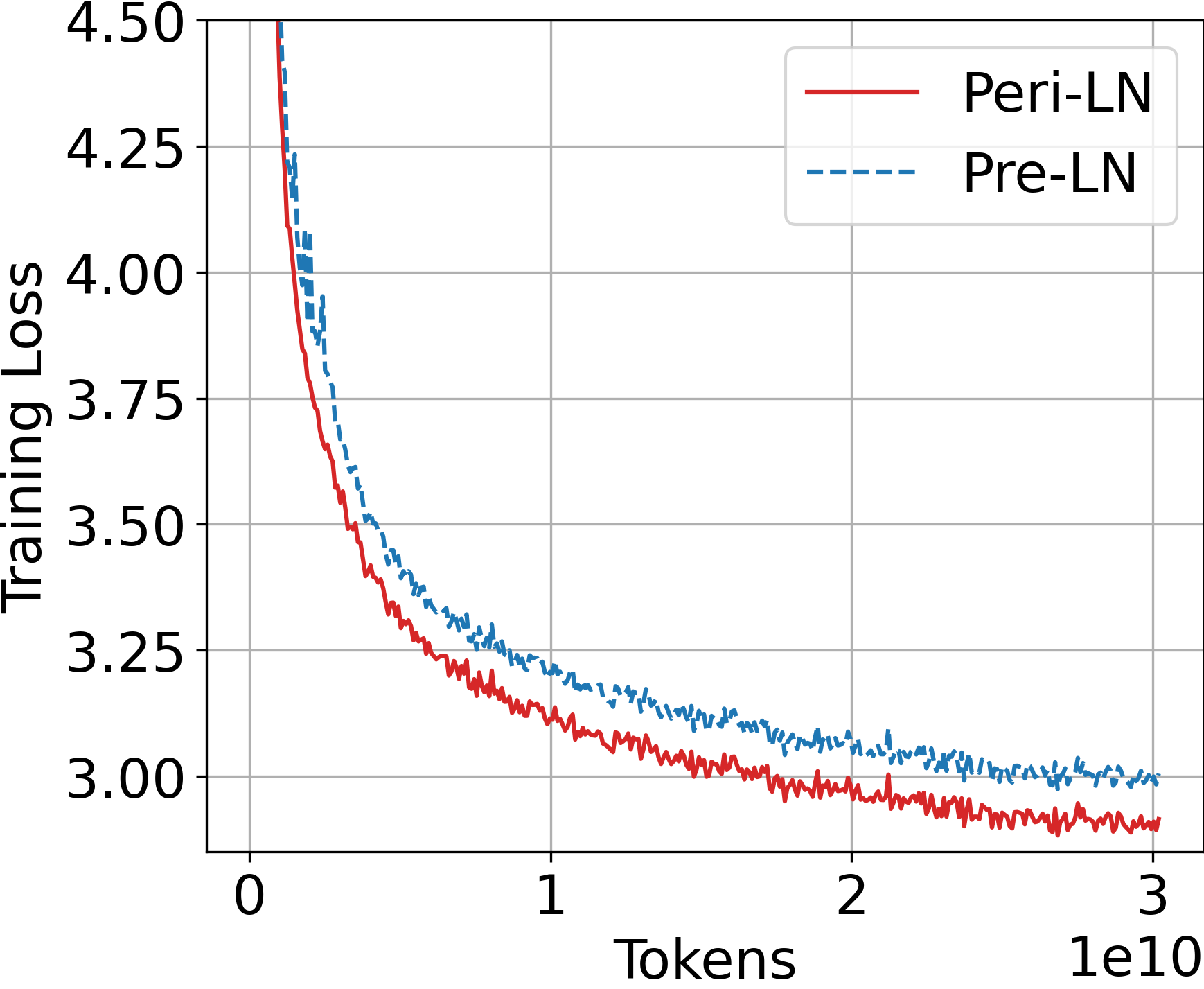} 
    }
    \subfigure[Gradient-norm curve]
    {
    \includegraphics[width=0.275\linewidth]{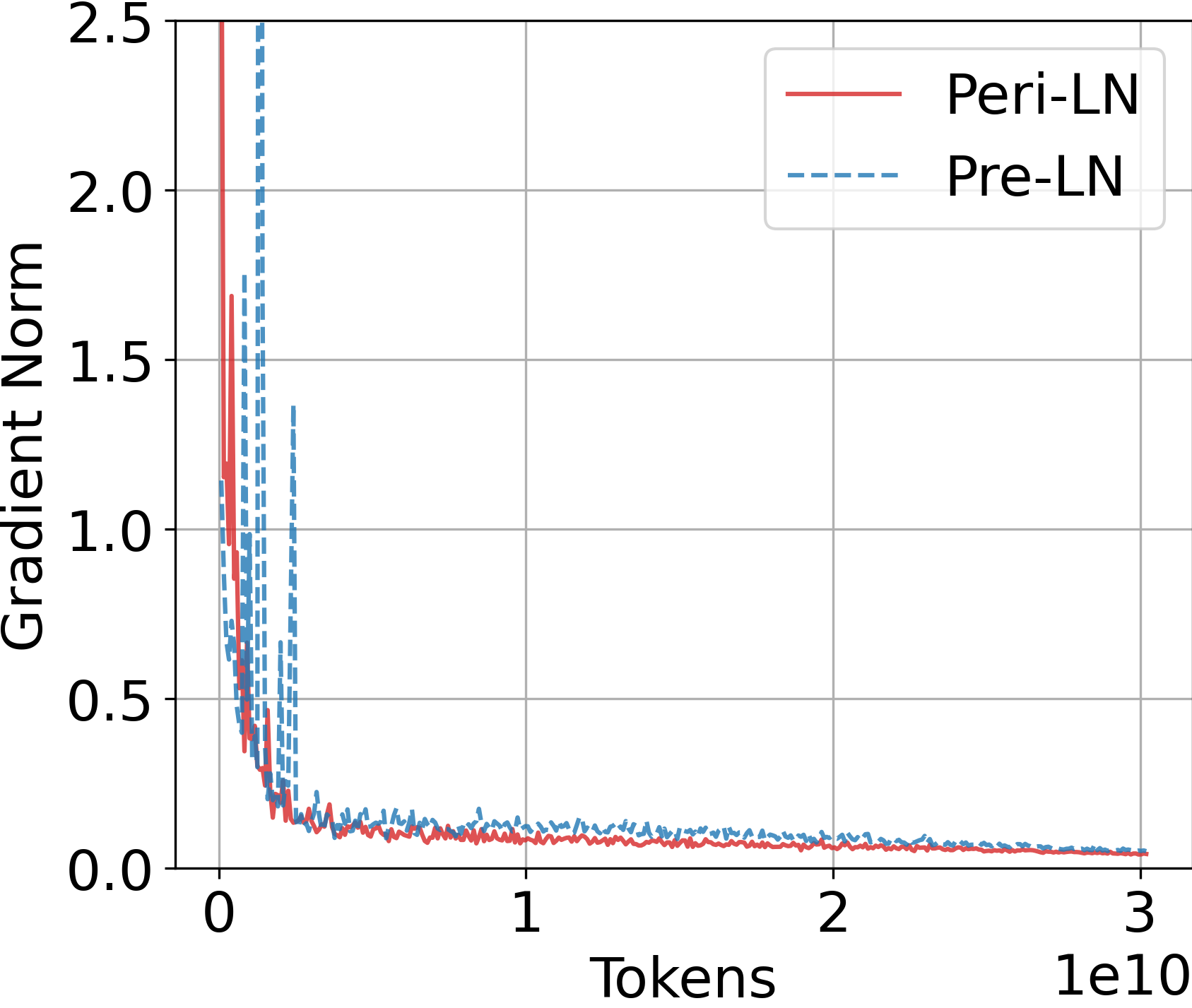}
    }
    \caption{Comparison of pre-training loss and gradient norms for Pre-LN and Peri-LN architectures with the weight initialization variance set to $1/d$ (LeCun init), while all other hyperparameter are held constant. }
    \vskip -0.2in
\end{figure}

\begin{figure}[ht!]
\vskip -0.2in
    \centering
    \subfigure[Pre-training loss curve]
    {
    \includegraphics[width=0.28\linewidth]{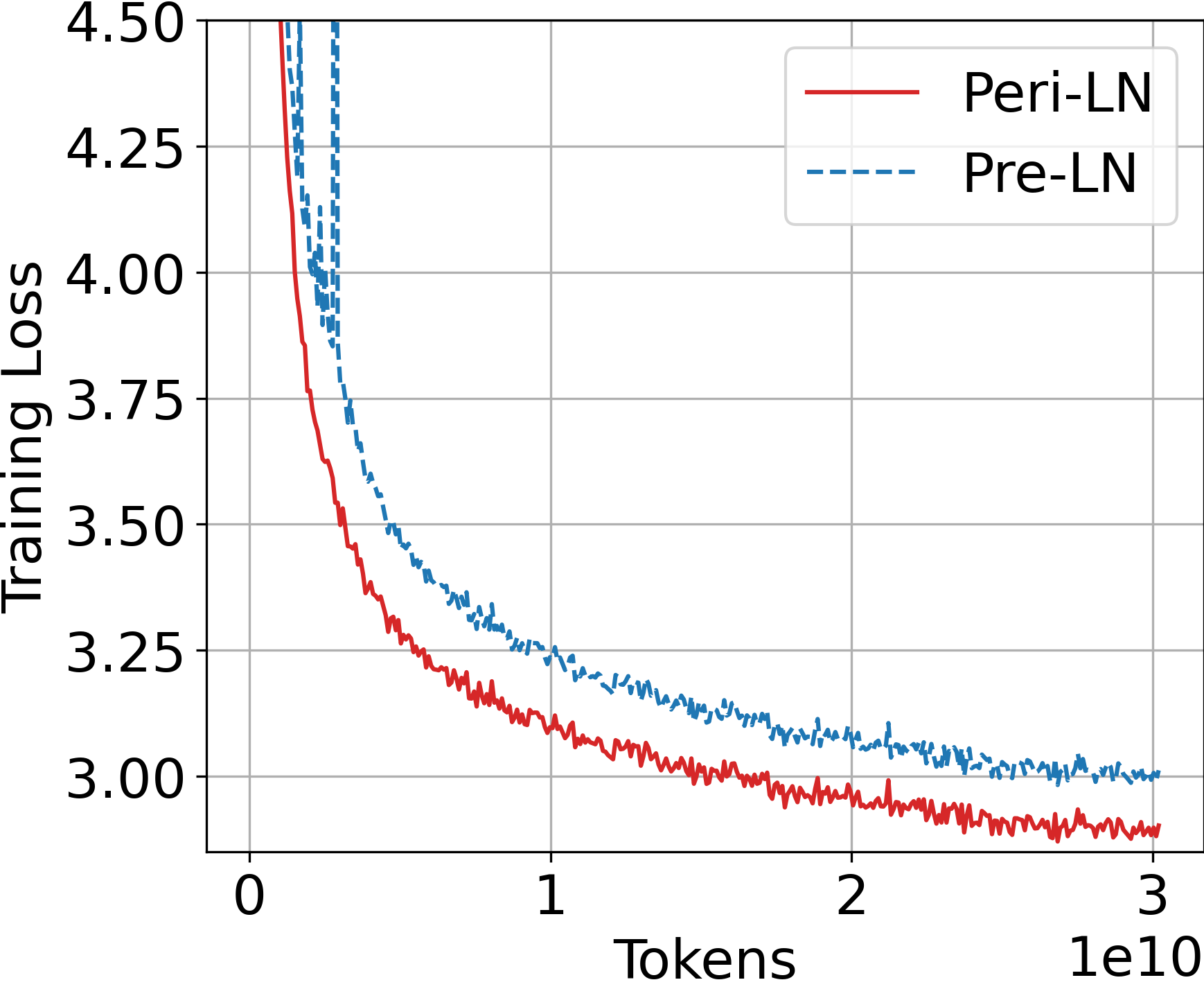} 
    }
    \subfigure[Gradient-norm curve]
    {
    \includegraphics[width=0.275\linewidth]{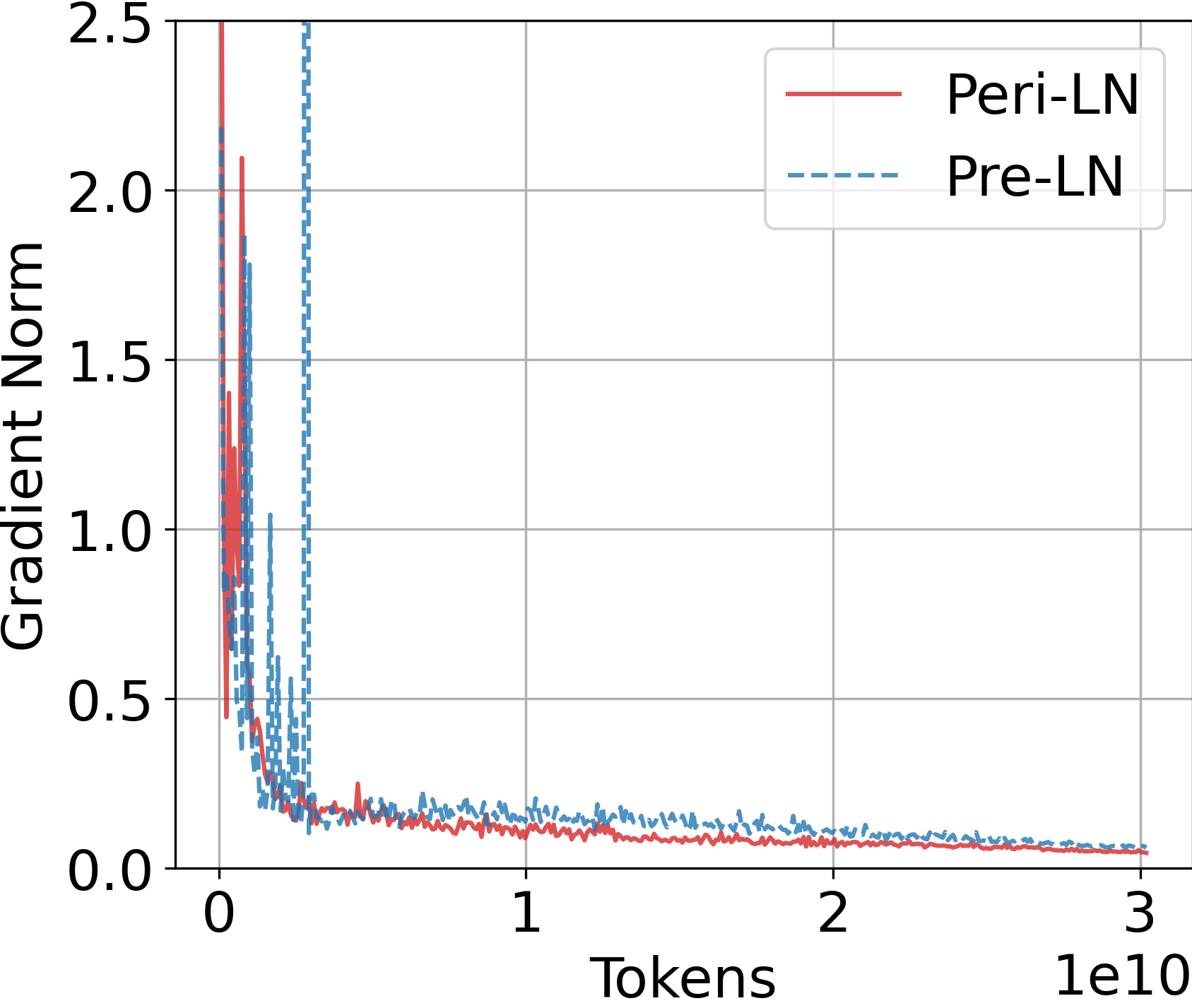}
    }
    \caption{Comparison of pre-training loss and gradient norms for Pre-LN and Peri-LN architectures with the weight initialization variance set to $1/(10d)$, while all other hyperparameter are held constant. }
    \vskip -0.2in
\end{figure}

\newpage
\section{Output-Layer Normalization with QK-Norm Architecture} \label{appendix:olmo2}
Query and Key layer-normalization (QK-Norm) has been widely studied in modern Transformer architectures \citep{smallproxies, attentioncollapse, olmo2}. In particular, \citet{olmo2} reported that QK-Norm combined with module output layer-normalization (output-LN, $B$ in Figure \ref{fig:qk-norm} referred to as “reordered norm” in the OLMo$2$ paper) improves both training loss and stability. As shown in Figure \ref{fig:qk-norm}, QK-Norm is applied after the Query and Key projections, similar to output-LN. From another perspective, QK-Norm is also applied immediately before the attention calculation, akin to a Pre-LN approach. In our view, QK-Norm and Pre-LN (placed at $A^2$ and $A$ respectively in Figure \ref{fig:qk-norm}) serve the same role but differ in certain details. As shown in Figures~\ref{fig:olmo_400M}, \ref{fig:olmo_1B}, and \ref{fig:olmo_3B}, the two architectures exhibit comparable performance overall in terms of both training loss and stability.. However, Peri-LN provides a slight performance advantage over the OLMo$2$-style Peri-LN in the $400$M- and $1$B-parameter models.

\begin{figure}[ht!]
    \centering
        \includegraphics[width=.6\linewidth]{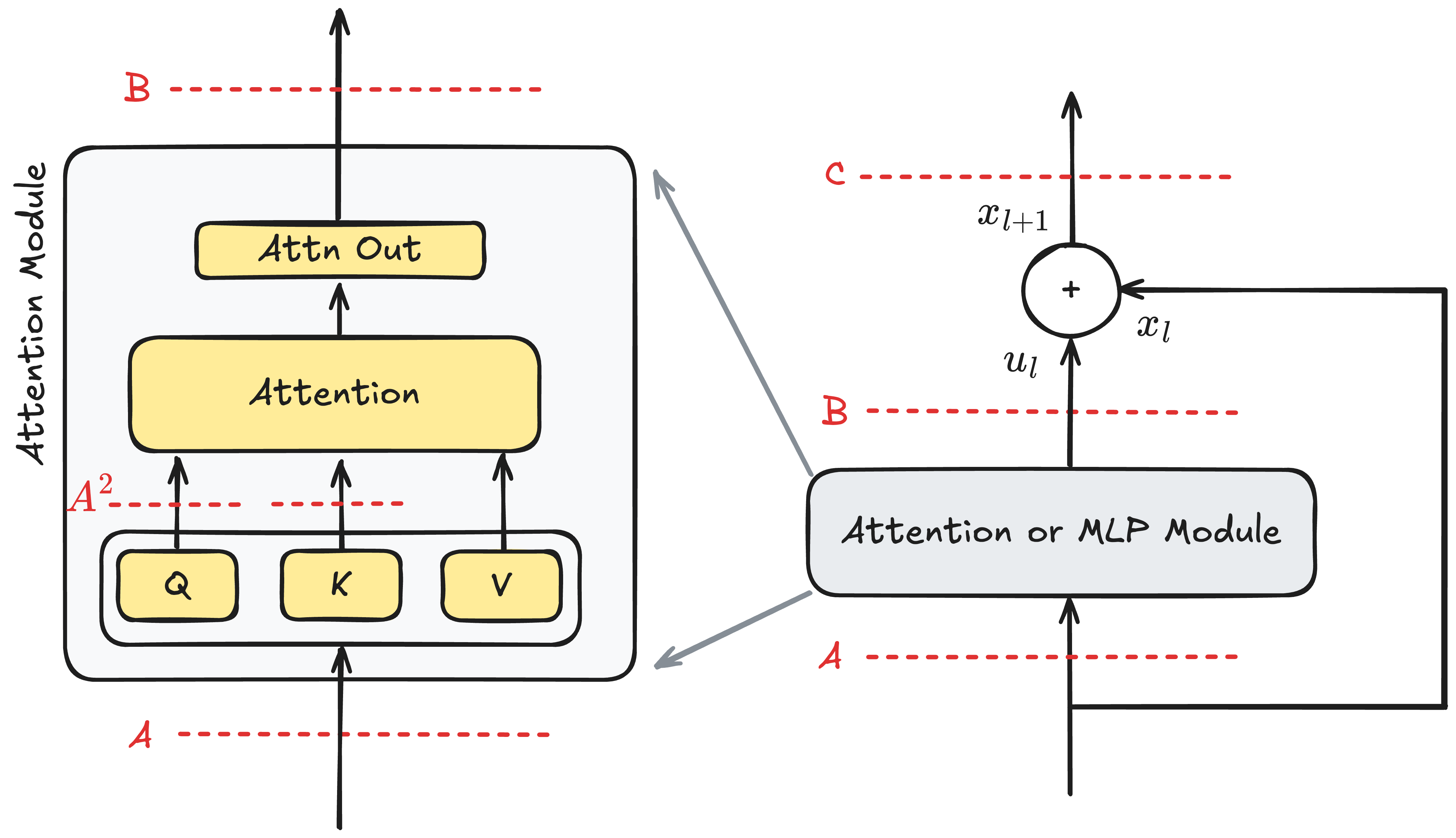}
    \caption{QK-layer normalization in the Attention module.}
    \label{fig:qk-norm}
\end{figure}

\begin{figure}[ht!]
    \centering
    \subfigure[Pre-training loss curve]
    {
    \includegraphics[width=0.30\linewidth]{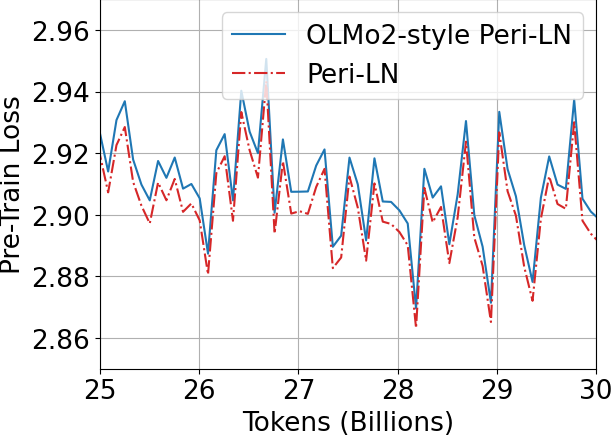} 
    }
    \subfigure[Gradient-norm curve]
    {
    \includegraphics[width=0.275\linewidth]{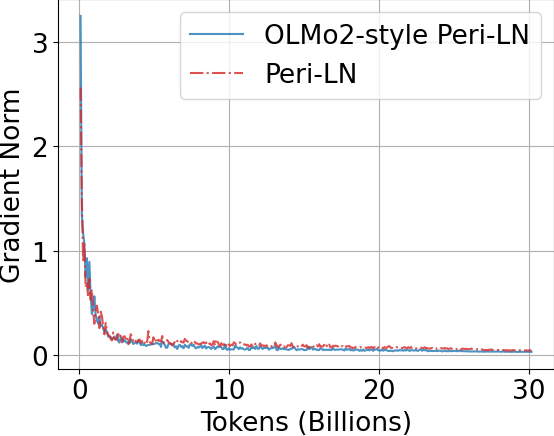}
    }
    \caption{Comparison of pre-training loss and gradient norm between OLMo2-Style Peri-LN and the Peri-LN architecture. To ensure an accurate comparison, we present the pre-training loss over the final $5$B tokens. $400$M size model.}
    \label{fig:olmo_400M}
\end{figure}

\newpage

\begin{figure}[ht!]
    \centering
    \subfigure[Pre-training loss curve]
    {
    \includegraphics[width=0.30\linewidth]{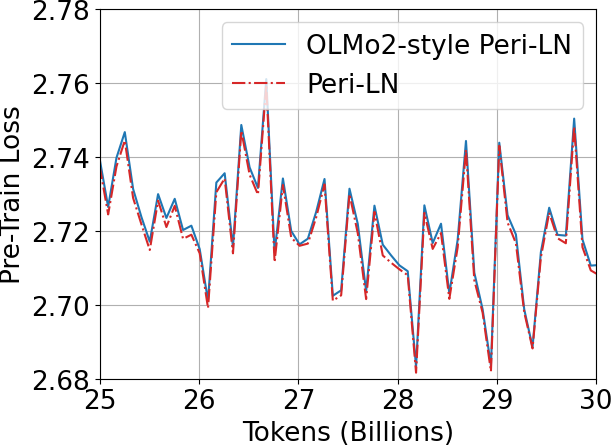} 
    }
    \subfigure[Gradient-norm curve]
    {
    \includegraphics[width=0.275\linewidth]{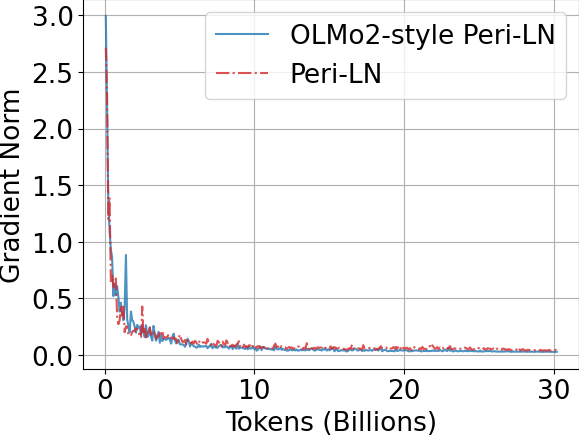}
    }
    \caption{Comparison of pre-training loss and gradient norm between OLMo2-Style Peri-LN and the Peri-LN architecture. To ensure an accurate comparison, we present the pre-training loss over the final $5$B tokens. $1.5$B size model.}
    \label{fig:olmo_1B}
    \vskip -0.2in
\end{figure}

\begin{figure}[ht!]
    \centering
    \subfigure[Pre-training loss curve]
    {
    \includegraphics[width=0.30\linewidth]{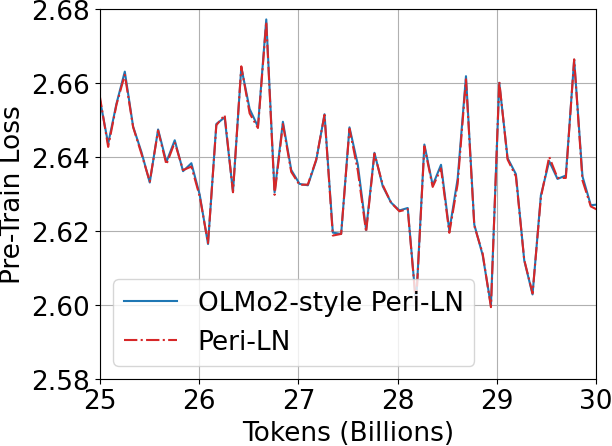} 
    }
    \subfigure[Gradient-norm curve]
    {
    \includegraphics[width=0.275\linewidth]{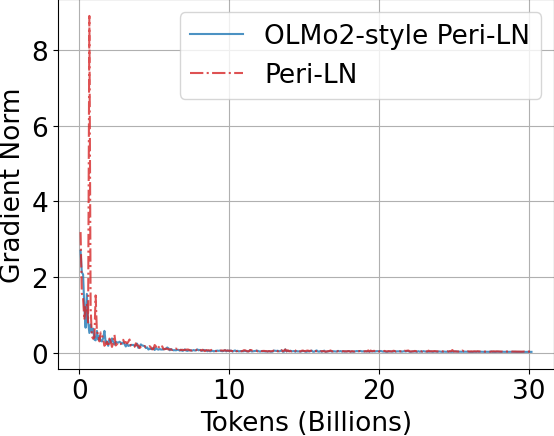}
    }
    \caption{Comparison of pre-training loss and gradient norm between OLMo2-Style Peri-LN and the Peri-LN architecture. To ensure an accurate comparison, we present the pre-training loss over the final $5$B tokens. $3.2$B size model.}
    \label{fig:olmo_3B}
    \vskip -0.2in
\end{figure}


\section{Training the Transformer using Stochastic Gradient Descent} \label{appendix:sgd}
Using Stochastic Gradient Descent (SGD) for training Transformers is not a common practice. As \citet{sgd} point out, Transformer-based models tend to perform worse with SGD than with Adam by a considerable margin. One reason is that SGD struggles to handle the heterogeneity across different blocks. Although these aspects are certainly intriguing and warrant further investigation, they lie beyond the scope of our current work, as \citet{sgd} also note.

Nonetheless, for someone who might wonder to know, we conduct additional experiments using SGD. 
We are searching for U-shaped patterns during the learning rate exploration for both Pre-LN \& Peri-LN as shown in the Figure \ref{fig:sgd_lr_exploration}.
We observe that: (1) SGD performs worse than Adam, consistent with findings reported in \citep{sgd}; and (2) Peri-LN demonstrates better performance than Pre-LN. In these results, we use $400$ M-parameter Transformers and apply the same configurations as in the main experiments (Section \ref{subsec:settings}). We provide detailed training curves in Figures~\ref{fig:appendix_sgd_lr5e3},~\ref{fig:appendix_sgd_lr3e3},~\ref{fig:appendix_sgd_lr1e3}.

\begin{figure}[ht!]
    \centering
    \includegraphics[width=0.35\linewidth]{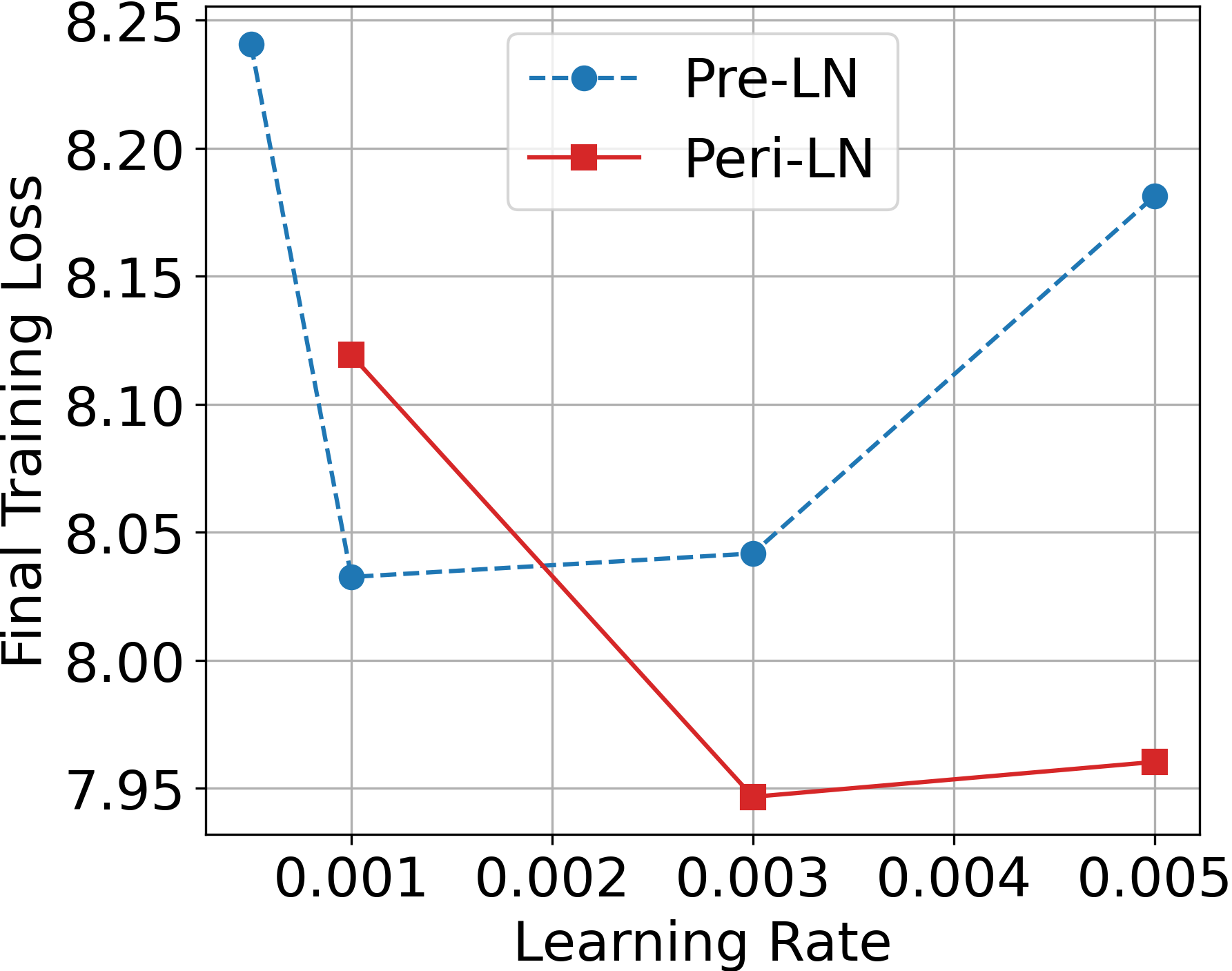} 
    \caption{Learning Rate Exploration of Pre-\& Peri-LN architecture trained with SGD optimizer. The individual training-loss and gradient-norm curves appear in Figures \ref{fig:appendix_sgd_lr5e3},  \ref{fig:appendix_sgd_lr3e3}, and \ref{fig:appendix_sgd_lr1e3}.}
    \label{fig:sgd_lr_exploration}
\end{figure}

\newpage

\begin{figure}[ht!]
    \centering
    \subfigure[Pre-training loss curve]
    {
    \includegraphics[width=0.30\linewidth]{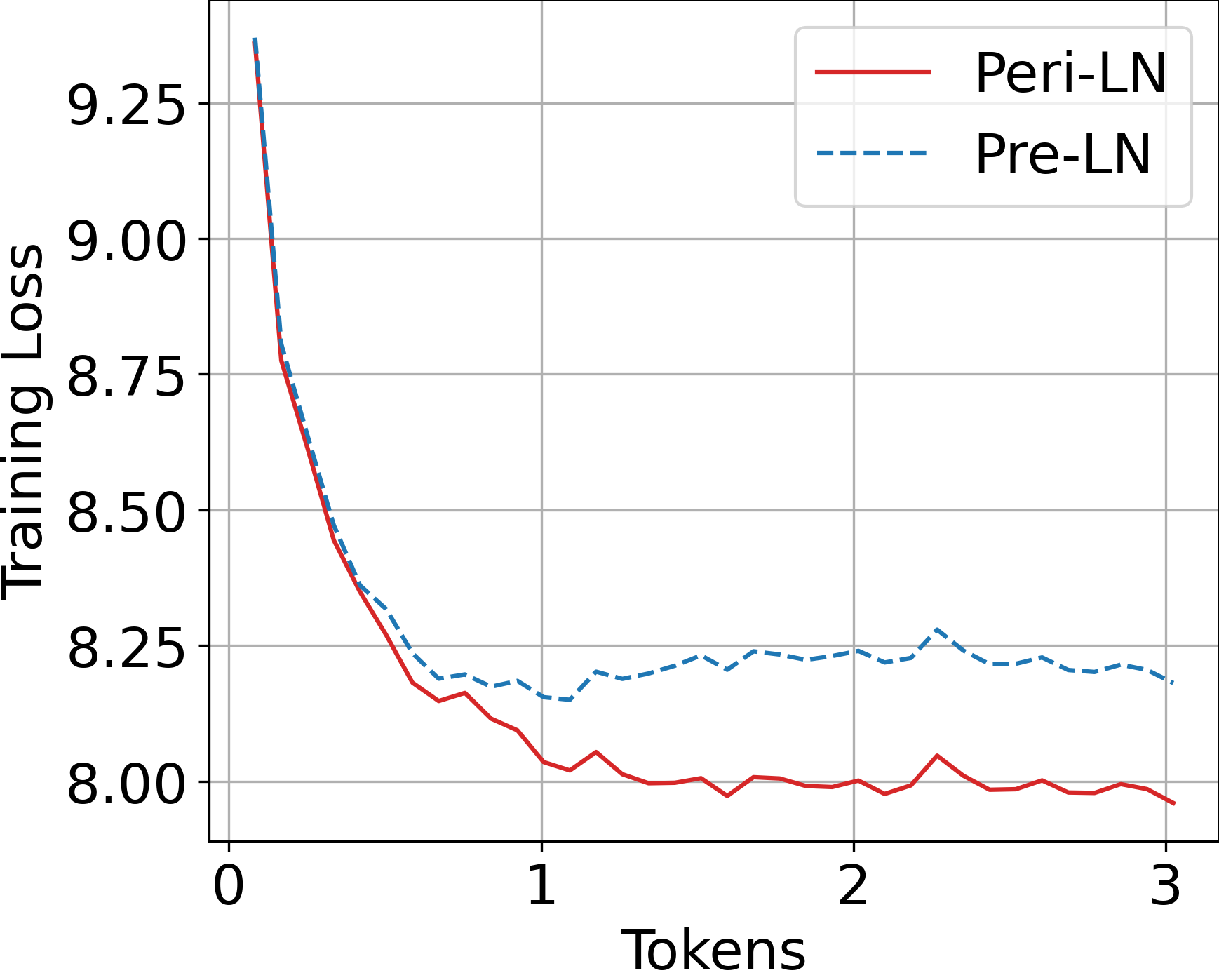} 
    }
    \subfigure[Gradient-norm curve]
    {
    \includegraphics[width=0.28\linewidth]{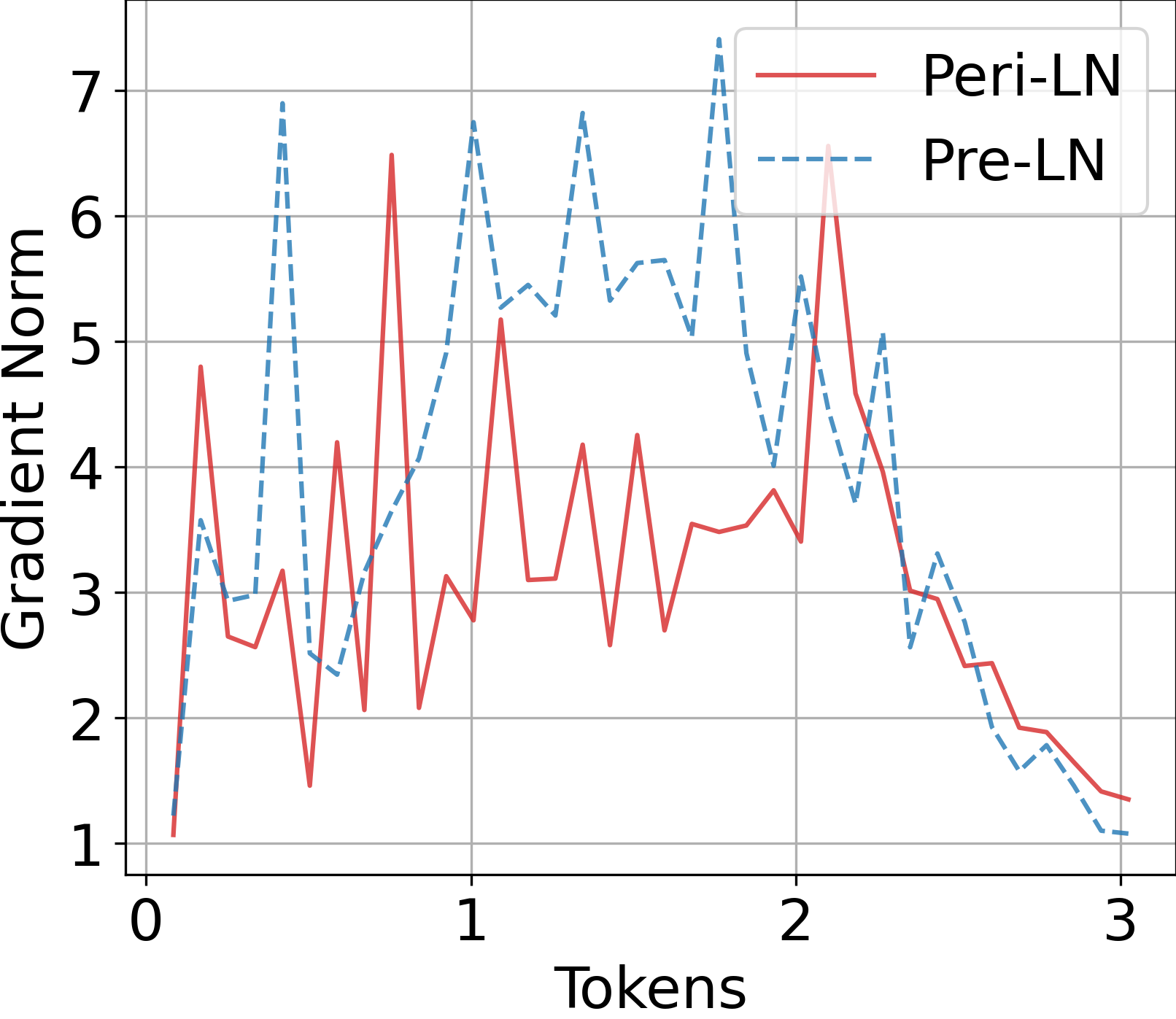}
    }
    \caption{Using SGD with learning rate $5e-3$.}\label{fig:appendix_sgd_lr5e3}
\end{figure}

\begin{figure}[ht!]
    \centering
    \subfigure[Pre-training loss curve]
    {
    \includegraphics[width=0.30\linewidth]{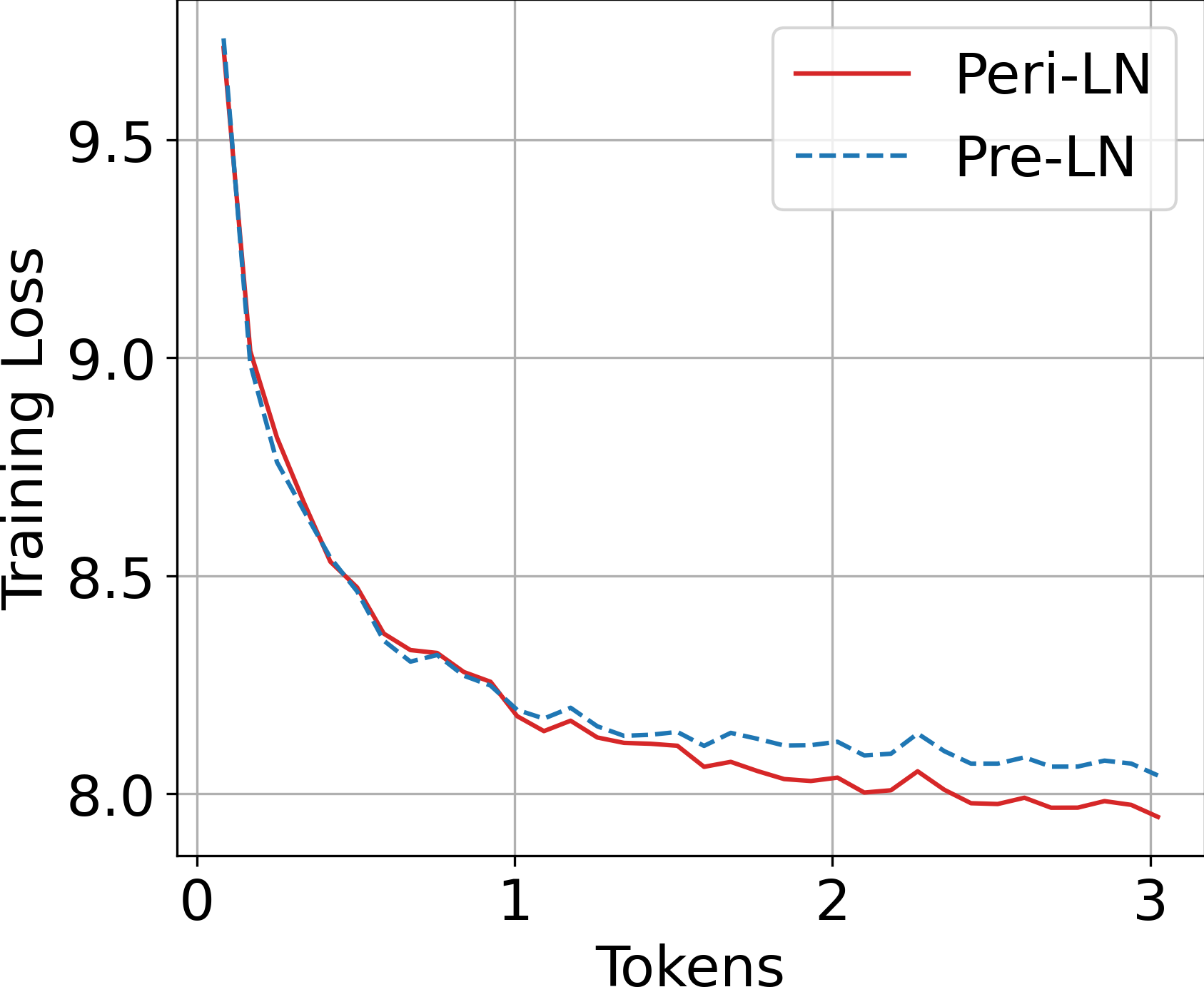} 
    }
    \subfigure[Gradient-norm curve]
    {
    \includegraphics[width=0.29\linewidth]{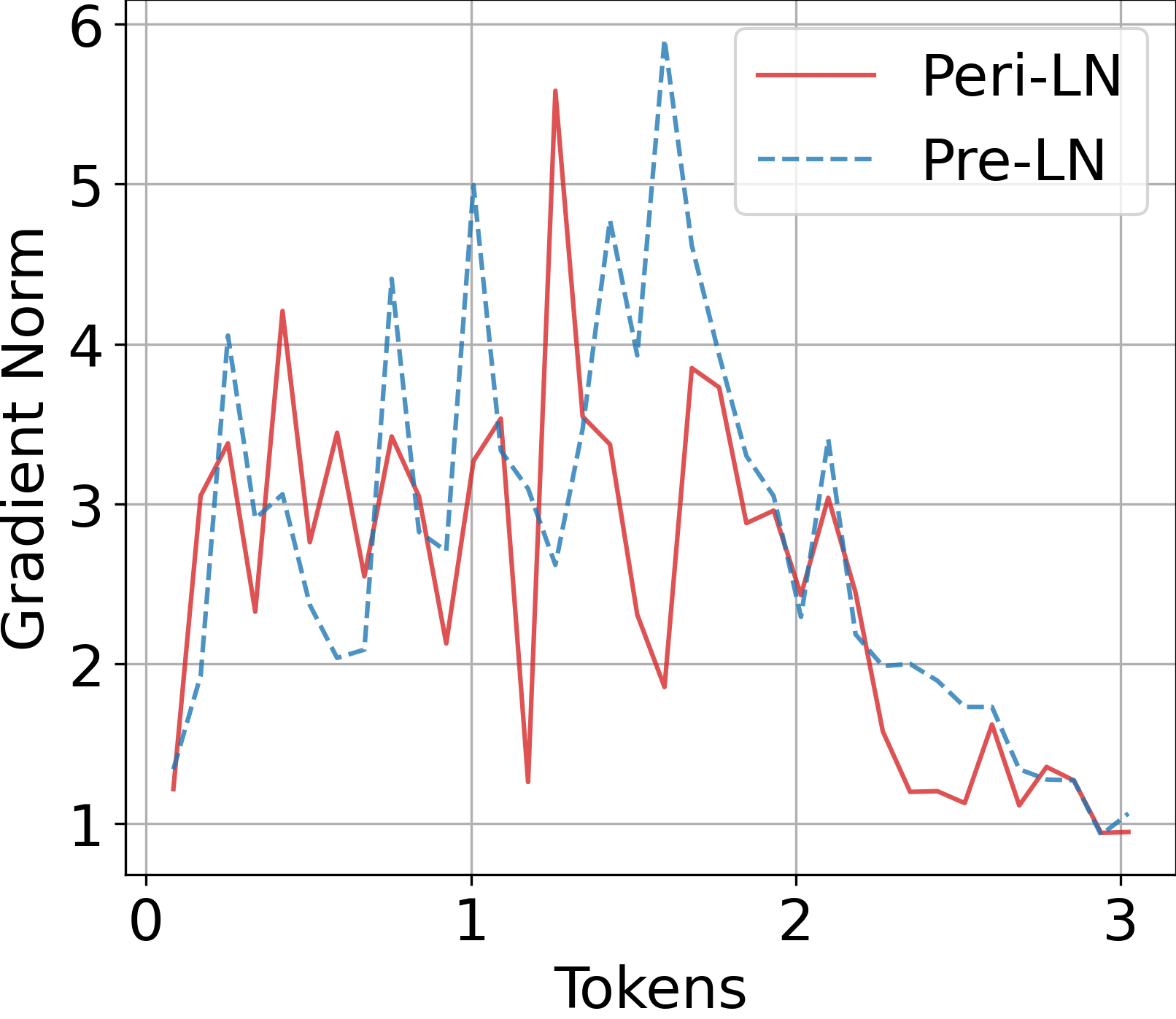}
    }
    \caption{Using SGD with learning rate $3e-3$.}\label{fig:appendix_sgd_lr3e3}
\end{figure}

\begin{figure}[ht!]
    \centering
    \subfigure[Pre-training loss curve]
    {
    \includegraphics[width=0.30\linewidth]{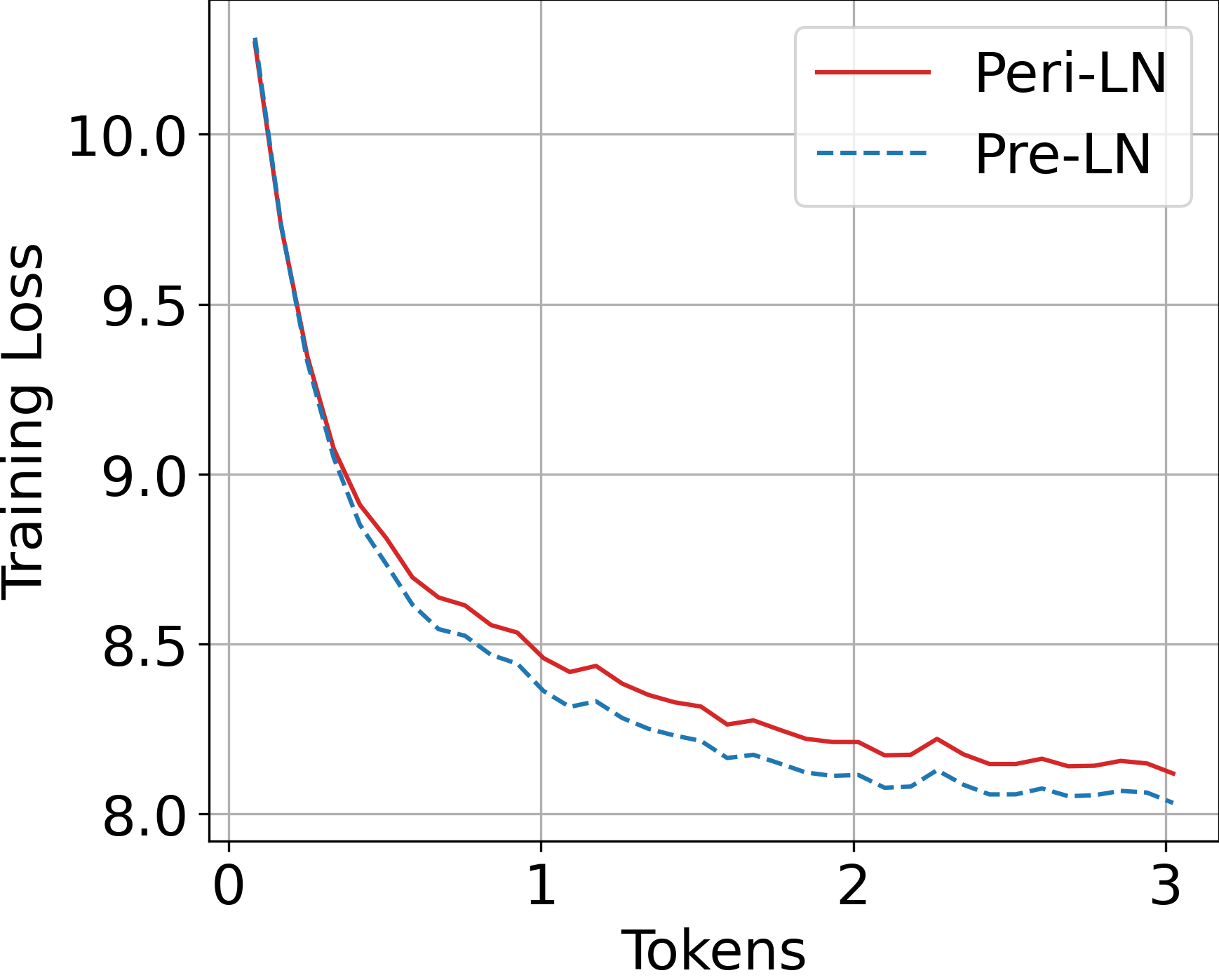} 
    }
    \subfigure[Gradient-norm curve]
    {
    \includegraphics[width=0.29\linewidth]{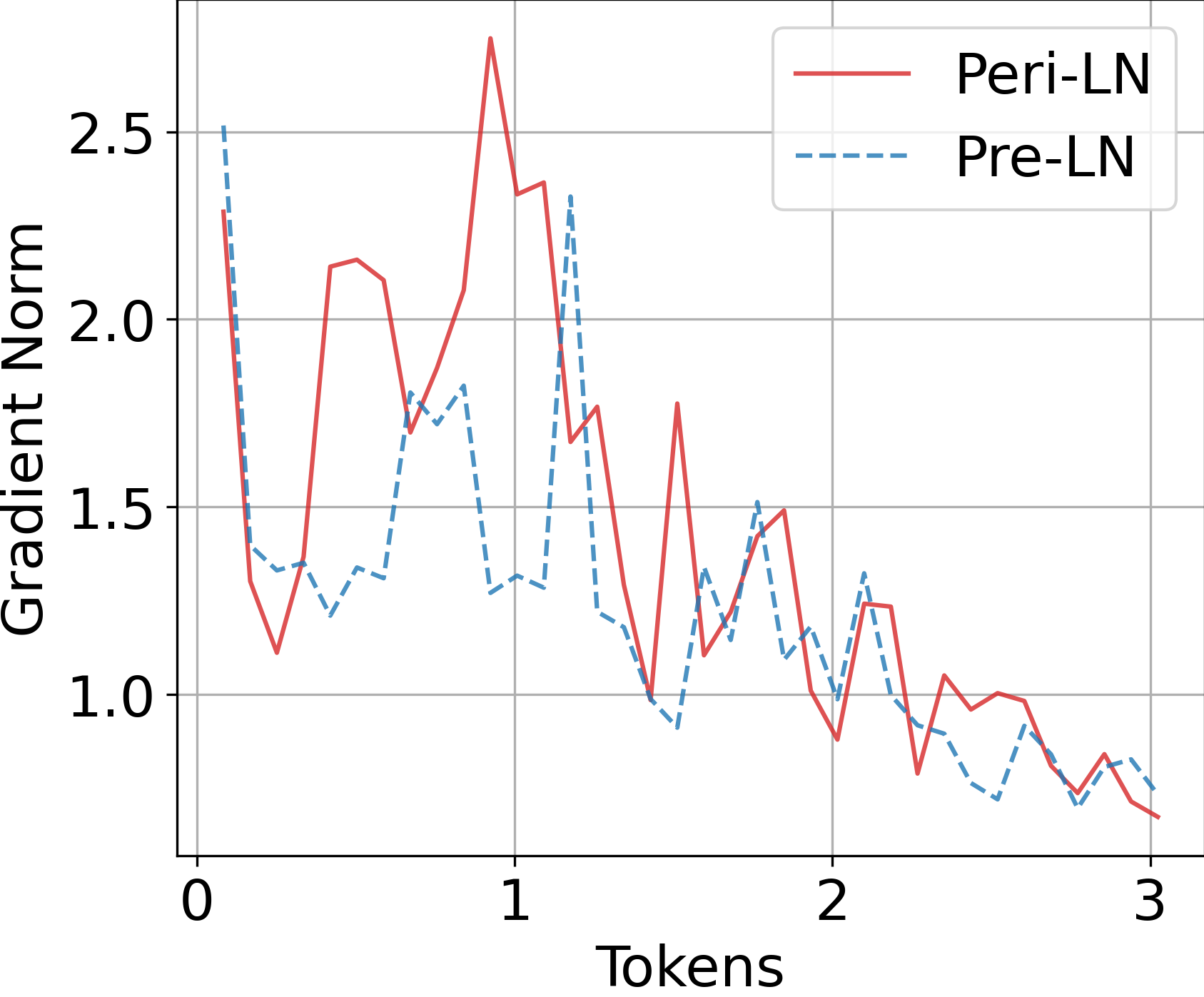}
    }
    \caption{Using SGD with learning rate $1e-3$.}\label{fig:appendix_sgd_lr1e3}
\end{figure}

\newpage

\section{Additional Details on Evaluation} 
\label{appendix:morebenchmarks}

\subsection{Detailed Configurations}
We utilized the Language Model Evaluation Harness\footnote{\url{https://github.com/EleutherAI/lm-evaluation-harness}}with the HuggingFace Transformers library \citep{eval-harness, wolf-etal-2020-transformers} to assess overall performance. We employ five different evaluation benchmarks: ARC \citep{clark2018thinksolvedquestionanswering},  HellaSwag \citep{hellaswag}, PIQA \citep{bisk2020piqa}, SIQA \citep{siqa}, Winogrande \citep{sakaguchi2021wino}.  During the pretraining stage, each model was trained under a controlled random seed. We used the training loss at iteration $14,000$—corresponding to the completion of $30$B tokens—as our main reference point. When calculating the evaluation score, diverged checkpoints were excluded.

\subsection{Detailed Results on Benchmark Evaluations} \label{appendix:detailed_evals}

In this section, we present the evaluation results for each model trained with five different training seeds. We exclude any diverged scores and average the remaining values, which are then reported in Table~\ref{tab:pre-train} in the main text.

\subsubsection{Pre-Training}
\begin{table}[!ht]
\vskip -0.1in
\small
\caption{Detailed results on pre-training the Peri-LN architecture. These results are averaged to produce the values reported in Table~\ref{tab:pre-train}. \textit{SEED} denotes pre-training seed.}
    \centering
    \begin{tabular}{lcccccc}
    \toprule
        Peri-LN & SEED & ARC-Easy & HellaSwag & PIQA & SIQA & Winogrande \\ 
        \toprule
        ~ & 1 & 0.5758 & 0.3803 & 0.6980 & 0.4115 & 0.5225 \\ 
        ~ & 2 & 0.5728 & 0.3739 & 0.6915 & 0.4017 & 0.5367 \\ 
        400M & 3 & 0.5842 & 0.3745 & 0.6986 & 0.4125 & 0.5249 \\ 
        ~ & 4 & 0.5800 & 0.3722 & 0.6959 & 0.4038 & 0.5209 \\ 
        ~ & 5 & 0.5627 & 0.3719 & 0.6899 & 0.4028 & 0.5320 \\ 
        \midrule
        ~ & 1 & 0.6599 & 0.4437 & 0.7339 & 0.4304 & 0.5714 \\ 
        ~ & 2 & 0.6591 & 0.4394 & 0.7399 & 0.4145 & 0.5699 \\ 
        1.5B & 3 & 0.6625 & 0.4357 & 0.7372 & 0.4166 & 0.5627 \\ 
        ~ & 4 & 0.6633 & 0.4367 & 0.7345 & 0.4222 & 0.5667 \\ 
        ~ & 5 & 0.6637 & 0.4416 & 0.7361 & 0.4335 & 0.5612 \\ 
        \midrule
        ~ & 1 & 0.6953 & 0.4734 & 0.7443 & 0.4417 & 0.5872 \\ 
        ~ & 2 & 0.6839 & 0.4684 & 0.7427 & 0.4324 & 0.6054 \\ 
        3.2B & 3 & 0.6902 & 0.4680 & 0.7486 & 0.4243 & 0.5967 \\ 
        ~ & 4 & 0.6864 & 0.4700 & 0.7427 & 0.4273 & 0.5935 \\ 
        ~ & 5 & 0.6806 & 0.4698 & 0.7372 & 0.4243 & 0.6054 \\ 
        \bottomrule
    \end{tabular}
    \vskip -0.1in
\end{table}

\begin{table}[ht]
\vskip -0.1in
\small
    \centering
    \caption{Detailed results on pre-training the Pre-LN architecture. These results are averaged to produce the values reported in Table~\ref{tab:pre-train}. \textit{SEED} denotes pre-training seed.}
    \label{tab:pre-ln-eval-all}
    \begin{tabular}{lcccccc}
    \toprule
        Pre-LN & SEED & ARC-Easy & HellaSwag & PIQA & SIQA & Winogrande \\ 
        \toprule
        ~ & 1 & 0.5669 & 0.3609 & 0.7008 & 0.4002 & 0.5359 \\ 
        ~ & 2 & ~ & ~ & Diverged & ~ & ~ \\ 
        400M & 3 & 0.5354 & 0.3328 & 0.6741 & 0.3905 & 0.4957 \\ 
        ~ & 4 & ~ & ~ & Diverged & ~ & ~ \\ 
        ~ & 5 & 0.5438 & 0.3314 & 0.6888 & 0.4012 & 0.4949 \\ 
        \midrule
        ~ & 1 & 0.6326 & 0.4259 & 0.7242 & 0.4263 & 0.5691 \\ 
        ~ & 2 & 0.6019 & 0.3924 & 0.7111 & 0.3992 & 0.5627 \\ 
        1.5B & 3 & 0.6077 & 0.3932 & 0.7008 & 0.4125 & 0.5272 \\ 
        ~ & 4 & 0.6111 & 0.3886 & 0.7187 & 0.4135 & 0.5225 \\ 
        ~ & 5 & 0.6221 & 0.3941 & 0.7160 & 0.4099 & 0.5438 \\ 
        \midrule
        ~ & 1 & 0.6688 & 0.4588 & 0.7470 & 0.4273 & 0.5919 \\ 
        ~ & 2 & ~ & ~ & Diverged & ~ & ~ \\ 
        3.2B & 3 & ~ & ~ & Diverged & ~ & ~ \\ 
        ~ & 4 & 0.6359 & 0.4259 & 0.7301 & 0.4263 & 0.5564 \\ 
        ~ & 5 & ~ & ~ & Diverged & ~ & ~ \\ 
        \bottomrule
    \end{tabular}
    \vskip -0.1in
\end{table}

\begin{table}[!ht]
\small
\vskip -0.1in
    \centering
    \caption{Detailed results on pre-training the Post-LN architecture. These results are averaged to produce the values reported in Table~\ref{tab:pre-train}. \textit{SEED} denotes pre-training seed.}
    \begin{tabular}{lcccccc}
    \toprule
        Post-LN & SEED & ARC-Easy & HellaSwag & PIQA & SIQA & Winogrande \\ 
        \toprule
        ~ & 1 & 0.3413 & 0.2881 & 0.6311 & 0.3378 & 0.5067 \\ 
        ~ & 2 & 0.3691 & 0.2886 & 0.6132 & 0.3337 & 0.5099 \\ 
        400M & 3 & 0.3632 & 0.2889 & 0.6257 & 0.3603 & 0.5051 \\ 
        ~ & 4 & 0.3603 & 0.2920 & 0.6262 & 0.3490 & 0.5012 \\ 
        ~ & 5 & 0.3510 & 0.2880 & 0.6170 & 0.3434 & 0.5209 \\ 
        \midrule
        ~& 1 & 0.4268 & 0.3121 & 0.6659 & 0.3628 & 0.5185 \\ 
        ~ & 2 & 0.4196 & 0.3150 & 0.6654 & 0.3639 & 0.5004 \\ 
        1.5B & 3 & ~ & ~ & Diverged & ~ & ~ \\ 
        ~ & 4 & 0.4285 & 0.3212 & 0.6730 & 0.3511 & 0.4775 \\ 
        ~ & 5 & 0.4419 & 0.3193 & 0.6643 & 0.3557 & 0.5154 \\ 
        \midrule
        ~ & 1 & 0.4731 & 0.3427 & 0.6774 & 0.3664 & 0.5343 \\ 
        ~ & 2 & 0.4638 & 0.3326 & 0.6779 & 0.3577 & 0.4917 \\ 
        3.2B & 3 & 0.3956 & 0.3321 & 0.6143 & 0.3408 & 0.5067 \\ 
        ~ & 4 & 0.4663 & 0.3380 & 0.6692 & 0.3685 & 0.5178 \\ 
        ~ & 5 & 0.4663 & 0.3340 & 0.6839 & 0.3577 & 0.5043 \\ 
        \bottomrule
    \end{tabular}
    \vskip -0.1in
\end{table}

\subsubsection{Supervised Fine-Tuning}

\begin{table}[ht!]
\vskip -0.1in
\small
    \centering
        \caption{Detailed results on SFT with Peri-LN architecture. These results are averaged to produce the values reported in Table~\ref{tab:pre-train}. \textit{SEED} denotes pre-training seed.}
    \begin{tabular}{lcccccc}
    \toprule
        Peri-LN & SEED & ARC-Easy & HellaSwag & PIQA & SIQA & Winogrande \\ 
        \toprule
        ~ & 1 & 0.5800 & 0.3819 & 0.6991 & 0.4145 & 0.5328 \\ 
        ~ & 2 & 0.5783 & 0.3783 & 0.6921 & 0.4038 & 0.5391 \\ 
        400M & 3 & 0.5888 & 0.3806 & 0.6980 & 0.4222 & 0.5288 \\ 
        ~ & 4 & 0.5892	&0.3738&	0.6948&	0.4089	&0.5099 \\ 
        ~ & 5 & 0.5783 & 0.3757 & 0.6991 & 0.4099 & 0.5312 \\ 
    \midrule
        ~ & 1 & 0.6633 & 0.4502 & 0.7356 & 0.4304 & 0.5746 \\ 
        ~ & 2 & 0.6641 & 0.4437 & 0.7405 & 0.4207 & 0.5706 \\ 
        1.5B & 3 & 0.6671 & 0.4454 & 0.7454 & 0.4207 & 0.5620 \\ 
        ~ & 4 & 0.6700	&0.4455	&0.7378& 0.4284&	0.5659\\ 
        ~ & 5 & 0.6688 & 0.4478 & 0.7421 & 0.4324 & 0.5620 \\
        \midrule
        ~ & 1 & 0.7058 & 0.4810 & 0.7486 & 0.4422 & 0.5880 \\ 
        ~ & 2 & 0.6898 & 0.4774 & 0.7437 & 0.4391 & 0.6054 \\ 
        3.2B & 3 & 0.6995 & 0.4770 & 0.7481 & 0.4278 & 0.5912 \\ 
        ~ & 4 & 0.6911 & 0.4777 & 0.7432 & 0.4350 & 0.5943 \\ 
        ~ & 5 & 0.6894 & 0.4781 & 0.7448 & 0.4319 & 0.6046 \\ 
        \bottomrule
    \end{tabular}
    \vskip -0.1in
\end{table}

\newpage

\begin{table}[!ht]
\vskip -0.1in
\small
    \centering
            \caption{Detailed results on SFT with Pre-LN architecture. These results are averaged to produce the values reported in Table~\ref{tab:pre-train}. \textit{SEED} denotes pre-training seed.}
    \begin{tabular}{lcccccc}
            \toprule
        Pre-LN & SEED & ARC-Easy & HellaSwag & PIQA & SIQA & Winogrande \\ 
        \toprule
        ~ & 1 & 0.5762 & 0.3625 & 0.7078 & 0.4058 & 0.5343 \\ 
        ~ & 2 & ~ & ~ & N/A & ~ & ~ \\ 
        400M & 3 & 0.5370 & 0.3339 & 0.6757 & 0.3905 & 0.4972 \\ 
        ~ & 4 & ~ & ~ & N/A & ~ & ~ \\ 
        ~ & 5 & 0.5509 & 0.3372 & 0.6893 & 0.4074 & 0.4886 \\ 
        \midrule
        ~ & 1 & 0.6385 & 0.4310 & 0.7247 & 0.4227 & 0.5620 \\ 
        ~ & 2 & 0.6035 & 0.3934 & 0.7095 & 0.4038 & 0.5572 \\ 
        1.5B & 3 & 0.6098 & 0.3944 & 0.7035 & 0.4150 & 0.5257 \\ 
        ~ & 4 & 0.6208 & 0.3929 & 0.7182 & 0.4161 & 0.5272 \\ 
        ~ & 5 & 0.6258 & 0.4017 & 0.7171 & 0.4181 & 0.5391 \\
        \midrule
        ~ & 1 & 0.6785 & 0.4681 & 0.7568 & 0.4345 & 0.5825 \\ 
        ~ & 2 & ~ & ~ & N/A & ~ & ~ \\ 
        3.2B & 3 & ~ & ~ & N/A & ~ & ~ \\ 
        ~ & 4 & 0.6427 & 0.4293 & 0.7274 & 0.4299 & 0.5580 \\ 
        ~ & 5 & ~ & ~ & N/A & ~ & ~ \\ 
        \bottomrule
    \end{tabular}
    \vskip -0.1in
\end{table}

\begin{table}[!ht]
\vskip -0.1in
\small
    \centering
    \caption{Detailed results on SFT with Post-LN architecture. These results are averaged to produce the values reported in Table~\ref{tab:pre-train}. \textit{SEED} denotes pre-training seed.}
    \begin{tabular}{lcccccc}
        \toprule
        Post-LN & SEED & ARC-Easy & HellaSwag & PIQA & SIQA & Winogrande \\ 
        \toprule
        ~ & 1 & 0.4428 & 0.3307 & 0.6583 & 0.3797 & 0.5099 \\
        ~ & 2 & 0.4280 & 0.3208 & 0.6404 & 0.3746 & 0.5178 \\ 
        400M & 3 & 0.4693 & 0.3241 & 0.6578 & 0.3905 & 0.5122 \\ 
        ~ & 4 & 0.4680 & 0.3247 & 0.6610 & 0.3726 & 0.4830 \\ 
        ~ & 5 & 0.4520 & 0.3283 & 0.6572 & 0.3849 & 0.5225 \\ 
        \midrule
        ~ & 1 & 0.5316 & 0.3774 & 0.6980 & 0.3889 & 0.5359 \\
        ~ & 2 & 0.4731 & 0.3316 & 0.6719 & 0.3813 & 0.5028 \\ 
        1.5B & 3 & ~ & ~ & N/A & ~ & ~ \\ 
        ~ & 4 & 0.5387 & 0.3546 & 0.6779 & 0.3864 & 0.4909 \\ 
        ~ & 5 & 0.5261 & 0.3510 & 0.6752 & 0.3767 & 0.5209 \\ 
        \midrule
        ~ & 1 & 0.5623 & 0.4029 & 0.7008 & 0.3920 & 0.5051 \\
        ~ & 2 & 0.5417 & 0.3644 & 0.6823 & 0.3833 & 0.5264 \\ 
        3.2B & 3 & 0.4444 & 0.3604 & 0.6333 & 0.3618 & 0.5043 \\ 
        ~ & 4 & 0.5400 & 0.3645 & 0.6844 & 0.3823 & 0.5020 \\ 
        ~ & 5 & 0.5341 & 0.3677 & 0.6942 & 0.3976 & 0.5012 \\ 
        \bottomrule
    \end{tabular}
    \vskip -0.1in
\end{table}

\clearpage
\newpage

\section{Additional Discussions of Precision Constraints Imposed by Pre-LN}\label{appendix:precision}
This section provide additional discussions of Section~\ref{subsec:precision}. Given that both Pre-LN and Peri-LN exhibit a structural property whereby large values do not readily disappear once they arise, it is important to monitor the occurrence of these extreme activations. Pre-LN’s additive residual path often produces activations that approach, and occasionally exceed, the FP16 (float16) representable maximum\footnote{\url{https://en.wikipedia.org/wiki/Bfloat16_floating-point_format}}. To quantify how often these values would overflow FP16 yet remain within the BF16 (bfloat16) range\footnote{\url{https://en.wikipedia.org/wiki/Half-precision_floating-point_format}}, we measure the 100 largest absolute hidden-state values (the global top-$100$ tokens) for each Pre-LN and Peri-LN $3.2$B-parameter Transformer. The shaded region indicates the range of these global top-$100$ tokens. The blue curve (with shaded band) represents a Pre-LN model, the red curve a Peri-LN model, and the dashed orange line denotes the FP16 representable maximum.

As shown in Figure \ref{fig:precision}, consistent with the observations of \citet{massiveactivation}, activations in the Pre-LN model routinely exceed the FP16 bound from the very first $0.5$B training tokens, with the overshoot becoming more pronounced in deeper layers—an indication of growing numerical instability. By contrast, Peri-LN consistently maintains a comfortable margin below the FP16 limit throughout training. This finding underscores that the choice between FP16 and BF16 is not merely a hardware preference; it is tightly coupled to how hidden-state magnitudes evolve within the network.

Since NVIDIA V100 GPUs do not support BF16, these results imply that training and inference with Pre-LN models on such hardware are inherently disadvantaged. Moreover, from the standpoint of large-language-model quantization \citep{llm.int8, flexround, peqa, adadim, lrq}, the prevalence of massive activations in Pre-LN can severely disrupt outlier-aware compression algorithms. When aggressive low-precision compression is the goal, the Pre-LN architecture’s tendency to generate extreme hidden state values therefore constitutes a particularly challenging obstacle.

Meta AI’s publicly released OPT training logbooks and chronicles document repeated episodes of gradient divergence and loss spikes encountered while pre-training entirely in FP16 precision \footnote{\url{https://github.com/facebookresearch/metaseq/blob/main/projects/OPT/chronicles/10_percent_update.md}}. Since FP16 saturates at an absolute value of $65,504$, any hidden state activation that exceeds this threshold silently overflows, corrupting the forward pass and, through back-propagation, inducing erratic gradients. Earlier analyses of OPT \citep{opt} therefore ascribe much of the observed instability to numerical overflow, a view formalized in our Proposition \ref{prop:theory}, which shows how such out-of-range activations propagate into severe gradient pathologies. These external reports provide further corroboration that architectures prone to generating large-magnitude activations—as Pre-LN does—require either a wider numerical format (e.g., BF16) or explicit magnitude-regularization to ensure stable large-scale training.

\newpage
\section{Additional Discussions of Hidden State Representations}\label{appendix:hidden_representations}
As shown in Figure~\ref{fig:pattern_hidden_state_init_full}, Post-LN exhibits smaller angular distances due to the LN being located on the main path, whereas Pre-LN and Peri-LN begin with very similar states. As shown in Figure~\ref{fig:pattern_hidden_state_fin_full}, at the end of training, Pre-LN tends to produce more redundant hidden state representations compared to the others. This phenomenon may stem from Pre-LN’s repeated residual additions, which amplify certain representations over others. We use $30$B tokens trained $400$M size model in this experiments. For dataset, we utilize $256$ random samples of RedPajama-Data-$1$T \citep{together2023redpajama} for this results. 

To investigate further, we focus on module-output normalization, the primary factor distinguishing Pre-LN from Peri-LN. As shown in Figure~\ref{fig:scale_gamma_full}, the learnable scale starts around $1$ in the early stages of training and gradually changes with increasing depth. Because Peri-LN preserves the identity path, it appears to adjust the scale of the module output accordingly. This suggests that the exponential growth of the main path’s magnitude in Pre-LN diminishes the relative contribution of individual modules, resulting in more redundant hidden representations. Figure~\ref{fig:frozen_gamma_pattern_full} shows that fixing the learnable scale of Peri-LN’s module output LN at $1$ causes the main path contribution to decrease in deeper layers. This finding confirms the role of module output normalization in controlling hidden state redundancy.

\begin{figure*}[!ht]
    \centering
    \subfigure[At initialization]
    {
    \includegraphics[width=0.2\linewidth]{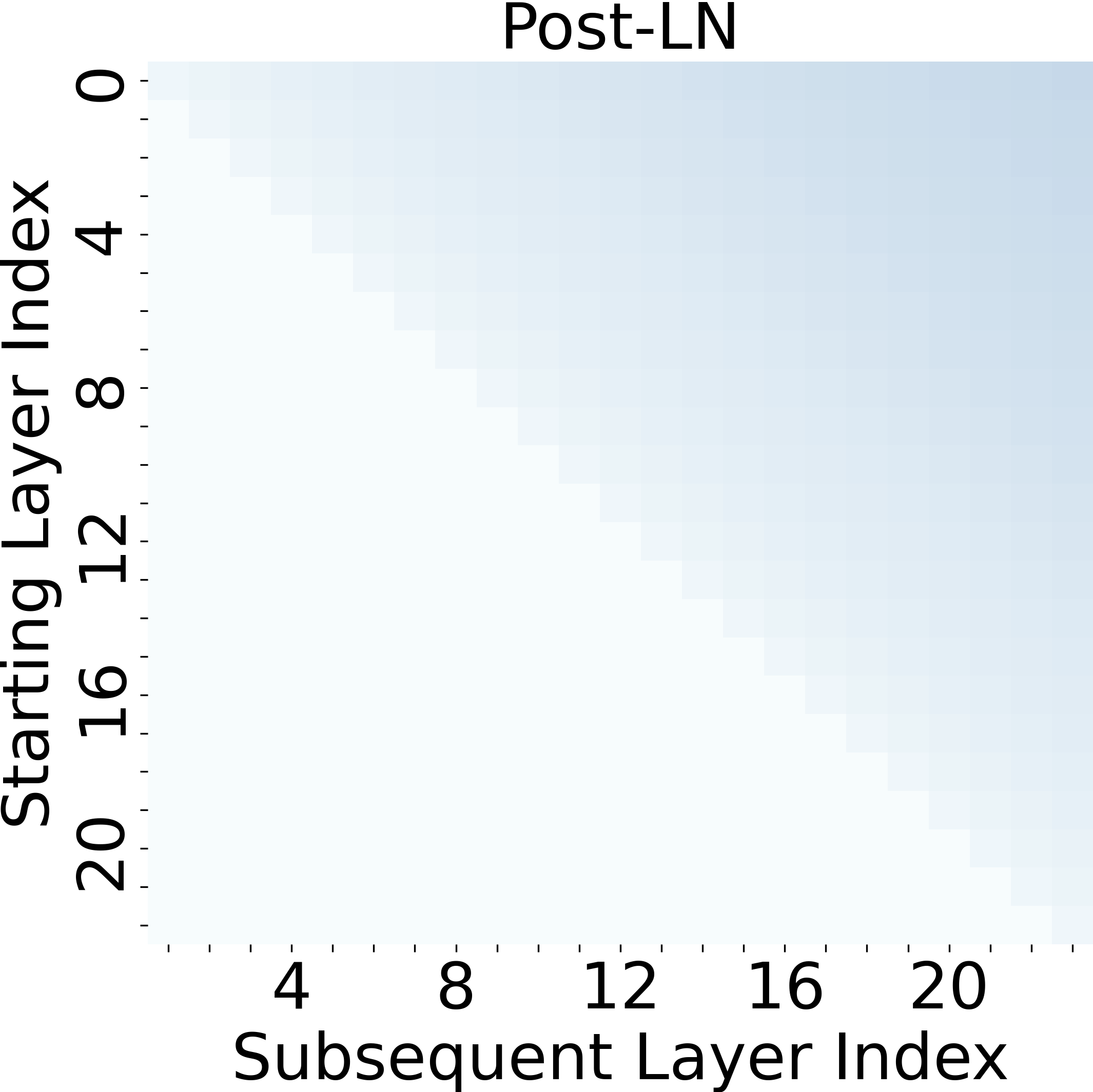} 
    \includegraphics[width=0.2\linewidth]{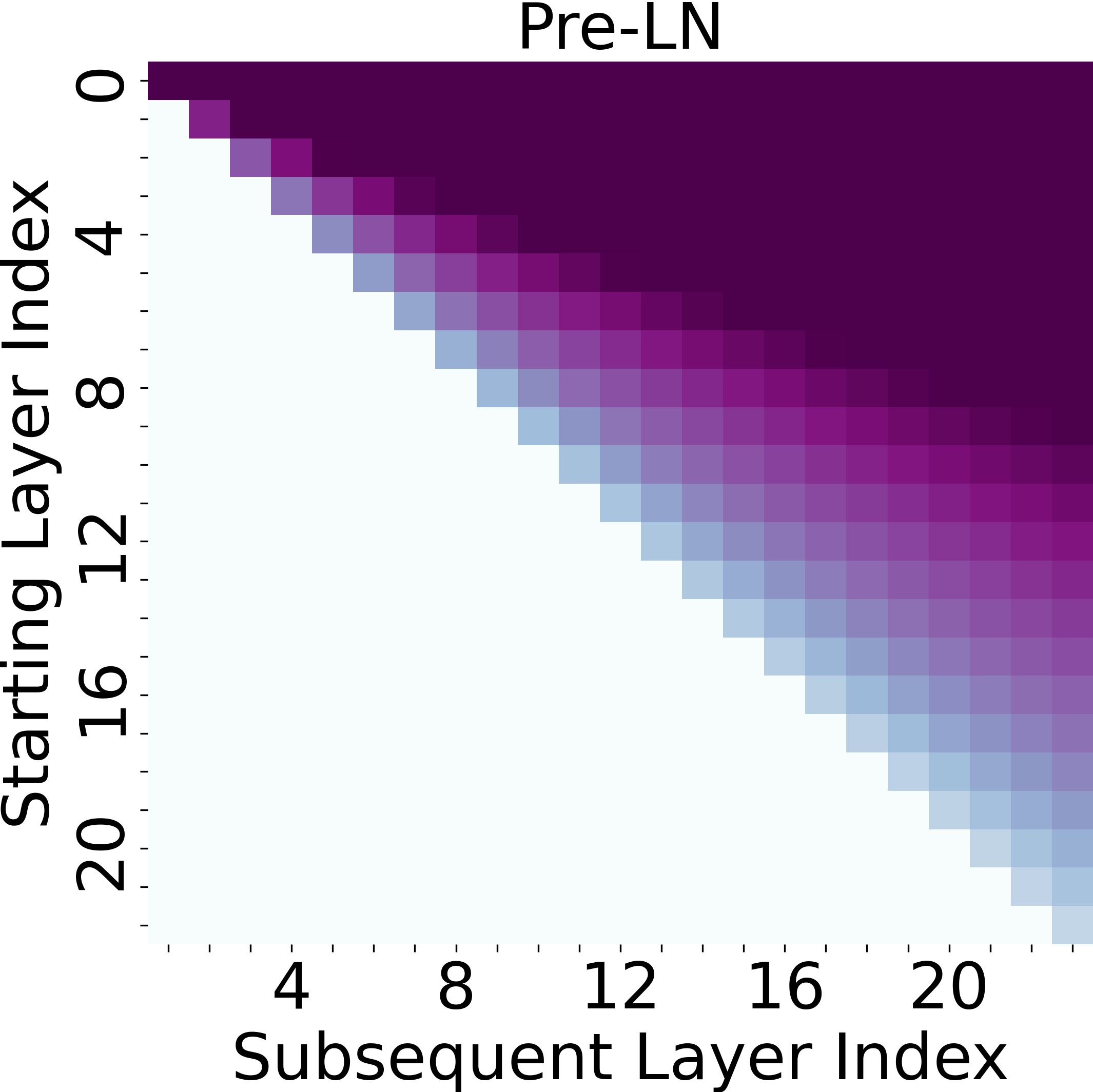} 
    \includegraphics[width=0.258\linewidth]{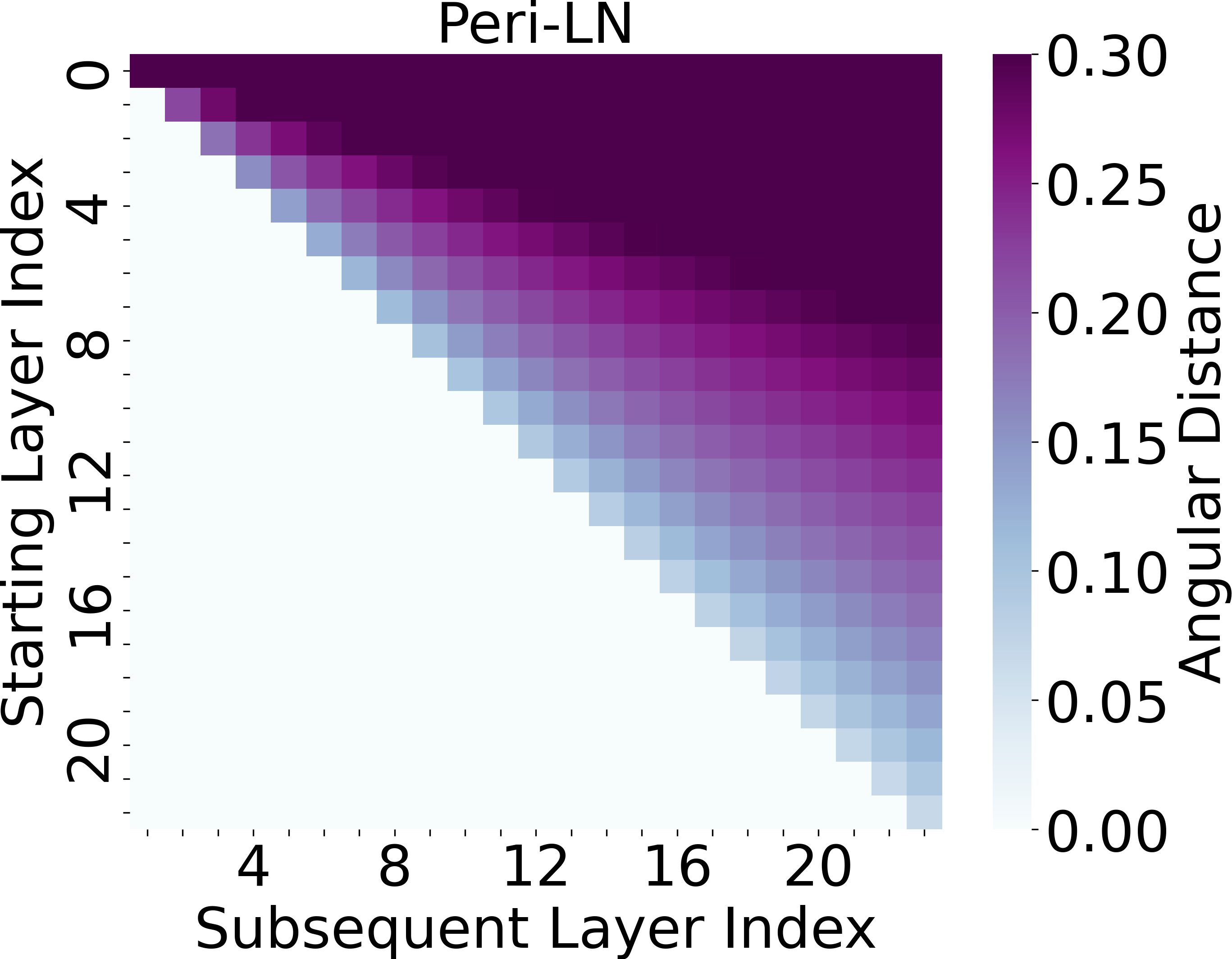} 
    \label{fig:pattern_hidden_state_init_full}
    } 
    \subfigure[Learnable scale $\gamma$ in Output-LN]
    {
    \includegraphics[width=0.27\linewidth]{Figures/mlp_output_ln_gamma_across_layer_dev--hcx-text-400M-inputNorm-dclm-000-30B-warmup10-lr2e3.png} 
    \label{fig:scale_gamma_full}
    }    
    \subfigure[After $30$B tokens training]
    {
    \includegraphics[width=0.2\linewidth]{Figures/angular_distance_between_layers_heatmap_post_400M_iter14000_seqlen256_sample256_seed1.png}
    \includegraphics[width=0.2\linewidth]{Figures/angular_distance_between_layers_heatmap_pre_400M_iter14000_seqlen256_sample256_seed1.png}
    \includegraphics[width=0.258\linewidth]{Figures/angular_distance_between_layers_heatmap_peri_400M_iter14000_seqlen256_sample256_seed1.png} \label{fig:pattern_hidden_state_fin_full}
    }
    \subfigure[Frozen $\gamma$ in Output-LN]
    {
    \includegraphics[width=0.258\linewidth]{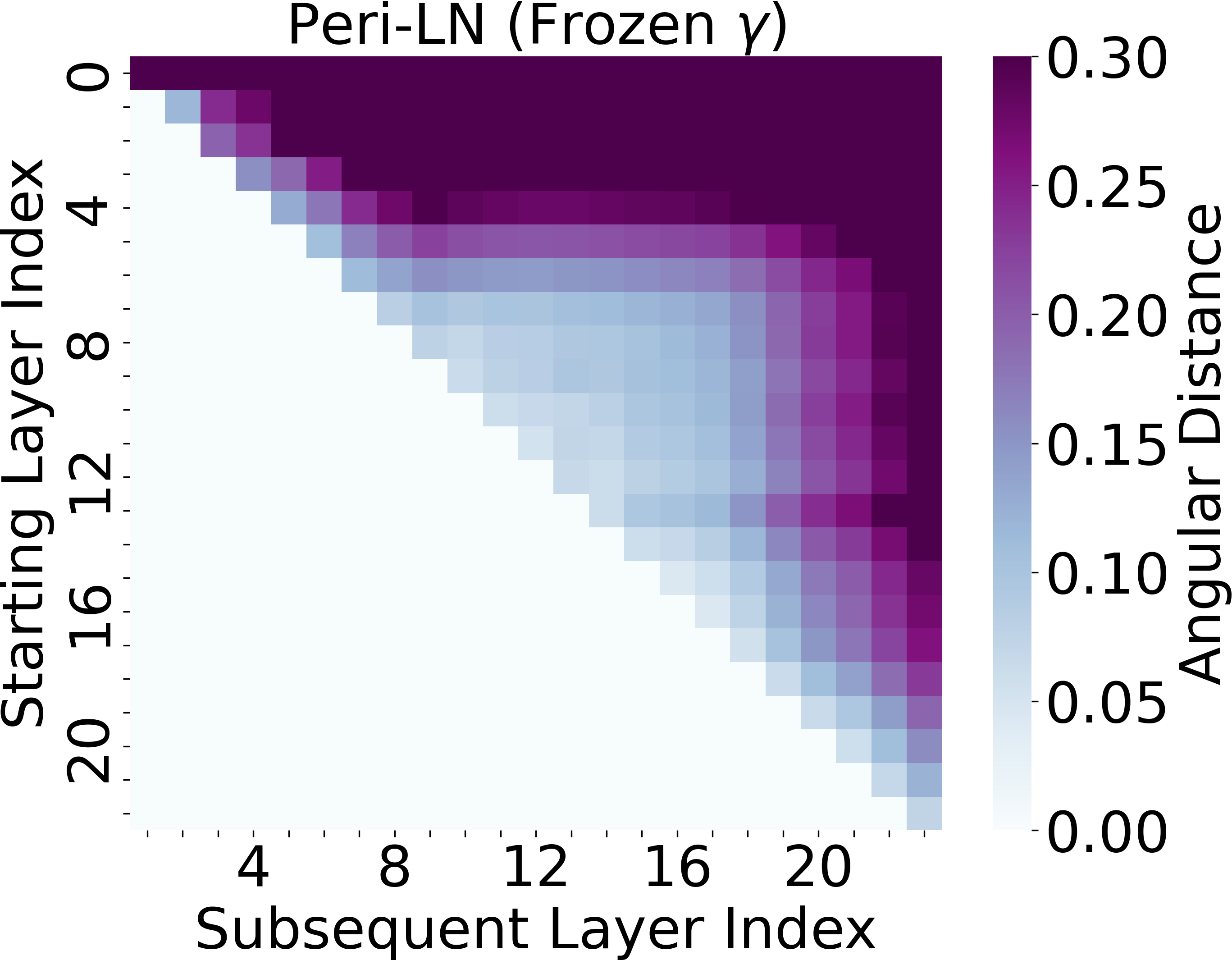} \label{fig:frozen_gamma_pattern_full}
    }   
    \vskip -0.15in
    \caption{Angular distance of hidden state is presented in Figure ~\ref{fig:pattern_hidden_state_init_full},~\ref{fig:pattern_hidden_state_fin_full}, and~\ref{fig:frozen_gamma_pattern_full}. In Figure~\ref{fig:scale_gamma_full}, we monitor $\gamma$ of every Output-LN in Peri-LN during training. We use $30$B tokens trained $400$M size model in this experiments.}
    \label{fig:pattern_hidden_state_full}
\end{figure*}


\end{document}